\newcommand{\cmark}{\ding{51}}%
\newcommand{\xmark}{\ding{55}}
\theoremstyle{plain}
\newtheorem{theorem}{Theorem}[section]
\theoremstyle{definition}
\theoremstyle{remark}
\newenvironment{proof-sketch}{\noindent{\textit{Sketch of proof.}}}
\newcommand{\x}{\vx}
\newcommand{\epsilonv}{\bm{\epsilon}}
\def\eqref#1{Eq.~(\ref{#1})}
\def\1{\bm{1}}
\DeclarePairedDelimiterX{\infdivx}[2]{(}{)}{%
	#1\;\delimsize\|\;#2%
}
\newcommand{\kl}{D_{\mathrm{KL}}\infdivx}
\newcommand{\vect}[1]{\bm{#1}}
\newcommand{\dm}{\mathrm{d}}
\def\va{{\bm{a}}}
\def\vs{{\bm{s}}}
\def\vx{{\bm{x}}}
\def\mI{{\bm{I}}}
\DeclareMathAlphabet{\mathsfit}{\encodingdefault}{\sfdefault}{m}{sl}
\SetMathAlphabet{\mathsfit}{bold}{\encodingdefault}{\sfdefault}{bx}{n}
\def\gE{{\mathcal{E}}}
\def\gN{{\mathcal{N}}}
\def\gU{{\mathcal{U}}}
\def\gX{{\mathcal{X}}}
\newcommand{\Nc}{\mathcal N}
\newcommand{\Xc}{\mathcal X}
\newcommand{\Iv}{\vect I}
\newcommand{\E}{\mathbb{E}}
\newcommand{\R}{\mathbb{R}}
\newcommand{\KL}{D_{\mathrm{KL}}}
\newcommand\Tstrut{\rule{0pt}{2.6ex}}         %
\newcommand\Bstrut{\rule[-0.9ex]{0pt}{0pt}}   %
\icmltitlerunning{Contrastive Energy Prediction for Exact Energy-Guided Diffusion Sampling in Offline Reinforcement Learning}
\begin{document}

\twocolumn[
\icmltitle{Contrastive Energy Prediction for Exact Energy-Guided \\ 
Diffusion Sampling in Offline Reinforcement Learning}

\icmlsetsymbol{equal}{*}

\begin{icmlauthorlist}

\icmlauthor{Cheng Lu}{equal,thu}
\icmlauthor{Huayu Chen}{equal,thu}
\icmlauthor{Jianfei Chen}{thu}
\icmlauthor{Hang Su}{thu}
\icmlauthor{Chongxuan Li}{ruc}
\icmlauthor{Jun Zhu}{thu,pazhou}

\end{icmlauthorlist}

\icmlaffiliation{thu}{Dept. of Comp. Sci. \& Tech., Institute for AI, BNRist Center, Tsinghua-Bosch Joint ML Center, THBI Lab, Tsinghua University}
\icmlaffiliation{ruc}{Gaoling School of AI, Renmin University of China; Beijing Key Lab of Big Data Management and Analysis Methods, Beijing, China}
\icmlaffiliation{pazhou}{Pazhou Lab (Huangpu), Guangzhou, China}

\icmlcorrespondingauthor{Jun Zhu}{dcszj@tsinghua.edu.cn}

\icmlkeywords{Diffusion Models, Guided Sampling, Offline Reinforcement Learning}

\vskip 0.3in
]

\printAffiliationsAndNotice{\icmlEqualContribution} %

\begin{abstract}
Guided sampling is a vital approach for applying diffusion models in real-world tasks that embeds human-defined guidance during the sampling procedure. This paper considers a general setting where the guidance is defined by an (unnormalized) energy function. The main challenge for this setting is that the intermediate guidance during the diffusion sampling procedure, which is jointly defined by the sampling distribution and the energy function, is unknown and is hard to estimate. To address this challenge, we propose an exact formulation of the intermediate guidance as well as a novel training objective named contrastive energy prediction (CEP) to learn the exact guidance. Our method is guaranteed to converge to the exact guidance under unlimited model capacity and data samples, while previous methods can not. We demonstrate the effectiveness of our method by applying it to offline reinforcement learning (RL). Extensive experiments on D4RL benchmarks demonstrate that our method outperforms existing state-of-the-art algorithms. We also provide some examples of applying CEP for image synthesis to demonstrate the scalability of CEP on high-dimensional data. Code is available at \url{https://github.com/thu-ml/CEP-energy-guided-diffusion}.

\end{abstract}

\section{Introduction}

Diffusion models~\citep{sohl2015deep,ho2020denoising,song2020score,karras2022elucidating} have demonstrated incredible success.
A key for applying diffusion models in real-world tasks is to embed human controllability in the sampling procedure. A common paradigm for introducing human preference in diffusion models is \textit{guided sampling}, which includes \textit{classifier guidance}~\citep{diffusion_beat_gan}, \textit{classifier-free guidance}~\citep{ho2021classifier} and other guidance methods~\citep{nichol2021glide,ho2022video,zhao2022egsde}. By leveraging guided sampling, diffusion models can realize amazing text-to-image generation~\citep{saharia2022photorealistic}, video generation~\citep{ho2022video,ho2022imagen,yang2022diffusion,zhou2022magicvideo}, controllable text generation~\citep{li2022diffusion}, inverse molecular design~\citep{bao2022equivariant} and 
reinforcement learning~\citep{diffuser, sfbc, dd}.

Both classifier and classifier-free guidance deal with a conditional sampling problem where there exists paired data with additional conditioning variables during the training procedure~\citep{diffusion_beat_gan,graikos2022diffusion,rombach2022high}.
However, sometimes it is difficult to describe human preference through a conditioning variable and we can only embed our preference through a scalar function.
Examples of such a function include a reward function or pretrained Q-function in reinforcement learning~\citep{diffuser, sfbc}, human feedback in dialogue systems~\citep{ziegler2019fine}, cosine similarity between sample features and designated features in image synthesis~\citep{kwon2022diffusion}, or $L_2$-distance between the sampled frame and the previous frame in video synthesis~\citep{ho2022video}. In such cases, we aim to leverage human preference to manipulate the desired distribution and draw samples by diffusion sampling with additional guidance, while it is hard to directly use classifier or classifier-free guidance since no actual condition is provided.

We consider a general form that subsumes all the above cases. Let $q(\x)$ be an unknown data distribution in $\gX\subseteq \R^d$. We aim to sample from the following distribution:
\begin{equation}
\label{Eq:target_distribution_intro}
    p(\x) \propto q(\x)e^{-\beta\gE(\x)},
\end{equation}
where $\gE(\cdot)$ is an \textit{energy function} from $\Xc\in \R^d$ to $\R$ and we can compute $\gE(\x)$ for each datum. $\beta \geq 0$ is the inverse temperature for controlling the energy strength. 
The high-density region of $p(\x)$ is approximately the intersection of both the high-density regions of $q(\x)$ and $e^{-\beta\gE(\x)}$. As a result, we can insert controllability by embedding the desired properties into the energy function $\gE(\cdot)$. 
The choice of the energy function $\gE(\cdot)$ is highly flexible: we only need to ensure that the integral of $q(\x)e^{-\beta\gE(\x)}$ over $\x\in\gX$ is finite.
We can also introduce additional conditioning variables $c$ by an energy function $\gE(\cdot, c)$.
In particular, let $\beta=1$ and $\gE(\x,c) = -\log q(c|\x)$, the target distribution $p(\x)$ becomes a conditional distribution $q(\x|c)$, which recovers the classic conditional sampling problem as a special case.

Sampling from $p(\x)$ is difficult in general as $p(\x)$ is unnormalized.
Existing attempts~\citep{diffuser, zhao2022egsde,ho2022video,chung2022diffusion,kawar2022denoising} leverage a pretrained diffusion model $q_g(\x)\approx q(\x)$ and apply diffusion sampling with an additional guidance term related to $\gE(\cdot)$ called \textit{energy guidance}.
However, all previously proposed energy guidance is either manually or arbitrarily defined across the diffusion process, and it is unstudied whether the final samples follow the desired distribution $p(\x)$. In fact, we show that previous energy-guided samplers are all inexact in the sense that they cannot guarantee convergence to $p(\x)$.
To the best of our knowledge, how to use the pretrained diffusion model $q_g(\x)$ to draw samples from the exact $p(\x)$ remains largely open.%

In this work, we analyze and derive an exact formulation of the desired guidance for diffusion sampling from $p(\x)$ in \eqref{Eq:target_distribution_intro}. In contrast with previous work, we show that the exact guidance in the diffused data space during sampling is completely determined by the energy function $\gE(\cdot)$ in the original data space. The exact energy guidance is in an intractable form which cannot be computed directly, so we propose a novel training method named \textit{contrastive energy prediction (CEP)} to estimate the guidance using samples from $q(\x)$ as the training data. 
CEP trains the guidance model by comparing the energy $\gE(\cdot)$ within a set of noise-perturbed data samples and using their soft energy labels as supervising signals. 
We theoretically prove that the gradient of the optimal learned model is exactly the desired energy guidance, and thus the final samples are guaranteed to follow $p(\x)$. In a special formulation of $\gE(\cdot)$ which corresponds to the classic conditional sampling case, we additionally show that CEP could be understood as an alternative contrastive approach to the classifier guidance method.

To verify the effectiveness and scalability of CEP, we take two important applications of \eqref{Eq:target_distribution_intro}: offline reinforcement learning (RL) and image synthesis.
For offline RL, we formulate the classic constrained policy optimization problem~\citep{rwr, awr} as Q-guided policy optimization, and evaluate our method in mainstream D4RL \citep{d4rl} benchmarks. Extensive experiments demonstrate that our method outperforms existing state-of-the-art algorithms in most tasks, especially in hard tasks such as AntMaze. For image synthesis, we evaluate conditional sample quality by CEP against classic classifier guidance \citep{diffusion_beat_gan} both quantitatively and qualitatively on ImageNet and find two methods almost equally well-performing. We also provide an example of energy-guided image synthesis to affect the color appearance of sampled images and validate the flexibility of CEP.

\section{Background}
We first present preliminary knowledge of diffusion models as well as offline RL that serves as an important motivation and application of sampling from distribution (\ref{Eq:target_distribution_intro}). 
\subsection{Diffusion (Probabilistic) Models}
Diffusion (probabilistic) models~\citep{sohl2015deep,ho2020denoising,song2020score} are powerful generative models. Given a dataset $\{\x_0^{(i)}\}_{i=1}^N$ with $N$ samples of $D$-dimensional random variable $\x_0$ from an unknown data distribution $q_0(\x_0)$, diffusion models gradually add Gaussian noise from $\x_0$ at time $0$ to $\x_T$ at time $T > 0$. The transition distribution $q_{t0}(\x_t|\x_0)$ satisfies
\begin{equation}
\label{Eq:forward_diffusion}
    q_{t0}(\x_t|\x_0) = \gN(\x_t | \alpha_t\x_0, \sigma_t^2\mI),
\end{equation}
where $\alpha_t,\sigma_t > 0$. Denote $q_t(\x_t)$ as the marginal distribution of $\x_t$ at time $t$. The transition distribution at time $T$ satisfies $q_{T}(\x_T|\x_0)\approx q_T(\x_T) \approx \gN(\x_T | 0, \tilde\sigma^2\mI)$ for some $\tilde\sigma > 0$ and is independent of $\x_0$. Thus, starting from $\x_T \sim \gN(\x_T | 0, \tilde\sigma^2\mI)$, diffusion models aim to recover the original data $\x_0$ by solving a reverse process from $T$ to $0$. The reverse process can alternatively be the \textit{diffusion ODE}~\citep{song2020score}:
\begin{equation}
\label{Eq:diffusion_ode}
    \frac{\dm \x_t}{\dm t}=f(t)\x_t-\frac{1}{2}g^2(t)\nabla_{\x_t}\log q_t(\x_t),
\end{equation}
where $f(t)=\frac{\dm\log \alpha_t}{\dm t},g^2(t)=\frac{\dm \sigma_t^2}{\dm t}-2\frac{\dm\log \alpha_t}{\dm t}\sigma_t^2$~\citep{kingma2021variational} and the only unknown term is the \textit{score function} $\nabla_{\x_t}\log q_t(\cdot)$ of the distribution $q_t$ at each time $t$. Thus, diffusion models train a neural network $\epsilonv_\theta(\x_t,t)$ parameterized by $\theta$ to estimate the scaled score function: $-\sigma_t\nabla_{\x_t}\log q_t(\x_t)$, and the training objective is~\citep{ho2020denoising,song2020score}
\begin{equation}
\begin{aligned}
\label{Eq:diffusion_loss}
    \phantom{{}={}}&\min_\theta \E_{t,\x_t}\left[
        \omega(t)\|\epsilonv_\theta(\x_t,t) + \sigma_t\nabla_{\x_t}\log q_t(\x_t) \|_2^2
    \right]\\
    \Leftrightarrow &\min_\theta \E_{t,\x_0,\epsilonv}\left[
        \omega(t)\|\epsilonv_\theta(\x_t,t) - \epsilonv\|_2^2
    \right]
\end{aligned}
\end{equation}
where $\x_0\sim q_0(\x_0)$, $\epsilonv\sim\gN(\bm{0},\mI)$, $t\sim\gU([0,T])$, $\x_t=\alpha_t \x_0 + \sigma_t \epsilonv$, and $\omega(t)$ is a weighting function and usually set to $\omega(t)\equiv 1$~\citep{ho2020denoising}.
Thus, after training a diffusion model, we usually have $\epsilonv_\theta(\x_t,t)\approx -\sigma_t\nabla_{\x_t}\log q_t(\x_t)$. By replacing the score function with $\epsilonv_\theta$, we can fastly sample $\x_0$ by solving the corresponding diffusion ODEs with some dedicated solvers~\citep{song2020denoising,lu2022dpm}.

\subsection{Constrained Policy Optimization in Offline Reinforcement Learning}

Consider a Markov Decision Process (MDP), described by the tuple $\langle\mathcal{S},\mathcal{A}, P,r,\gamma\rangle$. $\mathcal{S}$ is the state space and $\mathcal{A}$ is the action space. $P(\vs'|\vs,\va)$ and $r(\vs, \va)$ are respectively the transition and reward functions. $\gamma \in (0,1]$ is a constant discount factor. In offline RL, a static dataset $\mathcal{D}^\mu$ containing some interacting history $\{\vs, \va, r, \vs'\}$ between a behavior agent $\mu(\va|\vs)$ and the environment is given. The goal is to maximize the expected accumulated rewards of a model policy $\pi_\theta(\va|\vs)$ in the above MDP by solely utilizing the knowledge learned from the dataset.

Previous work~\citep{rwr, awr} formulate offline RL as constrained policy optimization:
\begin{equation}
\label{Eq:rl_main}
    \max_{\pi} \mathbb{E}_{\vs \sim \mathcal{D}^\mu}\big[ \mathbb{E}_{\va \sim \pi(\cdot|\vs)} Q_\psi(\vs, \va)
     - \frac{1}{\beta} \KL \left(\pi(\cdot |\vs) || \mu(\cdot |\vs) \right)\big],
\end{equation}
where $Q_\psi$ is an action evaluation model which indicates the quality of decision $(\vs, \va)$ 
by estimating the Q-function $Q^\pi(\vs, \va) := \mathbb{E}_{\vs_1=\vs, \va_1=\va; \pi} [\sum_{n=1}^\infty \gamma^n r(\vs_n, \va_n)]$ of the current policy $\pi$.  
$\beta$ is an inverse temperature coefficient. The first term in \eqref{Eq:rl_main} intends to perform policy optimization, while the second term stands for policy constraint. 

It is shown that the optimal policy $\pi^*$ in (\ref{Eq:rl_main}) satisfies:
\begin{equation}
\label{Eq:pi_optimal}
    \pi^*(\va|\vs) \propto \ \mu(\va|\vs) \ e^{\beta Q_\psi(\vs, \va)},
\end{equation}
which falls into the general family of distributions (\ref{Eq:target_distribution_intro}). Therefore, sampling from the optimal policy $\pi^*$ can be implemented by energy-guided sampling with a pretrained diffusion behavior model $\mu_g(\va|\vs)\approx \mu(\va|\vs)$, which motivates us to propose an exact energy-guided sampling method.

\section{Exact Energy-Guided Sampling}
To perform energy-guided sampling for \eqref{Eq:target_distribution_intro}, the guidance during the sampling procedure needs to guarantee that final samples follow the desired distribution $p(\x)$.
In this section, we propose an exact formulation of such energy guidance and propose a novel training objective to estimate the guidance. All the proofs can be found in Appendix~\ref{appendix:proof}.

\subsection{Exact Formulation of Intermediate Energy Guidance}
\label{Sec:problem}

\begin{figure}[t]
    \begin{minipage}{0.14\linewidth}
        \centering 
        \scriptsize{$q_t(\x_t)$}
    \end{minipage}
    \begin{minipage}{0.85\linewidth}
		\centering
		\includegraphics[width=\linewidth]{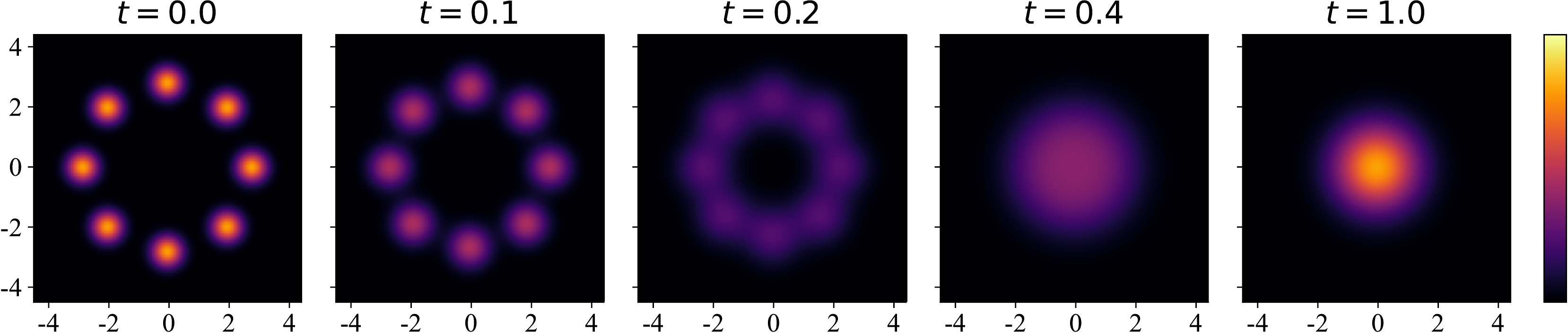}
	\end{minipage} \\
	
	\begin{minipage}{0.14\linewidth}
        \centering 
        \scriptsize{$e^{-\gE_t(\x_t)}$}
    \end{minipage}
    \begin{minipage}{0.85\linewidth}
		\centering
		\includegraphics[width=\linewidth]{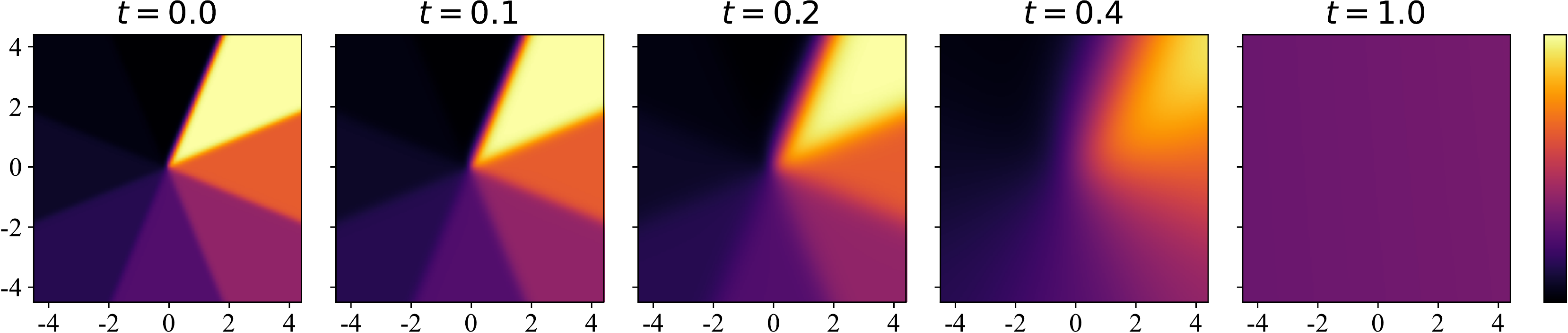}
	\end{minipage} \\
	
	\begin{minipage}{0.14\linewidth}
        \centering 
        \scriptsize{$p_t(\x_t)$}
    \end{minipage}
    \begin{minipage}{0.85\linewidth}
		\centering
		\includegraphics[width=\linewidth]{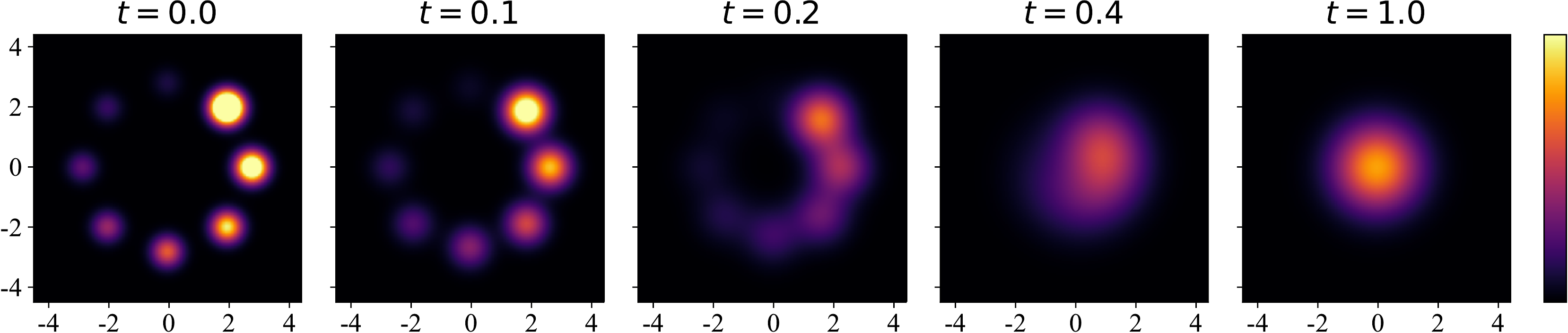}
	\end{minipage}
	\vspace{-0.05in}
	\caption{\label{fig:intermediate_energy}\small{A 2-D mixtures-of-Gaussians example of the density functions (unnormalized) for $q_t(\x_t)$, $e^{-\gE_t(\x_t)}$ and $p_t(\x_t)$ during the diffusion process, where $p_t(\x_t)\propto q_t(\x_t)e^{-\gE_t(\x_t)}$.}}
	\vspace{-0.15in}
\end{figure}

Below we formally analyze how to sample from \eqref{Eq:target_distribution_intro} by diffusion models.

Rewrite $p_0 \coloneqq p$ and $q_0\coloneqq q$. The target distribution is
\begin{equation}
\label{Eq:target_distribution}
    p_0(\x_0) \propto q_0(\x_0)e^{-\beta\gE(\x_0)}.
\end{equation}
Let $q_t(\x_t)$ be the marginal distribution of the forward diffusion process at time $t$ defined in \eqref{Eq:forward_diffusion} starting from $q_0(\x_0)$. 
Suppose we have a pretrained diffusion model $q^{\theta}_0(\x_0)\approx q_0(\x_0)$ by learning a noise prediction model $\epsilonv_\theta(\x_t,t)\approx -\sigma_t\nabla_{\x_t}\log q_t(\x_t)$ parameterized by $\theta$ at each time $t\in[0,T]$. By \eqref{Eq:diffusion_ode}, drawing samples from $p_0$ requires the corresponding score functions $\nabla_{\x_t}\log p_t(\x_t)$ at each intermediate time $t$ of the diffusion process starting from $p_0$ instead of $q_0$. 
Our first key result is on revealing the relationship between the corresponding score functions during the diffusion processes starting from $q_0$ and $p_0$, as summarized below:

\begin{theorem}[Intermediate Energy Guidance]
\label{thrm:intermediate_energy_guidance}
Suppose $q_0$ and $p_0$ are defined as in \eqref{Eq:target_distribution}. For $t\in(0, T]$, let
\begin{equation}
\label{Eq:forward_p}
    p_{t0}(\x_t|\x_0) \coloneqq q_{t0}(\x_t|\x_0) = \gN(\x_t|\alpha_t\x_0,\sigma_t^2\mI).
\end{equation}
Denote $q_t(\x_t):=\int q_{t0}(\x_t|\x_0) q_0(\x_0)\dm\x_0$ and $p_t(\x_t):=\int p_{t0}(\x_t|\x_0) p_0(\x_0)\dm\x_0$ as the marginal distributions at time $t$, and define
\begin{equation}
\label{Eq:intermediate_energy}
    \gE_t(\x_t)\! \coloneqq\! \left\{
    \begin{array}{ll}
        \beta\gE(\x_0), & t=0, \\
        -\log \E_{q_{0t}(\x_0|\x_t)}\left[
        e^{-\beta\gE(\x_0)}
        \right], & t > 0.
    \end{array} \right.
\end{equation}
Then $q_t$ and $p_t$ satisfy
\begin{equation}
    \label{Eq:pt_qt_Et}
    p_t(\x_t) \propto q_t(\x_t)e^{-\gE_t(\x_t)},
\end{equation}
and their score functions satisfy
\begin{equation}
\label{Eq:energy_guidance}
    \nabla_{\x_t}\log p_t(\x_t) = \underbrace{\nabla_{\x_t}\log q_t(\x_t)}_{\approx -\epsilonv_\theta(\x_t,t)/\sigma_t}
    - \underbrace{\nabla_{\x_t}\gE_t(\x_t)}_{\substack{\text{energy guidance}\\\text{\color{red}{\textbf{(intractable)}}}}}.
\end{equation}
\vspace{-0.2in}
\end{theorem}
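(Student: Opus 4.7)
The plan is to establish the identity in equation (\ref{Eq:pt_qt_Et}) directly from the definitions of $p_t$ and $q_t$, and then derive the score-function decomposition (\ref{Eq:energy_guidance}) by taking $\nabla_{\x_t}\log$ of that identity. The crucial manipulation is an application of Bayes' rule to swap the forward transition $q_{t0}(\x_t|\x_0)$ for the posterior $q_{0t}(\x_0|\x_t)$ times $q_t(\x_t)$; once this swap is made, the expectation defining $\gE_t$ appears automatically.

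First I would let $Z := \int q_0(\x_0) e^{-\beta \gE(\x_0)}\, \dm \x_0$ denote the (assumed finite) normalizing constant of $p_0$, so that $p_0(\x_0) = Z^{-1} q_0(\x_0) e^{-\beta \gE(\x_0)}$. Using the hypothesis $p_{t0}(\x_t|\x_0) = q_{t0}(\x_t|\x_0)$ from (\ref{Eq:forward_p}), I would expand
\begin{equation*}
p_t(\x_t) = \int p_{t0}(\x_t|\x_0) p_0(\x_0)\, \dm\x_0 = \frac{1}{Z}\int q_{t0}(\x_t|\x_0)\, q_0(\x_0)\, e^{-\beta \gE(\x_0)}\, \dm\x_0.
\end{equation*}
Applying Bayes' rule $q_{t0}(\x_t|\x_0) q_0(\x_0) = q_{0t}(\x_0|\x_t) q_t(\x_t)$ and pulling $q_t(\x_t)$ outside the integral (it does not depend on $\x_0$) yields
\begin{equation*}
p_t(\x_t) = \frac{q_t(\x_t)}{Z}\int q_{0t}(\x_0|\x_t)\, e^{-\beta \gE(\x_0)}\, \dm\x_0 = \frac{q_t(\x_t)}{Z}\, \E_{q_{0t}(\x_0|\x_t)}\!\left[e^{-\beta \gE(\x_0)}\right].
\end{equation*}
Substituting the $t>0$ case of the definition (\ref{Eq:intermediate_energy}) of $\gE_t$ gives $p_t(\x_t) = Z^{-1} q_t(\x_t) e^{-\gE_t(\x_t)}$, which proves (\ref{Eq:pt_qt_Et}).

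For the score identity (\ref{Eq:energy_guidance}), I would simply take logarithms and differentiate: since $Z$ is a constant independent of $\x_t$,
\begin{equation*}
\log p_t(\x_t) = \log q_t(\x_t) - \gE_t(\x_t) - \log Z,
\end{equation*}
and differentiating in $\x_t$ eliminates the constant and yields exactly (\ref{Eq:energy_guidance}). The $t=0$ boundary value of $\gE_t$ is consistent since the posterior $q_{0t}(\x_0|\x_t)$ degenerates to a point mass as $t\to 0$, collapsing the log-expectation to $\beta \gE(\x_0)$.

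The computation is genuinely short, so there is no deep obstacle; the only subtlety is a regularity/measure-theoretic one. To interchange differentiation with the expectation implicit in $\gE_t$ (which I do not actually need for the stated theorem, but which matters for the subsequent CEP training objective), one would need mild integrability assumptions on $e^{-\beta \gE}$ under $q_{0t}(\cdot|\x_t)$; I would note this in passing and assume it implicitly, together with the standing assumption $Z<\infty$ already mentioned in the introduction. The harder conceptual point, worth highlighting in the proof's commentary, is that the intermediate energy $\gE_t$ is not obtained by naively diffusing $\gE$ itself, but rather is the log of a posterior expectation of $e^{-\beta\gE}$ — this is precisely why previously proposed heuristic forms of energy guidance fail to be exact.
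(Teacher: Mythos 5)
Your proposal is correct and follows essentially the same route as the paper's own proof: introduce the normalizing constant $Z$, apply Bayes' rule to rewrite $q_{t0}(\x_t|\x_0)q_0(\x_0)$ as $q_t(\x_t)q_{0t}(\x_0|\x_t)$ so that the posterior expectation defining $\gE_t$ appears, and then take $\nabla_{\x_t}\log$ of the resulting identity. Your added remarks on integrability and the $t\to 0$ degeneration of the posterior are sensible but not part of the paper's argument.
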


\cref{thrm:infoNCE_energy} reveals a previously unnoticed exact form of the intermediate distributions $p_t$: though $p_t$ is defined as an intractable marginal distribution for all $t>0$, they could still be written \emph{in the same form} as  Eq.~(\ref{Eq:target_distribution}), proportional to the product of the (diffused) data distribution $q_t$ and an exponential energy term $e^{-\gE_t(\x_t)}$.
Since such energy is defined during the diffusion process, we name $\gE_t(\cdot)$ as \textit{intermediate energy}.
According to \eqref{Eq:intermediate_energy}, the intermediate energy is completely determined by the energy function $\gE(\x_0)$ at time $0$.
An illustration is given in \cref{fig:intermediate_energy}, from which we draw several observations: (1) $p_t(\x_t)$ prefers areas with both high data density $q_t(\x_t)$ and high energy density $e^{-\gE_t(\x_t)}$; (2) Through the forward diffusion process, both $p_t$ and $q_t$ gradually become standard Gaussian, which guarantees that we can reverse the diffusion process starting from the same noise distribution $p_T \approx q_T \approx \gN(0, \tilde\sigma^2\mI)$; (3) The energy function $\gE_0(\x_0)$ is also ``diffused'' into intermediate energy functions $\gE_t(\x_t)$ as $t$ increases. In particular, $\gE_T(\x_T)$ is almost equal to a constant function and thus $\nabla_{\x_T}\gE_T(\x_T) \approx 0$.

The result in \eqref{Eq:energy_guidance} directly defines a principled method to perform guided sampling from $p_0(\x_0)$. 
Namely, we can sample with \eqref{Eq:diffusion_ode} as along as we know both $\nabla_{\x_t}\log q_t(\x_t)$ and $\nabla_{\x_t}\gE_t(\x_t)$.
The former score is already given by the pretrained diffusion model $\epsilonv_\theta$. 
The remaining problem is to estimate the latter score $\nabla_{\x_t}\gE_t(\x_t)$, which we name as \emph{intermediate energy guidance}.
An unbiased estimation of $\nabla_{\x_t}\gE_t(\x_t)$ is generally non-trivial due to the log-expectation formulation and a potentially complex form of $\gE(\x_0)$ in \eqref{Eq:intermediate_energy}. To the best of our knowledge, it is still an open problem for estimating the exact intermediate energy guidance. 
We present a first attempt by developing a novel learning-based method to learn the energy guidance by comparing energy of samples from $q_t$, as detailed below.

\subsection{Learning Energy Guidance by Contrastive Energy Prediction}
\label{Sec:method}

Let $K>1$ be a positive integer. Let $\x_0^{(1)},\dots,\x_0^{(K)}$ be $K$ i.i.d. samples from $q_0(\x_0)$ and $\epsilonv^{(1)},\dots,\epsilonv^{(K)}$ be $K$ i.i.d. Gaussian samples following $p(\epsilonv)=\gN(\epsilonv|\bm{0}, \mI)$. Let $t \sim \mathcal{U}(0,T)$ be a randomly sampled time step. For each $i=1,\dots,K$, let $\x_t^{(i)}\coloneqq \alpha_t\x_0^{(i)} + \sigma_t\epsilonv^{(i)}$, where $\alpha_t,\sigma_t$ are defined in \eqref{Eq:forward_p}. Assume that the intermediate energy $\gE_t(\cdot)$ is approximated by a network $f_\phi(\cdot,t):\R^d\rightarrow \R$ parameterized by $\phi$. We propose to solve the following problem to learn $f_\phi$, whose solution is characterized in \cref{thrm:infoNCE_energy}:
\begin{equation}
\label{Eq:infoNCE_energy}
\begin{split}
    &\min_\phi \E_{p(t)}\E_{q_0(\x_0^{(1:K)})}\E_{p(\epsilonv^{(1:K)})}\Bigg[ \\
    &- \sum_{i=1}^K
    \underbrace{%
        \vphantom{\frac{\exp(-f_\phi(\x_t^{(i)},t))}{\sum_{j=1}^K \exp(-f_\phi(\x_t^{(j)},t))}} 
         e^{-\beta\gE(\x_0^{(i)})}
    }_{\text{soft energy label}}
    \log \underbrace{\frac{e^{-f_\phi(\x_t^{(i)},t)}}{\sum_{j=1}^K e^{-f_\phi(\x_t^{(j)},t)}}}_{\text{predicted label}}
\Bigg].
 \end{split}
\end{equation}
\begin{theorem}
\label{thrm:infoNCE_energy}

Given unlimited model capacity and data samples, For all $K>1$ and $t\in[0,T]$, the optimal $f_{\phi^*}$ in problem (\ref{Eq:infoNCE_energy}) satisfies $\nabla_{\x_t}f_{\phi^*}(\x_t,t) = \nabla_{\x_t}\gE_t(\x_t)$.
\end{theorem}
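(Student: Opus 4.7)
The plan is to treat (\ref{Eq:infoNCE_energy}) as a soft-label cross-entropy objective (InfoNCE-style), write down its first-order optimality condition via a functional derivative, exhibit $f_\phi(\cdot,t) = \gE_t(\cdot) + c(t)$ as a stationary point, and close the argument by convexity to conclude that $\nabla_{\x_t} f_{\phi^*}$ is uniquely determined and equals $\nabla_{\x_t} \gE_t$.

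First, I would fix $t$, abbreviate $w_i := e^{-\beta\gE(\x_0^{(i)})}$, and split the integrand into a linear piece $\sum_i w_i f_\phi(\x_t^{(i)},t)$ plus a log-sum-exp piece $(\sum_i w_i)\log\sum_j e^{-f_\phi(\x_t^{(j)},t)}$. Taking a variation in an arbitrary direction $\delta f$, using the i.i.d.\ symmetry of the $K$ tuples $(\x_0^{(i)},\x_t^{(i)})$ to collapse each sum to $K$ times its ``position~1'' contribution, and applying the tower rule together with the key identity $\E_{q_{0t}(\x_0|\x_t)}[e^{-\beta\gE(\x_0)}] = e^{-\gE_t(\x_t)}$ (which is precisely definition (\ref{Eq:intermediate_energy})) lets me cancel the common $q_t(\x)$ factor and arrive at the pointwise stationary condition
\begin{equation*}
e^{-\gE_t(\x)} \;=\; e^{-f_\phi(\x,t)}\,\E\!\left[\frac{\sum_i w_i}{\sum_j e^{-f_\phi(\x_t^{(j)},t)}} \,\Big|\, \x_t^{(1)} = \x\right]
\end{equation*}
for all $\x$ in the support of $q_t$.

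Next I would substitute the candidate $f_\phi(\x,t) = \gE_t(\x) + c(t)$. The $e^{c(t)}$ prefactor cancels between $e^{-f_\phi(\x,t)}$ and the denominator inside the expectation, and conditioning further on the full vector $\x_t^{(1:K)}$ together with $\E[w_i\,|\,\x_t^{(1:K)}] = \E[w_i\,|\,\x_t^{(i)}] = e^{-\gE_t(\x_t^{(i)})}$ collapses the remaining ratio to $\sum_i e^{-\gE_t(\x_t^{(i)})}/\sum_j e^{-\gE_t(\x_t^{(j)})} = 1$; the outer expectation over $\x_t^{(2:K)}$ preserves this. So the candidate is stationary. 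Since $L$ is linear plus log-sum-exp composed with a linear map in $f_\phi$ (weighted by the nonnegative random weight $\sum_i w_i$), it is convex in $f_\phi$, hence every stationary point is a global minimizer.

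Finally I would promote this to uniqueness of the gradient. The Hessian of $L$ along a perturbation $\delta f$ reduces to $\E[(\sum_i w_i)\,\delta f(\x_t^{(1:K)},t)^{\top}(\mathrm{diag}(p)-p p^{\top})\,\delta f(\x_t^{(1:K)},t)]$ with softmax weights $p_k \propto e^{-f_\phi(\x_t^{(k)},t)}$, and this nonnegative quadratic form vanishes only when $\delta f(\x_t^{(j)},t)$ is the same across all $j$ almost surely. Since the $\x_t^{(j)}$ are i.i.d.\ from the Gaussian-smoothed density $q_t$, which is positive on all of $\R^d$, this forces $\delta f(\cdot,t)$ to be constant in $\x$. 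Thus any two minimizers differ only by a function of $t$, and $\nabla_{\x_t} f_{\phi^*}(\x_t,t) = \nabla_{\x_t}\gE_t(\x_t)$. The main obstacle I anticipate is precisely this uniqueness step: promoting ``log-sum-exp is strictly convex modulo a global constant shift'' from a pointwise fact in $\R^K$ to a function-space statement over $\R^d$ via i.i.d.\ samples needs the positivity and full support of $q_t$ to be used carefully, and the Hessian route above is the cleanest way I see to make it rigorous.
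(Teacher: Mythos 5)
Your proof is correct, but it takes a genuinely different route from the paper's. The paper also begins by using the tower rule and the identity $\E_{q_{0t}(\x_0|\x_t)}\bigl[e^{-\beta\gE(\x_0)}\bigr]=e^{-\gE_t(\x_t)}$ to replace the soft labels $e^{-\beta\gE(\x_0^{(i)})}$ by $e^{-\gE_t(\x_t^{(i)})}$, but from there it does not differentiate anything: it normalizes the labels to $a_i\propto e^{-\gE_t(\x_t^{(i)})}$ and the predictions to $b_i\propto e^{-f_\phi(\x_t^{(i)},t)}$, applies Gibbs' inequality $-\sum_i a_i\log b_i\geq -\sum_i a_i\log a_i$ pointwise for each tuple $\x_t^{(1:K)}$, and reads off from the equality condition that any minimizer must satisfy $b_i=a_i$ for all tuples; the ratio trick $e^{-f_{\phi^*}(\x_t^{(i)},t)}/e^{-\gE_t(\x_t^{(i)})}=e^{-f_{\phi^*}(\x_t^{(j)},t)}/e^{-\gE_t(\x_t^{(j)})}$ then forces $e^{-f_{\phi^*}(\cdot,t)}=C_t\,e^{-\gE_t(\cdot)}$, giving the gradient identity. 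In effect, Gibbs' inequality delivers attainability of the lower bound, global optimality, and the characterization of \emph{all} minimizers in a single stroke, so the paper never needs a stationarity computation, a convexity argument, or a Hessian. Your route---Euler--Lagrange condition, verification of the candidate $f=\gE_t+c(t)$, convexity of linear-plus-log-sum-exp for globality, and the softmax-variance Hessian to show minimizers differ only by a function of $t$---is longer but buys something the paper's argument does not make explicit: it exposes the convex structure of the objective in $f$, which would be the right scaffolding if one wanted to reason about approximate minimizers or optimization error rather than just the exact optimum. Your concern about the uniqueness step is well placed but handled correctly: the weighted softmax variance vanishes only when $\delta f$ agrees across the i.i.d.\ coordinates, and full support of the Gaussian-smoothed $q_t$ upgrades this to $\delta f$ being constant in $\x$, which is exactly the residual ambiguity (an additive function of $t$) that the gradient statement is insensitive to. The paper reaches the same conclusion via the arbitrariness of the tuple in the equality condition rather than via a second-order argument.
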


According to \cref{thrm:infoNCE_energy}, we indeed can train the energy guidance model $f_\phi$ by solving problem (\ref{Eq:infoNCE_energy}) and use the gradients of $f_\phi$ for guided sampling by estimating the energy guidance in \eqref{Eq:energy_guidance}.
Here we give an intuitive explanation of \cref{thrm:infoNCE_energy}. To estimate the energy guidance $\nabla_{\x_t}\gE_t(\cdot)$, we only need to ensure $f_\phi$ to be a relative proportional value of $\gE_t(\cdot)$, so it is enough to relatively compare $f_\phi$ within $K$ samples instead of directly train $f_\phi$ with the absolute values. Built upon such an observation, we leverage a cross-entropy loss in problem (\ref{Eq:infoNCE_energy}), where the energy $\gE(\x_0^{(i)})$ of $K$ clean samples $\x_0^{(i)}$ are soft supervising labels and the softmax of energy predictions $f_\phi(\x_t^{(i)},t)$ of $K$ noisy samples $\x_t^{(i)}$ are predicted labels.
Due to the contrastive manner of this objective, we name our proposed method in \eqref{Eq:infoNCE_energy} as \textit{Contrastive Energy Prediction (CEP)}.

Although the optimal solution in problem (\ref{Eq:infoNCE_energy}) is exact, sometimes we may suffer from numerical issues during training because the exponential term $e^{-\beta\gE(\x_0^{(i)})}$ is unnormalized. For example, suppose $\gE(\x_0)$ is a complex function that might contain ``spikes'' at some data point, the exponential term will greatly amplify such instability during training. To address this issue, we further use a self-normalized energy label by normalizing the energy function across the $K$ samples and define the optimization problem as:
\begin{equation}
\label{Eq:infoNCE_energy_normalized}
\begin{split}
    &\min_\phi \E_{p(t)}\E_{q_0(\x_0^{(1:K)})}\E_{p(\epsilonv^{(1:K)})}\Bigg[ \\
    &- \sum_{i=1}^K \!
    \underbrace{\frac{e^{-\beta\gE(\x_0^{(i)})}}{\sum_{j=1}^K e^{-\beta\gE(\x_0^{(j)})}}}_{\text{self-normalized energy label}}
    \log \underbrace{\frac{e^{-f_\phi(\x_t^{(i)},t)}}{\sum_{j=1}^K e^{-f_\phi(\x_t^{(j)},t)}}}_{\text{predicted label}}
\Bigg].
\end{split}
\end{equation}
Such a self-normalized objective can ensure the soft energy label is within $[0,1]$ and the sum of each item is exactly $1$. Although this objective may be biased against the original one in problem (\ref{Eq:infoNCE_energy}), we empirically find that it can greatly improve the numerical stability and help to achieve a good converged result. Moreover, we can reduce bias in the objective by increasing $K$. For $K\rightarrow \infty$, the objective in (\ref{Eq:infoNCE_energy_normalized}) is equivalent to that in (\ref{Eq:infoNCE_energy}) because $\sum_{\x_0}  e^{-\beta\gE(\x_0)} = \E_{q_0(\x_0)}[e^{-\beta\gE(\x_0)}]$ is the normalizing constant of $p_0$. Therefore, a larger $K$ is preferred in practice given enough computation and memory budget.

\section{Comparison with Previous Methods for Guided Sampling}
Below we compare CEP with previous methods for guided sampling.
We show that all previous energy-guided samplers are inexact; and for a special case of the energy function which corresponds to the conditional sampling problem, CEP is a contrastive alternative to classifier guidance.

\begin{figure}[ht]
\vspace{-0.1in}
\centering
\begin{minipage}{0.12\linewidth}
\rotatebox{90}{\scriptsize{Ground Truth}}
\end{minipage}
\begin{minipage}{0.86\linewidth}
\includegraphics[width=\columnwidth]{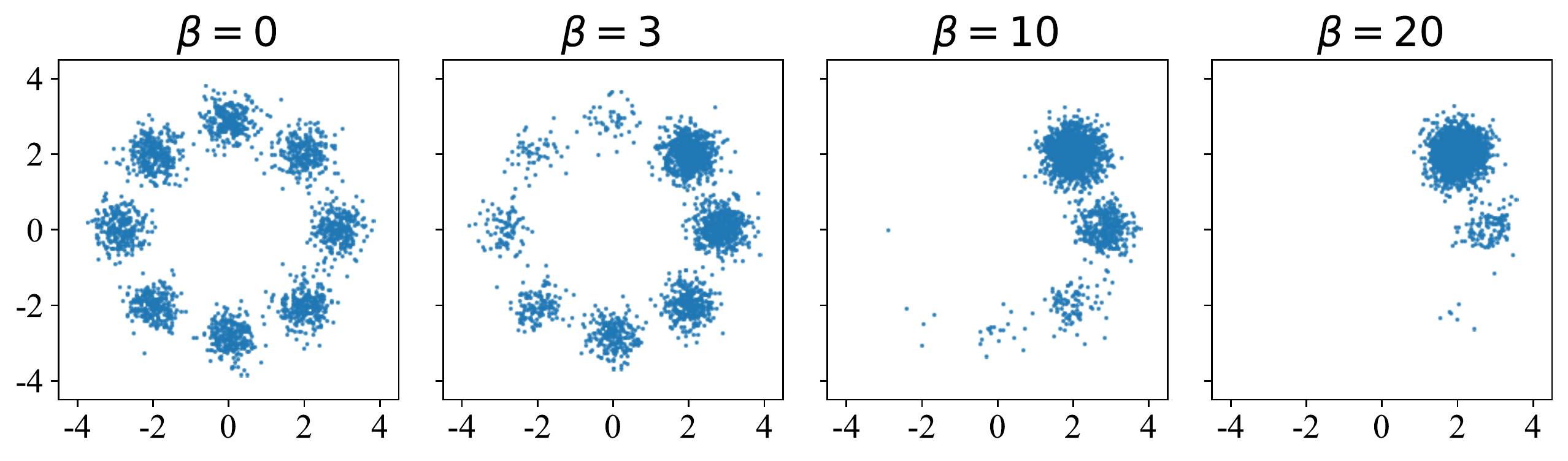}
\end{minipage}

\begin{minipage}{0.06\linewidth}
\rotatebox{90}{\scriptsize{CEP}}
\end{minipage}
\begin{minipage}{0.06\linewidth}
\rotatebox{90}{\scriptsize{(\textbf{ours})}}
\end{minipage}
\begin{minipage}{0.86\linewidth}
\includegraphics[width=\columnwidth]{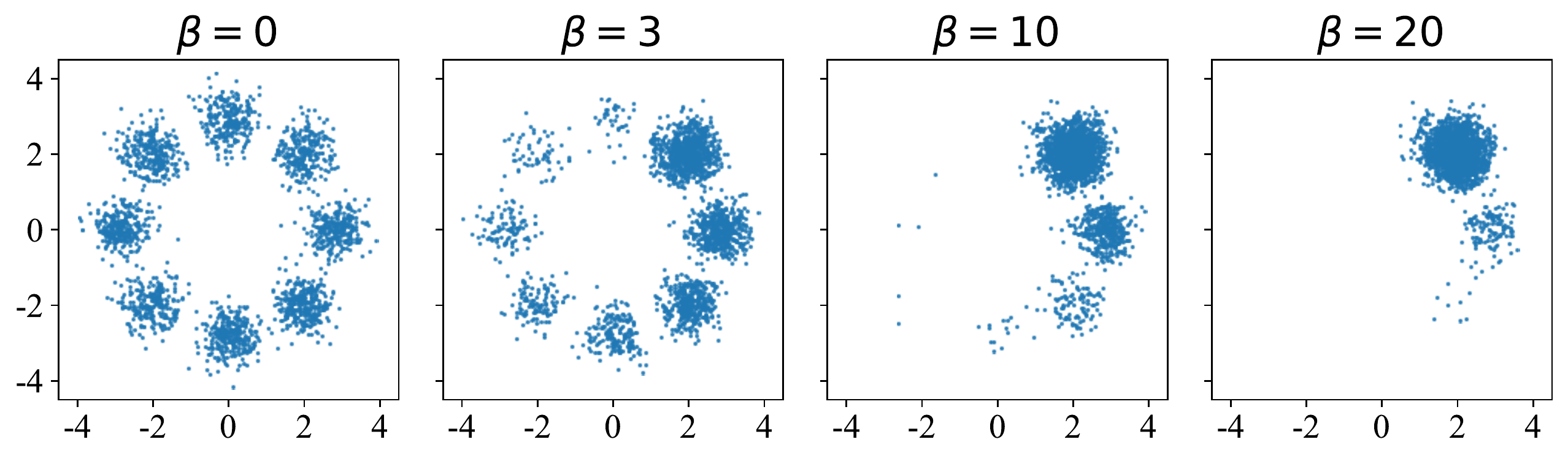}
\end{minipage}

\begin{minipage}{0.06\linewidth}
\rotatebox{90}{\scriptsize{MSE}}
\end{minipage}
\begin{minipage}{0.06\linewidth}
\rotatebox{90}{\scriptsize{(\citeauthor{diffuser})}}
\end{minipage}
\begin{minipage}{0.86\linewidth}
\includegraphics[width=\columnwidth]{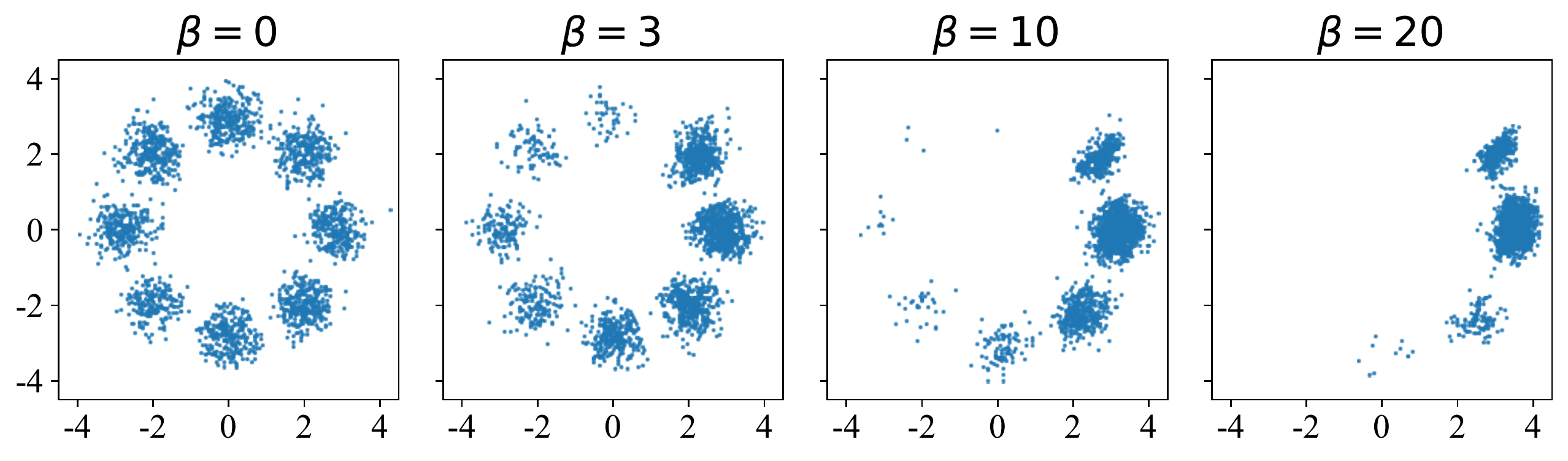}
\end{minipage}

\begin{minipage}{0.06\linewidth}
\rotatebox{90}{\scriptsize{DPS}}
\end{minipage}
\begin{minipage}{0.06\linewidth}
\rotatebox{90}{\scriptsize{(\citeauthor{chung2022diffusion})}}
\end{minipage}
\begin{minipage}{0.86\linewidth}
\includegraphics[width=\columnwidth]{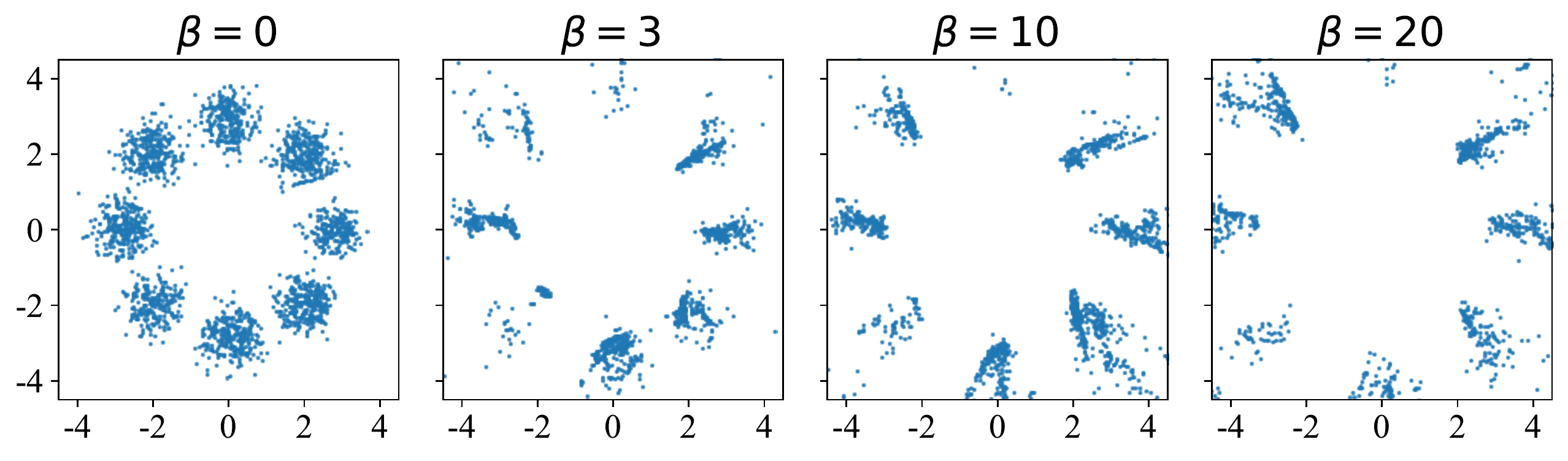}
\end{minipage}

\caption{
\small{A 2-D example for comparing different energy-guided sampling algorithms, varying different inverse temperature $\beta$.}
}
\label{fig:toy_main}
\end{figure}

\begin{table}[t]
    \vspace{-.1in}
    \centering
    \caption{\small{Comparison between energy-guided sampling algorithms.}}
    \vskip 0.1in
    \begin{small}
    \resizebox{0.48\textwidth}{!}{%
    \begin{tabular}{llc}
    \toprule
    Method & Optimal Solution of Energy &
    Exact Guidance \\
    \midrule
    \vspace{0.03in}
    CEP (ours) & $-\log\E_{q_{0t}(\x_0|\x_t)}\left[e^{-\gE_0(\x_0)}\right]$ & \color{green}{\ding{51}} \\
    \vspace{0.03in}
    MSE & $\E_{q_{0t}(\x_0|\x_t)}[\gE_0(\x_0)]$ & \color{red}{\ding{55}} \\
    DPS & $\gE_0\left(\E_{q_{0t}(\x_0|\x_t)}[\x_0]\right)$ & \color{red}{\ding{55}} \\
    \bottomrule
    \end{tabular}%
    }
    \end{small}
    \label{tab:comparison}
    \vspace{-0.15in}
\end{table}

\subsection{Previous Energy-Guided Samplers are Inexact}
\label{sec:compare_energy}
In this section, we show that previous energy-guided samplers for \eqref{Eq:target_distribution} are all inexact and do not guarantee convergence to $p_0$. 
Without loss of generality, we focus on a fixed time $t\in(0,T]$. 
We summarize the relationship in \cref{tab:comparison}.

\paragraph{MSE for Predicting Energy.}
Many existing energy guidance methods \citep{diffuser,bao2022equivariant} use a mean-square-error (MSE) objective to train an energy model $f_\phi(\x_t, t)$ and use its gradient for energy guidance. The training objective is:
\begin{equation}
\label{Eq:MSE_qt_objective}
    \min_{\phi} \E_{q_{0t}(\x_0, \x_t)} \left[
    \|f_\phi(\x_t, t) - \gE_0(\x_0)\|_2^2
\right]
\end{equation}
Given unlimited model capacity, the optimal $f_\phi$ satisfies:
\begin{equation*}
\begin{split}
    f_\phi^\text{MSE}(\x_t,t) = \E_{q_{0t}(\x_0|\x_t)}\left[
        \gE_0(\x_0)
    \right]
\end{split}
\end{equation*}
However, by \eqref{Eq:intermediate_energy}, the true energy function satisfies
\begin{equation*}
\begin{aligned}
    \gE_t(\x_t) &= -\log \E_{q_{0t}(\x_0|\x_t)}\left[
        e^{-\gE_0(\x_0)} \right] \\
        &\geq \E_{q_{0t}(\x_0|\x_t)}\left[
        \gE_0(\x_0)\right] = f_\phi^\text{MSE}(\x_t,t),
\end{aligned}
\end{equation*}
and the equality only holds when $t=0$. Therefore, the MSE energy function $f_\phi^{\text{MSE}}$ is inexact for all $t>0$. Moreover, we show in Appendix~\ref{appendix:relationship} that the gradient of $f_\phi^{\text{MSE}}$ is also inexact against the true guidance $\nabla_{\x_t}\gE(\x_t)$.

\paragraph{Diffusion Posterior Sampling.}
There also exist some training-free algorithms for energy-guided sampling, such as reconstruction guidance~\citep{ho2022video} and diffusion posterior sampling~\citep{chung2022diffusion}. The basic idea in these methods is to reuse the pretrained diffusion model in the data prediction formulation~\citep{kingma2021variational}:
\begin{equation*}
     \E_{q_{0t}(\x_0|\x_t)}[\x_0] \approx \hat\x_\theta(\x_t,t) \coloneqq \frac{\x_t - \sigma_t\epsilonv_\theta(\x_t,t)}{\alpha_t},
\end{equation*}
and then define the intermediate energy function by:
\begin{equation*}
    f_\theta^{\text{DPS}}(\x_t,t) \coloneqq \gE_0(\hat\x_\theta(\x_t,t)) \approx \gE_0(\E_{q_{0t}(\x_0|\x_t)}[\x_0]).
\end{equation*}
However, we show in Appendix~\ref{appendix:relationship} that $\nabla_{\x_t}f_\theta^{\text{DPS}}\neq \nabla_{\x_t}\gE_t$ and thus it is also inexact.

\paragraph{2-D Example.}
We further compare the different methods for energy-guided sampling on a 2-D example, as shown in Fig.~\ref{fig:toy_main}, and provide more 2-D results in Appendix~\ref{appendix:toy_more}. Experiments show that our method outperforms all referred methods, especially when the inverse temperature $\beta$ is large.

\subsection{Relationship with Contrastive Learning and Classifier Guidance}
\label{sec:compare_classifier_guidance}
In this section, we consider a special case of our method in which the energy function $\gE_0(\x_0)$ is defined as negative log-likelihood $-\log q_0(c|\x_0)$ for a given conditioning variable $c$ with $\beta=1$. In such case, the desired distribution is:
\begin{equation*}
    p_0(\x_0) \propto q_0(\x_0)q(c|\x_0) \propto q(\x_0|c).
\end{equation*}
Different from the problem we consider in \eqref{Eq:target_distribution} that $p_0(\x_0)$ is hard to draw samples from, here we assume that we can draw samples from $p_0(\x_0)=q_0(\x_0|c)$. Following such an assumption, we prove in Appendix~\ref{appendix:relationship_infonce} that our proposed CEP in \eqref{Eq:infoNCE_energy} is equivalent to
\begin{equation}
\label{Eq:contarstive_condition_loss}
\begin{split}
    \E_{t,\epsilonv^{(1:K)}}\E_{\prod_{i=1}^K q_0(\x_0^{(i)},c^{(i)})}\Bigg[\!\!
    -\sum_{i=1}^K \log \frac{e^{-f_\phi(\x_t^{(i)},c^{(i)},t)}}{\sum_{j=1}^K e^{-f_\phi(\x_t^{(j)},c^{(i)},t)}}
\Bigg],
\end{split}
\end{equation}
where $(\x_0^{(i)}, c^{(i)})$ are $K$ paired data samples from $q_0(\x_0,c)$.
Note that the inner expectation has the same form as the InfoNCE objective~\citep{infoNCE} and is widely used in contrastive learning for multi-modal data, such as CLIP~\citep{radford2021learning} (where $f_\phi$ represents cosine similarity).
Furthermore, \citet{nichol2021glide} uses the above objective and trains a CLIP at each $t$ for text-image pairs and uses its gradient as guidance for text-to-image sampling by diffusion models.
Therefore, such guidance can be considered as a special case of CEP in \eqref{Eq:infoNCE_energy}, under the assumption that we can draw samples from $p_0(\x_0)=q_0(\x_0|c)$.

Moreover, if $c$ is a discrete variable with a total of $M$ possible values (classes). An alternative guided sampling method for sampling from $q_0(\x_0|c)$ is classifier guidance~\citep{diffusion_beat_gan}, which optimize the following objective:
\begin{equation}
\label{Eq:classifier_guidance_loss}
    \E_{t,\epsilonv^{(1:K)}}\E_{\prod_{i=1}^K q_0(\x_0^{(i)},c^{(i)})}\Bigg[\!\!
    -\sum_{i=1}^K \log \frac{e^{-f_\phi(\x_t^{(i)},c^{(i)},t)}}{\sum_{j=1}^M e^{-f_\phi(\x_t^{(i)},c^{(j)},t)}}
\Bigg],
\end{equation}

The most notable difference between \eqref{Eq:contarstive_condition_loss} and \eqref{Eq:classifier_guidance_loss} is the normalizing axes in the training objective's denominator. Classifier guidance aims to \textit{classify conditions} for a given data $\x_t^{(i)}$, so the objective could be understood as a classification loss which is normalized across all $c^{(j)}$. CEP is trying to \textit{compare within data} for a specified condition $c^{(i)}$, so the objective could be understood as a contrastive loss which is normalized across all $\x_t^{(j)}$.

Theoretically, both classifier guidance and CEP could guarantee exact guidance given unlimited data and model capacity. Experimentally, we show in Sec.~\ref{sec:image_exp} that the two methods have quite similar performance.
However, traditional classifier guidance cannot be applied to energy-guided sampling because there does not exist a set of conditions $c$ across which we could normalize, whereas our proposed method can.
We thus conclude that CEP could be considered as a contrastive alternative to classifier guidance for conditional sampling, but is in a more general form that could transfer to the energy-guided sampling problem.

\section{Q-Guided Policy Optimization for Offline Reinforcement Learning}
\begin{table*}[t]
\centering
\small
\resizebox{1.0\textwidth}{!}{%
\begin{tabular}{llccccccccc}
\toprule
\multicolumn{1}{c}{\bf Dataset} & \multicolumn{1}{c}{\bf Environment} & \multicolumn{1}{c}{\bf CQL}& \multicolumn{1}{c}{\bf BCQ}  & \multicolumn{1}{c}{\bf IQL} & \multicolumn{1}{c}{\bf SfBC} & \multicolumn{1}{c}{\bf DD} & \multicolumn{1}{c}{\bf Diffuser} & \multicolumn{1}{c}{\bf D-QL}& \multicolumn{1}{c}{\bf D-QL@1} & \multicolumn{1}{c}{\bf QGPO (ours)} \\
\midrule
Medium-Expert & HalfCheetah    & $62.4$               &  $64.7    $ &  $86.7   $  & $\bf{92.6}$   & $90.6$     & $79.8$       &  $\bf{96.1}$& $ \bf{94.8}$ & $\bf{93.5\pm0.3}$               \\
Medium-Expert & Hopper         & $98.7$               &  $100.9$    &  $    91.5$ & $\bf{108.6}$  &$\bf{111.8}$& $\bf{107.2}$ &$\bf{110.7}$ & $100.6$      & $\bf{108.0\pm2.5}$           \\
Medium-Expert & Walker2d       & $\bf{111.0}$         & $57.5$      & $\bf{109.6}$& $\bf{109.8}$  &$\bf{108.8}$& $\bf{108.4}$ & $\bf{109.7}$& $\bf{108.9}$ & $\bf{110.7\pm0.6}$           \\
\midrule
Medium        & HalfCheetah    &  $44.4$              &  $40.7$     &  $47.4     $& $45.9$        & $49.1$     & $44.2$       &  $50.6$     & $47.8$       & $\bf{54.1\pm0.4}$           \\
Medium        & Hopper         &  $58.0$              &  $54.5$     &  $66.3$     & $57.1$        & $79.3$     & $58.5$       &$ 82.4      $& $64.1$       & $\bf{98.0\pm2.6}$ \\
Medium        & Walker2        &  $79.2$              &  $53.1$     &  $78.3$     & $77.9$        & $\bf{82.5}$ & $79.7$       &$\bf{85.1}$  & $82.0$       & $\bf{86.0\pm0.7}$ \\
\midrule
Medium-Replay & HalfCheetah    &  $\bf{46.2}$         &  $38.2$     &  $44.2$     &   $37.1$      &$39.3$& $42.2$       &$\bf{47.5}$  & $44.0$       & $\bf{47.6\pm1.4}$ \\
Medium-Replay & Hopper         &  $48.6$              &  $33.1$     &  $94.7$     &   $86.2$      & $\bf{100.0}$     & $\bf{96.8}$  & $\bf{100.7}$& $63.1$       & $\bf{96.9\pm2.6}$ \\
Medium-Replay & Walker2d       &  $26.7$              &  $15.0$     &  $73.9$     &   $65.1$      & $75.0$     & $61.2$       &  $\bf{94.3}$& $75.4$       & $84.4\pm4.1$ \\
\midrule
\multicolumn{2}{c}{\bf Average (Locomotion)}&$63.9$   &  $51.9$     & $76.9$      &   $75.6$      &$81.8$      &      $75.3$       & $\bf{86.3}$ & $75.6$       & $\bf{86.6}$ \\
\midrule
Default       & AntMaze-umaze  &  $74.0$              &  $78.9$     & $87.5$      & $\bf{92.0}$   & -          & -            & $68.6$      & $69.4$       & $\bf{96.4\pm1.4}$ \\
Diverse       & AntMaze-umaze  &  $\bf{84.0}$         &  $55.0  $   & $62.2$      & $\bf{85.3}$   & -          & -            &$53.0$       & $56.4$       & $74.4\pm9.7$ \\
\midrule  
Play          & AntMaze-medium &  $61.2$              &  $0.0$      & $71.2$      & $\bf{81.3}$   & -          & -            & $0.0$       & $1.0$        & $\bf{83.6\pm4.4}$ \\
Diverse       & AntMaze-medium &  $53.7$              &  $0.0$      & $70.0$      & $\bf{82.0}$   & -          & -            & $18.4$      & $14.8$       & $\bf{83.8\pm3.5}$ \\
\midrule
Play          & AntMaze-large  &  $15.8$              &  $6.7$      & $39.6$      & $59.3$        & -          & -            & $10.6$      & $15.8$       & $\bf{66.6\pm9.8}$ \\
Diverse       & AntMaze-large  &  $14.9$              &  $2.2$      & $47.5$      & $45.5$        & -          & -            & $4.2$       &  $1.6$       & $\bf{64.8\pm5.5}$ \\
\midrule
\multicolumn{2}{c}{\bf Average (AntMaze)}&  $50.6$    &  $23.8$     & $63.0$      &   $74.2$      & -            &   -   &  $25.8$    & $26.5$       & $\bf{78.3}$ \\
\specialrule{.05em}{.4ex}{.1ex}
\specialrule{.05em}{.1ex}{.65ex}

\multicolumn{2}{c}{\bf{\# Action candidates}}&$1$   &  $100$     & $1$  & $32$  &   $1$     & $1$  &   $50$  & $1$       & $1$ \\
\multicolumn{2}{c}{\bf{\# Diffusion steps}}&-   &  -     & -  & $15$  &   $100$     & $100$  &   $5$   & $5$       & $15$ \\
\bottomrule

\end{tabular}
}
\caption{
Evaluation numbers of D4RL benchmarks (normalized as suggested by \citet{d4rl}). We report mean and standard deviation of algorithm performance across 5 random seeds at the end of training. Numbers within 5 percent of the maximum in every individual task are highlighted. We rerun the experiments of Diffusion-QL to ensure a consistent evaluation metric. See \cref{Sec:rl_detail} for details. }
\label{tbl:rl_results}
\vspace{-0.2in}
\end{table*}

In this section, we showcase how our method can be applied in offline RL, including problem formulation in \cref{sec:rl_formulation}, algorithm method in \cref{sec:rl_method} and \cref{softmax_ql}, and experimental results in \cref{Sec:rl_result}. A pseudocode is provided in \cref{Sec:pseudocode}.
\subsection{Problem Formulation}
\label{sec:rl_formulation}
Recall that from \eqref{Eq:pi_optimal}, our desired policy $\pi^*$ follows $\pi^*(\va|\vs) \propto \ \mu(\va|\vs) \ e^{\beta Q_\psi(\vs, \va)}$, where the behavior policy $\mu(\va|\vs)$ is a diffusion model.
In order to sample actions from $\pi^*$ by diffusion sampling, we denote $\mu_0 : = \mu$, $\pi_0 : = \pi$, $\va_0:=\va$ at time $t=0$. Then we construct a forward diffusion process to simultaneously diffuse $\mu_0$ and $\pi_0$ into the same noise distribution, where $\pi_{t0}(\va_t|\va_0, \vs) \coloneqq \mu_{t0}(\va_t|\va_0, \vs) = \gN(\va_t|\alpha_t\va_0,\sigma_t^2\mI)$. 

According to \cref{thrm:intermediate_energy_guidance}, by replacing the distribution $q$ with $\mu$, $p$ with $\pi$, and the energy function $\gE$ with $-Q$ following conventions in offline RL literature, we have the marginal distributions $\mu_t$ and $\pi_t$ of the noise-perturbed action $\va_t$ satisfy:
\begin{equation}
    \label{Eq:pi_optimal_t_space}
    \pi_t(\va_t|\vs) \propto \mu_t(\va_t|\vs) e^{\gE_t(\vs, \va_t)}.
\end{equation}
$\gE_t(\vs, \va_t)$ is an intermediate energy function determined by the learned action evaluation model $Q_\psi(\vs, \va_0)$. Specifically $\gE_t(\vs, \va_t) = \log \E_{\mu_{0t}(\va_0|\va_t, \vs)}\left[e^{\beta Q_\psi(\vs, \va_0)}\right]$ and $\gE_0(\vs, \va_0)=\beta Q_\psi(\vs, \va_0)$.

We now consider how to estimate the score function of $\pi_t(\va|\vs)$ such that we can sample actions from $\pi_0$ following \eqref{Eq:diffusion_ode}. By \eqref{Eq:pi_optimal_t_space}, we have:
\begin{equation}
\label{Eq:rl_energy_guidance}
    \nabla_{\va_t}\log \pi_t(\va_t|\vs) = \underbrace{\nabla_{\va_t}\log \mu_t(\va_t|\vs)}_{\approx -\epsilonv_\theta(\va_t|\vs,t)/\sigma_t}
    + \nabla_{\va_t}\underbrace{\gE_t(\vs, \va_t)}_{\approx f_\phi(\vs, \va_t, t)}.
\end{equation}

To this end, we have formulated the classic constrained policy optimization problem (\ref{Eq:rl_main}) as energy-guided sampling, with $\nabla_{\va_t}\gE_t(\vs, \va_t)$ being the desired guidance. Because such guidance is determined by the Q function, we name this approach as \textit{Q-guided policy optimization (QGPO)}. 
QGPO requires training a total of three neural networks in order to estimate the targeted score function $\nabla_{\va_t}\log \pi_t(\va_t|\vs)$: (1) a state-conditioned diffusion model $\epsilonv_\theta(\va_t| \vs, t)$ to model the behavior policy $\mu(\va|\vs)$, for which we completely follow \citet{sfbc}; (2) an action evaluation model $Q_\psi(\vs, \va)$ to define the intermediate energy function $\gE_t$ when $t=0$ (\cref{softmax_ql}); and (3) an energy model $f_\phi(\vs, \va_t, t)$ to estimate $\gE_t(\vs, \va_t, t)$ and guide the diffusion sampling process when $t>0$ (\cref{sec:rl_method}).

\subsection{In-Support Contrastive Energy Prediction}
\label{sec:rl_method}
Suppose we already have an action evaluation model $Q_\psi(\vs, \va)$ to estimate the Q-function $Q^{\pi}(\vs, \va)$. According to \cref{thrm:infoNCE_energy}, $f_\phi(\vs, \va_t, t)$ can be trained via our proposed CEP. Rewriting \eqref{Eq:infoNCE_energy} to condition all distributions on state $\vs$, the problem for learning $f_\phi$ becomes:
\begin{equation}
\label{Eq:rl_contrastive}
\begin{split}
    &\min_\phi \E_{p(t)}\E_{\mu(\vs)}\E_{\prod_{i=1}^K \mu(\va^{(i)}|\vs)p(\epsilonv^{(i)})}\Bigg[\\
    &- \sum_{i=1}^K
    \frac{e^{\beta Q_\psi(\vs, \va^{(i)})}}{\sum_{j=1}^K e^{\beta Q_\psi(\vs, \va^{(j)})}}
    \log \frac{e^{f_\phi(\vs, \va_t^{(i)}, t)}}{\sum_{j=1}^K e^{f_\phi(\vs, \va_t^{(j)}, t)}}
\Bigg],
\end{split}
\end{equation}
where $t \sim \mathcal{U}(0,T)$, $\va_t = \alpha_t\va+\sigma_t\bm{\epsilon}$ and $\bm{\epsilon}\sim \mathcal{N}(0,\bm{I})$.

One difficulty in solving the above problem is that we have no access to the true distribution $\mu(\va|\vs)$ for a specified $\vs$. Although we can sample data from the joint distribution $\mu(\vs, \va)$ or the marginal distribution $\mu(\vs)$ given the offline dataset $\mathcal{D}^\mu$, such data samples cannot be directly used to estimate the objective in problem (\ref{Eq:rl_contrastive}). This is because we require $K>1$ independent action samples from $\mu(\va|\vs)$ for a single $\vs$ for contrastive learning, whereas we only have one such action in $\mathcal{D}^\mu$ given that $\vs$ is a continuous variable.

To address this issue, we propose to pre-generate a support action set $\mathcal{D}^{\mu_\theta}$ using the already learned behavior model $\mu_\theta(\va_t|\vs,t)$. Concretely, for each state $\vs$ in the behavior dataset $\mathcal{D}^{\mu}$, we sample $K$ support actions $\{\hat \va^{(i)}\}_K$ from $\mu_\theta(\cdot|\vs)$ and store these actions in pair with the state $\vs$ in $\mathcal{D}^{\mu_\theta}$. We then estimate the objective in (\ref{Eq:rl_contrastive}) with $\mathcal{D}^{\mu_\theta}$:
\begin{equation}
\label{Eq:rl_loss}
    \min_\phi \E_{t,\vs,\epsilonv} - \sum_{i=1}^K
    \frac{e^{\beta Q_\psi(\vs, \hat \va^{(i)})}}{\sum_{j=1}^K e^{\beta Q_\psi(\vs, \hat \va^{(j)})}}
    \log \frac{e^{f_\phi(\vs, \hat \va_t^{(i)}, t)}}{\sum_{j=1}^K e^{f_\phi(\vs, \hat \va_t^{(j)}, t)}}
\end{equation}
where $\hat \va^{(i)}$, $\hat \va^{(j)}$ are support actions for $\mathcal{D}^{\mu_\theta}(\vs)$. Since problem (\ref{Eq:rl_loss}) is optimized in a support action set instead of the true dataset, we refer to it as in-support CEP.

\subsection{In-support Softmax Q-Learning}
\label{softmax_ql}
We now discuss in detail how the action evaluation model $Q_\psi \approx Q^{\pi}$ could be trained. Ideally, we can use a typical Bellman-style bootstrapping method to calculate the mean square error (MSE) training target of $Q_\psi$ \cite{dql, arq}:
\begin{equation}
    \label{Eq:one_step_bellman}
    \mathcal{T}^\pi Q_\psi(\vs, \va) = r(\vs, \va) + \gamma\mathbb{E}_{\vs' \sim P(\cdot|\vs,\va), \va' \sim \pi(\cdot|\vs')} Q_\psi(\vs', \va').
\end{equation}
However, calculating $\mathcal{T}^\pi Q_\psi(\vs, \va)$ could in practice be time-consuming, because it requires sampling from a diffusion model $\pi$ during training. We thus leverage the generated support action set $\mathcal{D}^{\mu_\theta}$ to avoid repeated sampling from a diffusion model. Specifically, we estimate $\mathcal{T}^\pi Q_\psi(\vs, \va)$ via importance sampling:
\begin{equation}
    \label{Eq:one_step_bellman_softmax}
    \mathcal{T}^\pi Q_\psi(\vs, \va) \approx r(\vs, \va) + \gamma \frac{\sum_{\hat\va'}e^{\beta_Q Q_\psi(\vs',\hat\va')}Q_\psi(\vs',\hat\va')}{\sum_{\hat\va'} e^{\beta_Q Q_\psi(\vs',\hat\va')}}.
\end{equation}

\begin{table*}[t]
    \begin{center}
    \begin{small}
    \resizebox{0.75\textwidth}{!}{%
    \begin{tabular}{ccccccccc}
    \toprule
    Conditional &  Resolution & Diffusion Steps & FID   & sFID    & Precision & Recall  \\
    \midrule
    \cmark      & 128$\times$128 & 250  & 3.17 / 2.97  &  5.17 / 5.09   & 0.78 / 0.78 & 0.59 / 0.59 \\
    \cmark      & 128$\times$128 & 25  &  6.15 / 5.98  &  6.97 / 7.04  &  0.79 / 0.78 & 0.51 / 0.51 \\
    \hline
    \cmark      & 256$\times$256 &  250 &  4.74 / 4.59  &  5.23 / 5.25  & 0.82 / 0.82      & 0.52 / 0.52 \Tstrut \\
    \cmark      & 256$\times$256 &  25 & 5.58 / 5.44 &	5.25 / 5.32  &  0.82 / 0.81      & 0.48 / 0.49 \Bstrut \\
    \hline
    \xmark      & 256$\times$256 &  250 & 32.53 / 33.03 & 7.23 / 6.99  & 0.56 / 0.56      &  0.65 / 0.65 \Tstrut \\
    \bottomrule
    \end{tabular}%
    }
    \end{small}
    \end{center}
    \vspace{-0.1in}
    \caption{Effect of CEP guidance (left) on image sample quality compared with classifier guidance (right).}
    \vspace{-0.15in}
    \label{tab:guide2}
\end{table*}

\subsection{Results}
\label{Sec:rl_result}
We compare the performance of QGPO with several related works in multiple D4RL \citep{d4rl} tasks in \cref{tbl:rl_results}. Among them, \texttt{MuJoCo locomotion} tasks are popular benchmarks in offline RL and mainly aim to drive different robots moving forward as fast as possible. The dataset might contain a mixture of expert-level and medium-level policies' decision data (Medium-Expert), decision data generated by a single medium-level policy (Medium), and diverse decision data generated by a large set of medium-level policies (Medium-Replay). \texttt{Antmaze} tasks are typically considered to be hard tasks for RL-based methods. They aim to navigate an ant robot in several prespecified mazes (Umaze, Medium, Large). The learned policy directly outputs an eight-dimensional motor torque to control the motor motion of the ant robot at each degree of freedom. As a result, Antmaze tasks require policies to perform both low-level motion control and high-level navigation.

From \cref{tbl:rl_results}, we can see that in most tasks, our method outperforms referenced baselines, especially in difficult tasks such as Antmaze-Large. Baselines include traditional state-of-the-art algorithms like CQL \citep{cql}, BCQ \citep{bcq}, and IQL \citep{iql}, which adopt Gaussian-like policies. We also include recent advances in offline RL that adopt diffusion-based policies. Diffusers \cite{diffuser} considers using an energy guidance method that ensembles the MSE-based method as described in \eqref{Eq:MSE_qt_objective}. Decision Diffuser (DD, \citet{dd}), on the other hand, explores using the classifier-free guidance \citep{ho2021classifier}. Diffusion-QL (D-QL, \citet{dql}) tracks the gradients of the actions sampled from the behavior diffusion policy to guide generated actions to high Q-value area. SfBC \citep{sfbc} simply resamples actions from multiple behavior action candidates using the predicted Q-value as sampling weights. Note that such a resampling trick is also shared by Diffusion-QL. We also study a variant of Diffusion-QL (D-QL@1) where the resampling trick is removed to better reflect the quality of decisions generated by the diffusion policy.

\section{Image Synthesis Examples with CEP}
\label{sec:image_exp}

\begin{figure}[t]
\vspace{-.05in}
	\begin{minipage}{0.18\linewidth}
		\centering
			\includegraphics[width=\linewidth]{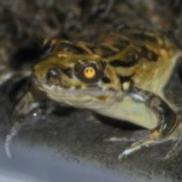}\\
			\small{$s=0.0$} \\
	\end{minipage}
	\begin{minipage}{0.18\linewidth}
		\centering
			\includegraphics[width=\linewidth]{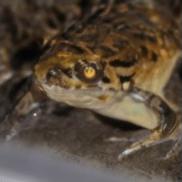}\\
			\includegraphics[width=\linewidth]{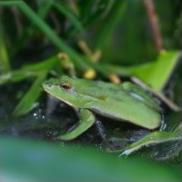}\\
			\includegraphics[width=\linewidth]{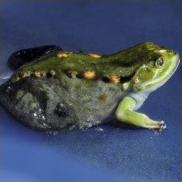}\\
		    \small{$s=1.0$} \\
	\end{minipage}
	\begin{minipage}{0.18\linewidth}
		\centering
			\includegraphics[width=\linewidth]{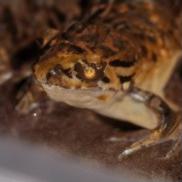}\\
			\includegraphics[width=\linewidth]{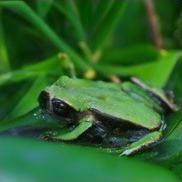}\\
			\includegraphics[width=\linewidth]{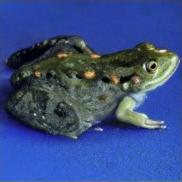}\\
		    \small{$s=2.0$} \\
	\end{minipage}
	\begin{minipage}{0.18\linewidth}
		\centering
			\includegraphics[width=\linewidth]{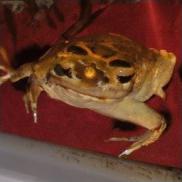}\\
			\includegraphics[width=\linewidth]{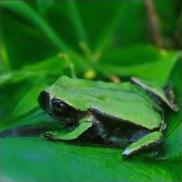}\\
			\includegraphics[width=\linewidth]{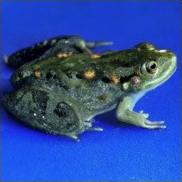}\\
		    \small{$s=3.0$} \\
	\end{minipage}
	\begin{minipage}{0.18\linewidth}
		\centering
			\includegraphics[width=\linewidth]{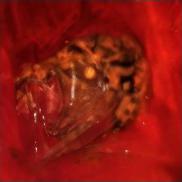}\\
			\includegraphics[width=\linewidth]{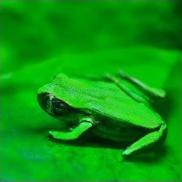}\\
			\includegraphics[width=\linewidth]{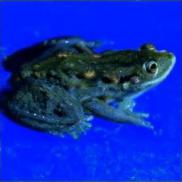}\\
		    \small{$s=10.0$} \\
	\end{minipage}
	\begin{minipage}{0.04\linewidth}
		\centering
			\small{\rotatebox{90}{red}}\\
			\vspace{0.4in}
			\small{\rotatebox{90}{green}}\\
			\vspace{0.3in}
			\small{\rotatebox{90}{blue}}\\
	\end{minipage}
	\caption{\label{fig:main_color}\small{Samples by color guidance with red, green, and blue, varying the guidance scale $s$ (under a fixed random seed).}
}
\vspace{-.2in}
\end{figure}

\subsection{Results in Class-Conditional Image Synthesis}
\label{classifier_results}

We quantitatively evaluate our proposed method (\eqref{Eq:contarstive_condition_loss}) in image synthesis tasks on ImageNet as is shown in \cref{tab:guide2}. Our method achieves results that are roughly on par with classic classifier guidance \citep{diffusion_beat_gan}. We also qualitatively compare sampled images guided by our proposed CEP guidance and classifier guidance with fixed random seeds in \cref{fig:image_guidance_qualitative}, which shows that they generate samples that are almost visually identical. Besides the training objective, our method uses exactly the same network architecture, training pipeline, and evaluation methods as \citet{diffusion_beat_gan}, without any kind of hyperparameter tuning. These results empirically indicate that our method is an almost equally well-performing alternative to classifier guidance in ImageNet image synthesis tasks.

\subsection{Energy-Guided Image Synthesis}
This section showcases how image synthesis can be controlled through a continuous energy function as described in \eqref{Eq:target_distribution_intro} instead of a discrete class condition as in \cref{classifier_results}. We define the energy function at $t=0$ data space to indicate the overall color appearance of an image:
\begin{equation}
    \label{Eq:color_q0}
    \gE(\x) := - \overline{\| h(\x) - h_\textbf{tar} \|_1},
\end{equation}
where $h(\x)$ represents the hue value for each pixel in an image $\x$, and can be calculated via Hue-Saturation-Intensity (HSI) decomposition \citep{cvbook}.  $h(\x)$ is defined in an angular space of range $[0, 2\pi]$, where red is at angle $0$, green at $2\pi/3$, blue at $4\pi/3$, and red again at $2\pi$. As a result, by setting the target hue $h_\textbf{tar}$ to corresponding angular values, we can evaluate how an image is visually close to a ``pure color'' image.

With such a definition of the energy function, we train three energy guidance models to control the overall color appearance of sampled images. An illustration is given in \cref{fig:main_color}. By switching among different guidance models and tuning guidance scales to control the guidance effect similar to \citet{diffusion_beat_gan, ho2021classifier}, we can effectively control the color appearance of an image. If the diffusion prior is a conditional model, generated images might have different backgrounds (e.g., desert, forest, and sky) to meet different preferences of color appearance while ensuring fidelity. See more examples in \cref{Sec:Scale_samples}.

\section{Conclusion}
In this work, we formally consider the problem of sampling by diffusion models pretrained on a data distribution but the target sampling distribution is edited by an energy function. We show that this can be achieved by adding additional energy guidance to the original sampling procedure. We further propose a novel training objective named contrastive energy prediction for training an energy model to estimate such guidance. Our proposed CEP guidance is exact compared with previous energy guidance methods in the sense that it can guarantee convergence to the desired distribution. We apply our proposed method to several downstream tasks in order to demonstrate its effectiveness and scalability. Experimental results show that our method outperforms existing guidance methods in offline RL and is roughly on par with the classic classifier guidance in conditional image synthesis.

\section*{Acknowledgements}
This work was supported by the National Key Research and Development Program of China
(2020AAA0106302); 
NSF of China Projects (Nos. 62061136001, 61620106010, 62076145, U19B2034, U1811461, U19A2081, 6197222, 62106120, 62076145); a grant from Tsinghua Institute for Guo Qiang; the High Performance Computing Center, Tsinghua University. J.Z was also supported by the New Cornerstone Science Foundation through the XPLORER PRIZE. C. Li was also sponsored by the Beijing Nova Program.

\nocite{langley00}

\bibliography{ref}
\bibliographystyle{icml2023}

\newpage
\appendix
\onecolumn

\section{Limitations and broader impacts}
Similar to other deep generative modeling methods, energy-guided diffusion sampling can be potentially used to generate harmful contents such as ``deepfakes'', and might reflect and amplify unwanted social bias existed in the training dataset.

\section{Related Work}
\label{appendix:related}

\paragraph{Diffusion Models and Applications.}
Diffusion models (also as known as score-based generative models)~\citep{sohl2015deep,ho2020denoising,song2020score,karras2022elucidating} are emerging powerful generative models and have achieved impressive success on various tasks, such as voice synthesis~\citep{liu2022diffsinger,chen2020wavegrad,chen2021wavegrad}, high-resolution image synthesis~\citep{diffusion_beat_gan,ho2022cascaded}, image editing~\citep{meng2021sdedit,saharia2022palette,zhao2022egsde}, text-to-image generation~\citep{saharia2022photorealistic,nichol2021glide,ramesh2022hierarchical,rombach2022high,gu2022vector}, molecule generation~\citep{xu2022geodiff,hoogeboom2022equivariant,wu2022diffusion}, 3-D shape generation~\citep{zeng2022lion,poole2022dreamfusion,wang2022rodin}, video generation~\citep{ho2022video,ho2022imagen,yang2022diffusion,zhou2022magicvideo} and data compression~\citep{theis2022lossy, kingma2021variational,lu2022maximum}. The sampling methods for diffusion models include training-free fast samplers~\citep{song2020denoising,bao2022analytic,lu2022dpm,lu2022dpm++,zhang2022fast,zhang2022gddim} and distillation-based samplers~\citep{salimans2022progressive,meng2022distillation}.

\paragraph{Diffusion Models as Priors.}
There exist many existing works that use a pretrained diffusion model as the prior distribution $q(\x)$ and aim to sample from \eqref{Eq:target_distribution_intro}. \citet{graikos2022diffusion} propose a training-free sampling method which directly use the constraint ($\gE(\cdot)$ in \eqref{Eq:target_distribution_intro}) and can be used for approximately solving traveling salesman problems; \citet{poole2022dreamfusion} use a pretrained 2-D diffusion model and optimizing the 3-D parameters for 3-D shape generation; \citet{kawar2022denoising,chung2022diffusion} use pretrained diffusion models to solve linear and some special non-linear inverse problems, such as image inpainting, deblurring and denoising; \citet{zhao2022egsde,bao2022equivariant} use human-designed intermediate energy guidance for image-to-image translation and inverse molecular design. However, all these existing methods cannot guarantee that the final samples follow the desired $p(\x)$ in \eqref{Eq:target_distribution_intro}.

\paragraph{Controllable Generation.}
To embed human preference and controllability into the sampling procedure of deep generative models, many recent work~\citep{nie2021controllable,li2022diffusion,pandey2022diffusevae} manipulate the sampling procedure of the latent space by some learned conditional models, and use the obtained latent code to generate desired samples. The generator include variational auto-encoder~\citep{kingma2013auto} and generative adversarial networks~\citep{goodfellow2020generative}. However, existing methods for realizing controllable generation in diffusion models mainly focus on conditional guidance. The problem we considered in \eqref{Eq:target_distribution_intro} can be alternatively understood as a general formulation for embed human controllability into the sampling procedure of diffusion models.

\paragraph{Offline Reinforcement Learning.}
Offline RL typically requires reconciling two conflicting aims: Staying close to the behavior policy while maximizing the expected Q-values. In order to stick with a potentially diverse behavior policy, recent studies \cite{diffuser, arq, dql, sfbc, dd, csbc} have found diffusion models to be a powerful generative tool, which tends to outperform previous generative methods such as Gaussians \citep{awr, crr, awac} or VAEs \cite{bcq, bear} in terms of behavior modeling. In terms of how to generate actions that maximize the learned Q-functions, different methods take different approaches. Diffusers \cite{diffuser} intends to mimic the classifier-guidance method \cite{diffusion_beat_gan} and propose to use a guidance method as described in \eqref{Eq:MSE_qt_objective}, but without a detailed discussion on the convergence property of the proposed method. Decision Diffuser (DD, \citet{dd}), on the other hand, explores using classifier-free guidance. Diffusion-QL tracks the gradients of the actions sampled from the behavior diffusion policy to guide generated actions to high Q-value area. SfBC \citep{sfbc} and Diffusion-QL share the same trick by simply resampling actions from multiple behavior action candidates using the predicted Q-value as sampling weights. Other work \cite{arq, csbc} only uses diffusion models for pure behavior cloning, so no Q-value maximizing is required. 
In contrast with prior work, our work aims to study how an energy guidance model could be exactly trained and used to guide the sampling process in diffusion-based decision-making.

\section{Motivation for \eqref{Eq:target_distribution_intro}}
The formulation in \eqref{Eq:target_distribution_intro} is general and common across various settings, such as the product of experts by Gibbs distribution~\citep{hinton2002training}, posterior-regularized Bayesian inference~\citep{zhu2014bayesian}, exponential tilting of generative models~\citep{xiao2020exponential}, training deep generative models on limited data with regularization by pre-trained models~\citep{zhong2022deep}, constrained policy optimization in reinforcement learning~\citep{awr,ziegler2019fine}, and inverse problems~\citep{chung2022diffusion}. In this section, we give a motivating example for such an objective.

Let $q(\x)$ be an unknown data distribution, and $\gE(\x)$ be a loss function that we want to minimize. We want to optimize a generative model $p(\x)$ such that the samples from $p(\x)$ have as small loss $\gE(\x)$ as possible; meanwhile, we also use the data distribution $q(\x)$ to regularize the model $p(\x)$ to increase the diversity and avoid collapse solutions. The objective can be formulated by:
\begin{equation}
\min_{p} \E_{p(\x)} [\gE(\x)]
     + \frac{1}{\beta} \kl{p(\x)}{q(\x)},
\end{equation}
where $\kl{p(\x)}{q(\x)}$ is a regularization term and $\beta$ is a hyperparameter. By simply computing the derivation for $p$ and letting it be zero, we can obtain the optimal $p^*$ satisfies
\begin{equation}
    p^*(\x) \propto q(\x)e^{-\gE(\x)},
\end{equation}
which has the exactly same form as \eqref{Eq:target_distribution_intro}.

\section{Proofs and Additional Theory}
\label{appendix:proof}

\subsection{CEP with Condition Variables}
Assume the energy function is $\gE(\x_0,c)$ with an additional conditioning variable $c$, which follows a distribution $q(c)$. We aim to learn the intermediate energy guidance by a neural network $f_\phi(\cdot,c,t):\R^d\rightarrow \R$ parameterized by $\phi$.

\paragraph{CEP with unconditional prior.}
Assume the prior distribution $q_0(\x)$ is unconditional. We aim to sample from 
\begin{equation}
    p_0(\x_0|c) \propto q_0(\x_0)e^{-\beta\gE(\x_0,c)},
\end{equation}
for each given $c$.
By taking the expectation of $q(c)$, the objective in \eqref{Eq:infoNCE_energy} is
\begin{equation}
\min_\phi \E_{q(c)}\E_{p(t)}\E_{q_0(\x_0^{(1:K)})}\E_{p(\epsilonv^{(1:K)})}\Bigg[
    - \sum_{i=1}^K
     e^{-\beta\gE(\x_0^{(i)}, c)}
    \log \frac{e^{-f_\phi(\x_t^{(i)},c,t)}}{\sum_{j=1}^K e^{-f_\phi(\x_t^{(j)},c,t)}}
\Bigg].
\end{equation}

\paragraph{CEP with conditional prior.}
Assume the prior distribution $q_0(\x|c)$ is conditional on $c$. We aim to sample from 
\begin{equation}
    p_0(\x_0|c) \propto q_0(\x_0|c)e^{-\beta\gE(\x_0,c)},
\end{equation}
for each given $c$.
By taking the expectation of $q(c)$, the objective in \eqref{Eq:infoNCE_energy} is
\begin{equation}
\min_\phi \E_{q(c)}\E_{p(t)}\E_{q_0(\x_0^{(1:K)}|c)}\E_{p(\epsilonv^{(1:K)})}\Bigg[
    - \sum_{i=1}^K
     e^{-\beta\gE(\x_0^{(i)}, c)}
    \log \frac{e^{-f_\phi(\x_t^{(i)},c,t)}}{\sum_{j=1}^K e^{-f_\phi(\x_t^{(j)},c,t)}}
\Bigg].
\end{equation}
Moreover, we can draw $K$ sample pairs $(\x_0^{(i)},c^{(i)})$ for $i=1,\dots,K$, and the above objective becomes
\begin{equation}
\min_\phi \E_{p(t)}\E_{\prod_{i=1}^K q_0(\x_0^{(i)},c^{(i)})}\E_{p(\epsilonv^{(1:K)})}\Bigg[
    - \sum_{i=1}^K
     e^{-\beta\gE(\x_0^{(i)}, c^{(i)})}
    \log \frac{e^{-f_\phi(\x_t^{(i)},c^{(i)},t)}}{\sum_{j=1}^K e^{-f_\phi(\x_t^{(j)},c^{(i)},t)}}
\Bigg].
\end{equation}
Note that the terms in the numerator is $(\x_t^{(i)}, c^{(i)})\sim q_t(\x_t^{(i)}, c^{(i)})$ are samples from the joint distribution; while the terms in the denominator is $(\x_t^{(j)}, c^{(i)}, t) \sim q_t(\x_t^{(j)})q(c^{(i)})$ are independent samples. Such formulation is highly similar to the contrastive learning objective~\citep{infoNCE}, and we discuss the connections in Appendix~\ref{appendix:relationship_infonce}. In summary, CEP with conditional prior can be considered as a generalized version of contrastive learning with soft energy labels.

\subsection{CEP in Multiple Time Steps}
Training the guidance model by \eqref{Eq:infoNCE_energy} needs $K$ samples of $\x_t$ for each time $t$. If we use $M$ samples of $t\in(0,T]$, the number of total samples of $\x_t$ is $KM$. However, for high-dimensional data such as images, the memory budget is limited and we want to use as many samples at each time $t$ as possible. In this section, we propose an alternative objective that can leverage $K$ samples $x_t$ from different time $t$. Thus, we only need $K$ samples of $t$ and $K$ samples of $\x_t$ to reduce the memory cost. We formally propose the objective below and provide the proof in Appendix~\ref{appendix:proof_mixed}.

\begin{theorem}[CEP in Multiple Time Steps]
\label{thrm:infoNCE_energy_mixed}
Let $t^{(1)},\dots,t^{(K)}$ be K i.i.d. samples from $p(t)$. For each $i=1,\dots,K$, let $\x_t^{(i)}\coloneqq \alpha_{t^{(i)}}\x_0^{(i)} + \sigma_{t^{(i)}}\epsilonv^{(i)}$, where $\alpha_t,\sigma_t$ are defined in \eqref{Eq:forward_p}. Define an objective:
\begin{equation}
\label{Eq:infoNCE_energy_mixed}
\begin{split}
    \min_\phi \E_{p(t^{(1:K)})}\E_{q_0(\x_0^{(1:K)})}\E_{p(\epsilonv^{(1:K)})}\Bigg[
    - \sum_{i=1}^K
    \underbrace{%
        \vphantom{\frac{\exp(-f_\phi(\x_t^{(i)},t))}{\sum_{j=1}^K \exp(-f_\phi(\x_t^{(j)},t))}} 
        \left( e^{-\beta\gE(\x_0^{(i)})} \right)
    }_{\text{energy label}}
    \log \underbrace{\frac{e^{-f_\phi(\x_t^{(i)},t^{(i)})}}{\sum_{j=1}^K e^{-f_\phi(\x_t^{(j)},t^{(j)})}}}_{\text{predicted label}}
\Bigg].
\end{split}
\end{equation}
Given unlimited model capacity and data samples, For all $K>1$ and $t\in[0,T]$, the optimal $f_{\phi^*}$ in \eqref{Eq:infoNCE_energy_mixed} satisfies
\begin{equation}
    \nabla_{\x_t}f_{\phi^*}(\x_t,t) = \nabla_{\x_t}\gE_t(\x_t).
\end{equation}
\end{theorem}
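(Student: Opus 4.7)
The plan is to treat $f_\phi$ as an unconstrained function $F\colon\R^d\times[0,T]\to\R$, write down the objective $L[F]$ of \eqref{Eq:infoNCE_energy_mixed}, and analyze its minimizers via a calculus-of-variations argument that mirrors the (fixed-time) proof of Theorem~\ref{thrm:infoNCE_energy}. The essential new feature is that the softmax denominator couples samples at different time-stamps $t^{(j)}$, so the pointwise stationarity analysis must be adjusted accordingly.

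First, I would relabel the random soft energy label using the intermediate energy. Conditioning on $(\x_t^{(i)},t^{(i)})$ and invoking the tower property together with \eqref{Eq:intermediate_energy},
\[
\E\!\bigl[e^{-\beta\gE(\x_0^{(i)})}\bigm|\x_t^{(i)},t^{(i)}\bigr] = e^{-\gE_{t^{(i)}}(\x_t^{(i)})}.
\]
Because the softmax ratio is measurable with respect to $\{(\x_t^{(j)},t^{(j)})\}_{j=1}^K$, replacing $e^{-\beta\gE(\x_0^{(i)})}$ by $e^{-\gE_{t^{(i)}}(\x_t^{(i)})}$ inside $L[F]$ leaves its value unchanged.

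Second, I would take a variation $F\mapsto F+\eps\,\delta F$, differentiate at $\eps=0$, and use exchangeability of the $K$ tuples to collapse stationarity to a single pointwise equation at each $(\x,t)$ in the support:
\[
e^{-\gE_t(\x)} = \E\!\left[\frac{e^{-F^*(\x,t)}\bigl(e^{-\gE_t(\x)}+\sum_{i\ge 2}e^{-\gE_{t^{(i)}}(\x_t^{(i)})}\bigr)}{e^{-F^*(\x,t)}+\sum_{j\ge 2}e^{-F^*(\x_t^{(j)},t^{(j)})}}\right],
\]
the expectation taken over the $K-1$ remaining i.i.d.\ samples. Plugging in the candidate $F^*(\x,t)=\gE_t(\x)+C$ for an arbitrary constant $C$, the $e^{-C}$ factor cancels between the identical bracketed sums in numerator and denominator, reducing both sides to $e^{-\gE_t(\x)}$, so the candidate is indeed a stationary point.

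Finally, I would upgrade stationarity to the claimed gradient equality via convexity plus a null-space analysis. $L[F]$ is a nonnegative weighted sum of softmax cross-entropies and hence convex in $F$; a short computation shows the Hessian at a single tuple is $\mathrm{diag}(p)-pp^\top$, whose only null direction is ``constant across the $K$ entries''. Taking expectation over the random tuples then forces any null direction of $L$ itself to be a global constant $\delta F(\x,t)\equiv c$; crucially, a purely $t$-dependent shift $G(t)$ is \emph{not} a null direction, because the denominator sums over distinct $t^{(j)}$ and such a reparameterization does not cancel. Hence the minimizer is unique up to a global additive constant, $F^*(\x,t)=\gE_t(\x)+C$, which yields $\nabla_{\x_t}f_{\phi^*}(\x_t,t)=\nabla_{\x_t}\gE_t(\x_t)$. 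The main obstacle will be exactly this last step: ruling out $t$-dependent shifts as null directions is what distinguishes the multi-time argument from its fixed-time analogue, and relies on the fact that the softmax denominator genuinely mixes samples drawn at different time-stamps.
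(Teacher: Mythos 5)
Your proposal is correct, and its opening move --- conditioning on $(\x_t^{(i)},t^{(i)})$ and using the tower property to replace the label $e^{-\beta\gE(\x_0^{(i)})}$ by $e^{-\gE_{t^{(i)}}(\x_t^{(i)})}$ --- is exactly the paper's first step. Where you diverge is in how the minimizer of the resulting weighted cross-entropy is characterized. The paper applies Gibbs' inequality tuple-by-tuple: for \emph{every} realization of $(\x_t^{(1:K)},t^{(1:K)})$ the inner sum is minimized iff the predicted softmax equals the normalized true energies, which immediately gives
\begin{equation*}
\frac{e^{-f_{\phi^*}(\x_t^{(i)},t^{(i)})}}{e^{-\gE_{t^{(i)}}(\x_t^{(i)})}}=\frac{e^{-f_{\phi^*}(\x_t^{(j)},t^{(j)})}}{e^{-\gE_{t^{(j)}}(\x_t^{(j)})}}
\end{equation*}
for all pairs and hence a single multiplicative constant. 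Your route instead derives only the \emph{averaged} Euler--Lagrange condition (which I have checked agrees with $\E[w_1-(\sum_k w_k)b_1\mid \x,t]=0$ for $w_k=e^{-\gE_{t^{(k)}}(\x_t^{(k)})}$), verifies the candidate $F^*=\gE_t+C$, and then closes the argument by convexity of $F\mapsto\sum_i w_iF_i+(\sum_i w_i)\log\sum_je^{-F_j}$ together with the null space of $\mathrm{diag}(p)-pp^\top$ to obtain uniqueness up to an additive constant. Both arguments are sound; the paper's is shorter because Gibbs' inequality delivers the per-tuple equality directly, while yours is more mechanical and would extend to objectives with no closed-form per-tuple minimizer, at the cost of needing the convexity and null-space bookkeeping to upgrade a stationary point to the unique global minimum. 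One remark: your emphasis on excluding purely $t$-dependent shifts $G(t)$ is a true and slightly sharper conclusion than the paper records (its proof still writes $C_t$), but it is not actually needed for the theorem --- the claim only involves $\nabla_{\x_t}$, so even a $t$-dependent additive shift would leave the guidance unchanged.
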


Below we also give the corresponding objectives for energy functions with conditioning variables.

\paragraph{CEP with unconditional prior in multiple time steps.}
The objective is
\begin{equation}
\min_\phi \E_{q(c)}\E_{p(t^{(1:K)})}\E_{q_0(\x_0^{(1:K)})}\E_{p(\epsilonv^{(1:K)})}\Bigg[
    - \sum_{i=1}^K
     e^{-\beta\gE(\x_0^{(i)}, c)}
    \log \frac{e^{-f_\phi(\x_t^{(i)},c,t^{(i)})}}{\sum_{j=1}^K e^{-f_\phi(\x_t^{(j)},c,t^{(j)})}}
\Bigg].
\end{equation}

\paragraph{CEP with conditional prior in multiple time steps.}
\begin{equation}
\min_\phi \E_{p(t^{(1:K)})}\E_{\prod_{i=1}^K q_0(\x_0^{(i)},c^{(i)})}\E_{p(\epsilonv^{(1:K)})}\Bigg[
    - \sum_{i=1}^K
     e^{-\beta\gE(\x_0^{(i)}, c^{(i)})}
    \log \frac{e^{-f_\phi(\x_t^{(i)},c^{(i)},t^{(i)})}}{\sum_{j=1}^K e^{-f_\phi(\x_t^{(j)},c^{(i)},t^{(j)})}}
\Bigg].
\end{equation}
Also note that the terms in the numerator is $(\x_t^{(i)}, c^{(i)}, t^{(i)})\sim q(\x_t^{(i)}, c^{(i)}, t^{(i)})$ are samples from the joint distribution; while the terms in the denominator is $(\x_t^{(j)}, c^{(i)}, t^{(j)}) \sim q(\x_t^{(j)}, t^{(j)})q(c^{(i)})$ are independent samples.

\subsection{Proof of \cref{thrm:intermediate_energy_guidance}}
\begin{proof}
Assume the normalizing constant for $p_0(\x_0)$ is
\begin{equation*}
    Z = \int q_0(\x_0) e^{-\beta\gE(\x_0)}\dm\x_0 = \E_{q_0(\x_0)}\left[e^{-\beta\gE(\x_0)}\right],
\end{equation*}
then we have
\begin{equation*}
    p_0(\x_0) = \frac{q_0(\x_0)e^{-\beta\gE(\x_0)}}{Z}
\end{equation*}
According to the definition, we have
\begin{equation*}
\begin{split}
p_t(\x_t) &= \int p_{t0}(\x_t|\x_0)p_0(\x_0)\dm\x_0 \\
    &= \int p_{t0}(\x_t|\x_0)q_0(\x_0)\frac{e^{-\beta\gE(\x_0)}}{Z}\dm\x_0 \\
    &= \int q_{t0}(\x_t|\x_0)q_0(\x_0)\frac{e^{-\beta\gE(\x_0)}}{Z}\dm\x_0 \\
    &= q_t(\x_t)\int q_{0}(\x_0|\x_t)\frac{e^{-\beta\gE(\x_0)}}{Z}\dm\x_0 \\
    &= \frac{q_t(\x_t)\E_{q_{0}(\x_0|\x_t)}\left[e^{-\beta\gE(\x_0)}\right]}{Z} \\
    &= \frac{q_t(\x_t)e^{-\gE_t(\x_t)}}{Z}
\end{split}
\end{equation*}
and then
\begin{equation*}
    \nabla_{\x_t}\log p_t(\x_t) = \nabla_{\x_t}\log q_t(\x_t) - \nabla_{\x_t}\gE_t(\x_t) 
\end{equation*}

\end{proof}

\subsection{Proof of \cref{thrm:infoNCE_energy}}
\begin{proof}
As $q_{t0}(\x_t|\x_0)=\Nc(\x_t|\alpha_t\x_0,\sigma_t^2\Iv)$, we can rewrite \eqref{Eq:infoNCE_energy} by
\begin{equation*}
    \min_\phi \E_{p(t)}\E_{q_0(\x_0^{(1:K)})}\E_{q_{t0}(\x_t^{(1:K)}|\x_0^{(1:K)})}\Bigg[
    - \sum_{i=1}^K
     e^{-\beta\gE(\x_0^{(i)})}
    \log \frac{e^{-f_\phi(\x_t^{(i)},t)}}{\sum_{j=1}^K e^{-f_\phi(\x_t^{(j)},t)}}
\Bigg].
\end{equation*}
Rewriting $q_{0t}(\x_0,\x_t) = q_t(\x_t)q_{0t}(\x_0|\x_t)$ and moving the conditional expectation $q_{0t}(\x_0|\x_t)$ into the inner part, we have
\begin{equation*}
    \min_\phi \E_{p(t)}\E_{q_t(\x_t^{(1:K)})}\Bigg[
    - \sum_{i=1}^K
    \E_{q_{0t}(\x_0^{(i)}|\x_t^{(i)})}\left[ e^{-\beta\gE(\x_0^{(i)})} \right]
    \log \frac{e^{-f_\phi(\x_t^{(i)},t)}}{\sum_{j=1}^K e^{-f_\phi(\x_t^{(j)},t)}}
\Bigg].    
\end{equation*}
By \eqref{Eq:intermediate_energy}, we have $\E_{q_{0t}(\x_0|\x_t)}\left[ e^{-\beta\gE(\x_0)} \right] = e^{-\gE_t(\x_t)}$, thus the above objective is equivalent to
\begin{equation}
\label{Eq:appendix_proof_objective}
    \min_\phi \E_{p(t)}\E_{q_t(\x_t^{(1:K)})}\Bigg[
    - \sum_{i=1}^K
    e^{-\gE_t(\x_t^{(i)})}
    \log \frac{e^{-f_\phi(\x_t^{(i)},t)}}{\sum_{j=1}^K e^{-f_\phi(\x_t^{(j)},t)}}
\Bigg].
\end{equation}
For each $t$ and $\x_t^{(1:K)}$, for $i=1,\dots,K$, define
\begin{equation*}
\begin{split}
    a_i(\x_t^{(1:K)}, t) &\coloneqq  \frac{e^{-\gE_t(\x_t^{(i)})}}{\sum_{j=1}^K e^{-\gE_t(\x_t^{(j)})}}, \\
    b_i(\x_t^{(1:K)}, t) &\coloneqq \frac{e^{-f_\phi(\x_t^{(i)},t)}}{\sum_{j=1}^K e^{-f_\phi(\x_t^{(j)},t)}}, \\
    c(\x_t^{(1:K)}, t) &\coloneqq \sum_{j=1}^K e^{-\gE_t(\x_t^{(j)})} > 0,
\end{split}
\end{equation*}
Then \eqref{Eq:appendix_proof_objective} is equivalent to
\begin{equation*}
    \min_\phi \E_{p(t)}\E_{q_t(\x_t^{(1:K)})}\Bigg[
    -c(\x_t^{(1:K)}, t) \sum_{i=1}^K
    a_i(\x_t^{(1:K)},t)
    \log \left(b_i(\x_t^{(1:K)}, t)\right)
\Bigg].
\end{equation*}
For each fixed $t$ and $\x_t^{(!:K)}$, as $\sum_{i=1}^K a_i(\x_t^{(1:K)},t) = \sum_{i=1}^K b_i(\x_t^{(1:K)},t) = 1$, according to Gibbs' inequality, we have
\begin{equation*}
    -\sum_{i=1}^K a_i(\x_t^{(1:K)},t) \log \left(b_i(\x_t^{(1:K)}, t)\right)
    \geq -\sum_{i=1}^K a_i(\x_t^{(1:K)},t) \log \left(a_i(\x_t^{(1:K)}, t)\right),
\end{equation*}
so we have
\begin{equation*}
\begin{split}
&\phantom{{}={}}\E_{p(t)}\E_{q_t(\x_t^{(1:K)})}\Bigg[
    -c(\x_t^{(1:K)}, t) \sum_{i=1}^K
    a_i(\x_t^{(1:K)},t)
    \log \left(b_i(\x_t^{(1:K)}, t)\right)
\Bigg] \\
&\geq 
\E_{p(t)}\E_{q_t(\x_t^{(1:K)})}\Bigg[
    -c(\x_t^{(1:K)}, t) \sum_{i=1}^K
    a_i(\x_t^{(1:K)},t)
    \log \left(a_i(\x_t^{(1:K)}, t)\right)
\Bigg],
\end{split}
\end{equation*}
and the equality holds when for each $t\in(0,T]$ and each $\x_t^{(1:K)}$ in the supported space of $q_t(\x_t^{(1:K)})$,
\begin{equation*}
    b_i(\x_t^{(1:K)}, t) = a_i(\x_t^{(1:K)}, t), \quad i=1,\dots,K,
\end{equation*}
so given unlimited data and model capacity, the optimal $\phi^*$ satisfies
\begin{equation*}
    \frac{e^{-f_{\phi^*}(\x_t^{(i)},t)}}{\sum_{j=1}^K e^{-f_{\phi^*}(\x_t^{(j)},t)}} = \frac{e^{-\gE_t(\x_t^{(i)})}}{\sum_{j=1}^K e^{-\gE_t(\x_t^{(j)})}},
\end{equation*}
so for each $i=1,\dots,K$,
\begin{equation*}
    \frac{e^{-f_{\phi^*}(\x_t^{(i)},t)}}{e^{-\gE_t(\x_t^{(i)})}} = \frac{\sum_{j=1}^K e^{-f_{\phi^*}(\x_t^{(j)},t)}}{\sum_{j=1}^K e^{-\gE_t(\x_t^{(j)})}},
\end{equation*}
due to the arbitrariness of $\x_t^{(1:K)}$ and $t$, for any $\x_t^{(i)}$,$\x_t^{(j)}$, we have
\begin{equation*}
    \frac{e^{-f_{\phi^*}(\x_t^{(i)},t)}}{e^{-\gE_t(\x_t^{(i)})}} = \frac{e^{-f_{\phi^*}(\x_t^{(j)},t)}}{e^{-\gE_t(\x_t^{(j)})}}.
\end{equation*}
Therefore, there exists a constant $C_t$ independent of $\x_t$, such that
\begin{equation*}
    e^{-f_{\phi^*}(\x_t,t)} = C_t \cdot e^{-\gE_t(\x_t)} \propto e^{-\gE_t(\x_t)},
\end{equation*}
and then $\nabla_{\x_t}f_{\phi^*}(\x_t,t) = \nabla_{\x_t}\gE_t(\x_t)$.
\end{proof}

\subsection{Proof of \cref{thrm:infoNCE_energy_mixed}}
\label{appendix:proof_mixed}
Intuitively, \cref{thrm:infoNCE_energy_mixed} can be similarly proved as \cref{thrm:infoNCE_energy} by considering $(\x_t,t)$ as a whole random variable. We formally give the proof below.

\begin{proof}[Proof of \cref{thrm:infoNCE_energy_mixed}]
As $q_{t0}(\x_t|\x_0)=\Nc(\x_t|\alpha_t\x_0,\sigma_t^2\Iv)$, we can rewrite \eqref{Eq:infoNCE_energy_mixed} by
\begin{equation*}
    \min_\phi \E_{q(t^{(1:K)})}\E_{q(\x_0^{(1:K)},\x_t^{(1:K)})}\Bigg[
    - \sum_{i=1}^K
     e^{-\beta\gE(\x_0^{(i)})}
    \log \frac{e^{-f_\phi(\x_t^{(i)},t^{(i)})}}{\sum_{j=1}^K e^{-f_\phi(\x_t^{(j)},t^{(j)})}}
\Bigg].
\end{equation*}
Rewriting $q(\x_0,\x_t) = q_t(\x_t)q_{0t}(\x_0|\x_t)$ and moving the conditional expectation $q_{0t}(\x_0|\x_t)$ into the inner part, we have
\begin{equation*}
    \min_\phi \E_{p(t^{(1:K)})}\E_{q_t(\x_t^{(1:K)})}\Bigg[
    - \sum_{i=1}^K
    \E_{q_{0t^{(i)}}(\x_0^{(i)}|\x_t^{(i)})}\left[ e^{-\beta\gE(\x_0^{(i)})} \right]
    \log \frac{e^{-f_\phi(\x_t^{(i)},t^{(i)})}}{\sum_{j=1}^K e^{-f_\phi(\x_t^{(j)},t^{(j)})}}
\Bigg].    
\end{equation*}
By \eqref{Eq:intermediate_energy}, we have $\E_{q_{0t}(\x_0|\x_t)}\left[ e^{-\beta\gE(\x_0)} \right] = e^{-\gE_t(\x_t)}$, thus the above objective is equivalent to
\begin{equation}
\label{Eq:appendix_proof_objective_mixed}
    \min_\phi \E_{p(t^{(1:K)})}\E_{q_t(\x_t^{(1:K)})}\Bigg[
    - \sum_{i=1}^K
    e^{-\gE_{t^{(i)}}(\x_t^{(i)})}
    \log \frac{e^{-f_\phi(\x_t^{(i)},t^{(i)})}}{\sum_{j=1}^K e^{-f_\phi(\x_t^{(j)},t^{(j)})}}
\Bigg].
\end{equation}
For each $t^{(1:K)}$ and $\x_t^{(1:K)}$, for $i=1,\dots,K$, define
\begin{equation*}
\begin{split}
    a_i(\x_t^{(1:K)}, t^{(1:K)}) &\coloneqq  \frac{e^{-\gE_{t^{(i)}}(\x_t^{(i)})}}{\sum_{j=1}^K e^{-\gE_{t^{(j)}}(\x_t^{(j)})}}, \\
    b_i(\x_t^{(1:K)}, t^{(1:K)}) &\coloneqq \frac{e^{-f_\phi(\x_t^{(i)},t^{(i)})}}{\sum_{j=1}^K e^{-f_\phi(\x_t^{(j)},t^{(j)})}}, \\
    c(\x_t^{(1:K)}, t^{(1:K)}) &\coloneqq \sum_{j=1}^K e^{-\gE_{t^{(j)}}(\x_t^{(j)})} > 0,
\end{split}
\end{equation*}
Then \eqref{Eq:appendix_proof_objective_mixed} is equivalent to
\begin{equation*}
    \min_\phi \E_{p(t^{(1:K)})}\E_{q_t(\x_t^{(1:K)})}\Bigg[
    -c(\x_t^{(1:K)}, t^{(1:K)}) \sum_{i=1}^K
    a_i(\x_t^{(1:K)},t^{(1:K)})
    \log \left(b_i(\x_t^{(1:K)}, t^{(1:K)})\right)
\Bigg].
\end{equation*}
For each fixed $t^{(1:K)}$ and $\x_t^{(!:K)}$, as $\sum_{i=1}^K a_i(\x_t^{(1:K)},t^{(1:K)}) = \sum_{i=1}^K b_i(\x_t^{(1:K)},t^{(1:K)}) = 1$, according to Gibbs' inequality, we have
\begin{equation*}
    -\sum_{i=1}^K a_i(\x_t^{(1:K)},t^{(1:K)}) \log \left(b_i(\x_t^{(1:K)}, t^{(1:K)})\right)
    \geq -\sum_{i=1}^K a_i(\x_t^{(1:K)},t^{(1:K)}) \log \left(a_i(\x_t^{(1:K)}, t^{(1:K)})\right),
\end{equation*}
so we have
\begin{equation*}
\begin{split}
&\phantom{{}={}}\E_{p(t^{(1:K)})}\E_{q_t(\x_t^{(1:K)})}\Bigg[
    -c(\x_t^{(1:K)}, t^{(1:K)}) \sum_{i=1}^K
    a_i(\x_t^{(1:K)},t^{(1:K)})
    \log \left(b_i(\x_t^{(1:K)}, t^{(1:K)})\right)
\Bigg] \\
&\geq 
\E_{p(t^{(1:K)})}\E_{q_t(\x_t^{(1:K)})}\Bigg[
    -c(\x_t^{(1:K)}, t^{(1:K)}) \sum_{i=1}^K
    a_i(\x_t^{(1:K)},t^{(1:K)})
    \log \left(a_i(\x_t^{(1:K)}, t^{(1:K)})\right)
\Bigg],
\end{split}
\end{equation*}
and the equality holds when for each $t^{(1:K)}$ and each $\x_t^{(1:K)}$ in the supported space of $q_t(\x_t^{(1:K)})$,
\begin{equation*}
    b_i(\x_t^{(1:K)}, t^{(1:K)}) = a_i(\x_t^{(1:K)}, t^{(1:K)}), \quad i=1,\dots,K,
\end{equation*}
so given unlimited data and model capacity, the optimal $\phi^*$ satisfies
\begin{equation*}
    \frac{e^{-f_{\phi^*}(\x_t^{(i)},t^{(i)})}}{\sum_{j=1}^K e^{-f_{\phi^*}(\x_t^{(j)},t^{(j)})}} = \frac{e^{-\gE_{t^{(i)}}(\x_t^{(i)})}}{\sum_{j=1}^K e^{-\gE_{t^{(j)}}(\x_t^{(j)})}},
\end{equation*}
so for each $i=1,\dots,K$,
\begin{equation*}
    \frac{e^{-f_{\phi^*}(\x_t^{(i)},t^{(i)})}}{e^{-\gE_t(\x_t^{(i)})}} = \frac{\sum_{j=1}^K e^{-f_{\phi^*}(\x_t^{(j)},t^{(j)})}}{\sum_{j=1}^K e^{-\gE_{t^{(j)}}(\x_t^{(j)})}},
\end{equation*}
due to the arbitrariness of $\x_t^{(1:K)}$ and $t^{(1:K)}$, for any $\x_t^{(i)}$,$\x_t^{(j)}$, we have
\begin{equation*}
    \frac{e^{-f_{\phi^*}(\x_t^{(i)},t^{(i)})}}{e^{-\gE_{t^{(i)}}(\x_t^{(i)})}} = \frac{e^{-f_{\phi^*}(\x_t^{(j)},t^{(j)})}}{e^{-\gE_{t^{(j)}}(\x_t^{(j)})}}.
\end{equation*}
Therefore, there exists a constant $C_t$ independent of $\x_t$, such that
\begin{equation*}
    e^{-f_{\phi^*}(\x_t,t)} = C_t \cdot e^{-\gE_t(\x_t)} \propto e^{-\gE_t(\x_t)},
\end{equation*}
and then $\nabla_{\x_t}f_{\phi^*}(\x_t,t) = \nabla_{\x_t}\gE_t(\x_t)$.
\end{proof}

\section{Comparison with Existing Energy-Guided Sampling Algorithms}
\label{appendix:relationship}

\begin{table}[t]
    \vspace{-.1in}
    \centering
    \caption{\small{Comparison between energy-guided sampling algorithms.}}
    \vskip 0.1in
    \begin{small}
    \resizebox{0.99\textwidth}{!}{%
    \begin{tabular}{lllc}
    \toprule
    Method & Optimal Solution of Energy & Optimal Solution of Guidance & 
    Exact Guidance \\
    \midrule
    \vspace{0.03in}
    CEP (ours) & $-\log\E_{q_{0t}(\x_0|\x_t)}\left[e^{-\gE_0(\x_0)}\right]$ & $\E_{q_{0t}(\x_0|\x_t)}\left[-e^{\gE_t(\x_t)-\gE_0(\x_0)}\nabla_{\x_t}\log q_{0t}(\x_0|\x_t)\right]$ & \color{green}{\ding{51}} \\
    \vspace{0.03in}
    MSE & $\E_{q_{0t}(\x_0|\x_t)}[\gE_0(\x_0)]$ & $\E_{q_{0t}(\x_0|\x_t)}\Big[\gE_0(\x_0)\nabla_{\x_t}\log q_{0t}(\x_0|\x_t)\Big]$ & \color{red}{\ding{55}} \\
    DPS & $\gE_0\left(\E_{q_{0t}(\x_0|\x_t)}[\x_0]\right)$ & $\E_{q_{0t}(\x_0|\x_t)}\Big[\left(\left(\nabla\gE_0\left(\E_{q_{0t}(\x_0|\x_t)}[\x_0]\right)\right)^\top \x_0\right) \nabla_{\x_t}\log q_{0t}(\x_0|\x_t)\Big]$ & \color{red}{\ding{55}} \\
    \bottomrule
    \end{tabular}%
    }
    \end{small}
    \label{tab:comparison_appendix}
\end{table}

Firstly, we can easily compute the gradients for the energy guidance used in MSE~\citep{diffuser,bao2022equivariant} and DPS~\citep{ho2022video,chung2022diffusion}, and we summarize the results in \cref{tab:comparison_appendix}. Below we propose a deeper connection between these methods.

\paragraph{Comparison for Energy.}
The exact energy function is
\begin{equation*}
    \gE_t(\x_t) = -\log\E_{q_{0t}(\x_0|\x_t)}\left[e^{-\gE_0(\x_0)}\right].
\end{equation*}
By exchanging the order of the $\log$ function and the expectation, we can derive the energy used in MSE:
\begin{equation*}
    \gE_t^{\text{MSE}}(\x_t) = -\E_{q_{0t}(\x_0|\x_t)}\left[\log\left(e^{-\gE_0(\x_0)}\right)\right] = \E_{q_{0t}(\x_0|\x_t)}\left[\gE_0(\x_0)\right].
\end{equation*}
By further exchanging the order of $\gE_0$ function and the expectation, we can derive the energy used in DPS:
\begin{equation*}
    \gE_t^{\text{DPS}}(\x_t) = \gE_0\left(\E_{q_{0t}(\x_0|\x_t)}\left[\x_0\right]\right).
\end{equation*}
Intuitively, exchanging the order between a nonlinear function and an expectation will introduce additional approximation errors, and the errors depend on the complexity of the nonlinear function. As $\log(\cdot)$ is a simple concave function but $\gE_0(\cdot)$ may be rather complex, the approximation error of DPS may be quite large, which may explain why the sample results of DPS are quite worse than CEP in Fig.~\ref{fig:toy_main} and Fig.~\ref{fig:toy_appendix}.

\paragraph{Comparison for Guidance.}
The exact guidance is
\begin{equation}
\label{Eqn:appendix_exact_guidance}
    \nabla_{\x_t}\gE_t(\x_t) = \E_{q_{0t}(\x_0|\x_t)}\left[-e^{\gE_t(\x_t)-\gE_0(\x_0)}\nabla_{\x_t}\log q_{0t}(\x_0|\x_t)\right].
\end{equation}
And the guidance by MSE is
\begin{equation*}
    \nabla_{\x_t}\gE_t^{\text{MSE}}(\x_t) = \E_{q_{0t}(\x_0|\x_t)}\left[
        \gE_0(\x_0)\nabla_{\x_t}\log q_{0t}(\x_0|\x_t)
    \right].
\end{equation*}
And the guidance by DPS is
\begin{equation*}
    \nabla_{\x_t}\gE_t^{\text{DPS}}(\x_t) = \E_{q_{0t}(\x_0|\x_t)}\Big[\left(\nabla\gE_0\left(\left(\E_{q_{0t}(\x_0|\x_t)}[\x_0]\right)\right)^\top \x_0\right) \nabla_{\x_t}\log q_{0t}(\x_0|\x_t)\Big].
\end{equation*}
Below we discuss the relationship between these three functions.

By taking Taylor expansion for $e^{\gE_t(\x_t)-\gE_0(\x_0)}\approx 1 + \gE_t(\x_t)-\gE_0(\x_0)$, we have
\begin{equation*}
\begin{aligned}
    \nabla_{\x_t}\gE_t(\x_t) &\approx \E_{q_{0t}(\x_0|\x_t)}\left[
        (-1 - \gE_t(\x_t) + \gE_0(\x_0))\nabla_{\x_t}\log q_{0t}(\x_0|\x_t)
    \right] \\
    &= \E_{q_{0t}(\x_0|\x_t)}\left[
        \gE_0(\x_0)\nabla_{\x_t}\log q_{0t}(\x_0|\x_t)
    \right]\\
    &= \nabla_{\x_t}\gE_t^{\text{MSE}}(\x_t),
\end{aligned}
\end{equation*}
where the penultimate equation follows the fact that $\E_{q_{0t}(\x_0|\x_t)}[\nabla_{\x_t}\log q_{0t}(\x_0|\x_t)] = 0$. Therefore, the guidance by $\nabla_{\x_t}\gE_t^\text{MSE}(\x_t)$ is a first-order approximation of the true energy guidance by assuming $\gE_t(\x_t) \approx \gE_0(\x_0)$, which only makes sense for $t$ near to $0$. However, as shown in Fig.~\ref{fig:intermediate_energy}, for $t$ near to $T$, $\gE_t$ is quite different from $\gE_0$, so guided sampling by $\nabla_{\x_t}\gE_t^{\text{MSE}}(\x_t)$ may have large guidance errors near to $T$, which can explain why MSE is worse than CEP, especially for large $\beta$.

By further taking Taylor expansion for $\gE_0(\x_0)$ at $\E_{q_{0t}(\x_0|\x_t)}[\x_0]$, we have
\begin{equation*}
\gE_0(\x_0)
\approx
\gE_0\left(\E_{q_{0t}(\x_0|\x_t)}\left[\x_0\right]\right)
+ \left(\nabla\gE_0\left(\E_{q_{0t}(\x_0|\x_t)}[\x_0]\right)\right)^\top \left(\x_0 - \E_{q_{0t}(\x_0|\x_t)}[\x_0]\right)
\end{equation*}
Then we can further approximate $\nabla_{\x_t}\gE_t^{\text{MSE}}(\x_t)$ by
\begin{equation*}
\begin{aligned}
    \nabla_{\x_t}\gE_t^{\text{MSE}}(\x_t) &\approx \E_{q_{0t}(\x_0|\x_t)}\Big[\left(\nabla\gE_0\left(\left(\E_{q_{0t}(\x_0|\x_t)}[\x_0]\right)\right)^\top \x_0\right) \nabla_{\x_t}\log q_{0t}(\x_0|\x_t)\Big] \\
    &\quad + \left(\left(\gE_0\E_{q_{0t}(\x_0|\x_t)}\left[\x_0\right]\right) - \nabla\gE_0\left(\left(\E_{q_{0t}(\x_0|\x_t)}[\x_0]\right)\right)^\top \left(\E_{q_{0t}(\x_0|\x_t)}[\x_0]\right)\right)
    \E_{q_{0t}(\x_0|\x_t)}[\nabla_{\x_t}\log q_{0t}(\x_0|\x_t)] \\
    &= \E_{q_{0t}(\x_0|\x_t)}\Big[\left(\nabla\gE_0\left(\left(\E_{q_{0t}(\x_0|\x_t)}[\x_0]\right)\right)^\top \x_0\right) \nabla_{\x_t}\log q_{0t}(\x_0|\x_t)\Big] \\
    &= \nabla_{\x_t}\gE_t^{\text{DPS}}(\x_t),
\end{aligned}
\end{equation*}
where the penultimate equation follows the fact that $\E_{q_{0t}(\x_0|\x_t)}[\nabla_{\x_t}\log q_{0t}(\x_0|\x_t)] = 0$. Therefore, the guidance by $\nabla_{\x_t}\gE_t^\text{DPS}(\x_t)$ is a further first-order approximation of $\nabla_{\x_t}\gE_t^{\text{MSE}}(\x_t)$ by assuming $\x_0 \approx \E_{q_{0t}(\x_0|\x_t)}[\x_0]$, which also only makes sense for $t$ near to $0$. However, as $\nabla_{\x_t}\gE_t^{\text{MSE}}(\x_t)$ is also an approximation for the exact guidance $\nabla_{\x_t}\gE_t(\x_t)$, the difference between $\nabla_{\x_t}\gE_t^{\text{DPS}}(\x_t)$ and $\nabla_{\x_t}\gE_t(\x_t)$ may be rather large.

\paragraph{Additional Experiment Results.}
We further compare CEP, MSE, and DPS in both 2-D experiments and offline RL experiments. All the results show that empirically, the performance of CEP is significantly better than MSE, and MSE is significantly better than DPS. We refer to Appendix~\ref{appendix:toy_more} and Appendix~\ref{appendix:RL_baselines} for details.

\section{Relationship with Contrastive Learning}
\label{appendix:relationship_infonce}

Given a condition variable $c$, assume $(\x_0,c) \sim q_0(\x_0,c)$, and we learn the intermediate energy guidance by a neural network $f_\phi(\cdot,c,t):\R^d\rightarrow \R$ parameterized by $\phi$. In this section, we prove that for a special energy function ($\beta=1$ and $\gE(\x_0)=-\log q_0(c|\x_0)$), the objective of CEP in \eqref{Eq:infoNCE_energy} is equivalent to the traditional contrastive InfoNCE objective (for a fixed $t$).

\begin{theorem}
If $\beta=1$ and $\gE(\x_0)=-\log q_0(c|\x_0)$, the objective in \eqref{Eq:infoNCE_energy} with the sum over all possible $c$ is equivalent to
\begin{equation}
    \E_{t,\epsilonv^{(1:K)}}\E_{\prod_{i=1}^K q_0(\x_0^{(i)},c^{(i)})}\Bigg[
    -\sum_{i=1}^K \log \frac{e^{-f_\phi(\x_t^{(i)},c^{(i)},t)}}{\sum_{j=1}^K e^{-f_\phi(\x_t^{(j)},c^{(i)},t)}}
\Bigg],
\end{equation} 
\end{theorem}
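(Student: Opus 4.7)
\begin{proof-sketch}
The plan is a direct algebraic manipulation, substituting the specific energy form $\gE(\x_0,c) = -\log q_0(c|\x_0)$ with $\beta=1$ into the conditional CEP objective and then interchanging the order of summation/expectation. Starting from the version of \eqref{Eq:infoNCE_energy} augmented with the condition $c$ (as written in the appendix for CEP with conditioning variables, but summed over all $c$ rather than sampled from a fixed prior $q(c)$), the summand $e^{-\beta\gE(\x_0^{(i)},c)}$ becomes simply $q_0(c\mid \x_0^{(i)})$. This is the key observation that turns the ``soft energy label'' into a genuine probability.

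Next I would swap the sum over $c$ with the finite sum $\sum_{i=1}^K$ inside the expectation, so that for each index $i$ we are left with the quantity
\[
    \sum_c q_0(c\mid \x_0^{(i)})\, \log\frac{e^{-f_\phi(\x_t^{(i)},c,t)}}{\sum_{j=1}^K e^{-f_\phi(\x_t^{(j)},c,t)}}.
\]
This is precisely $\E_{q_0(c^{(i)}\mid\x_0^{(i)})}$ applied to the log-softmax term, where I introduce a fresh dummy variable $c^{(i)}$ tied to $\x_0^{(i)}$. Combining this conditional expectation with the outer product marginal $\prod_{i=1}^K q_0(\x_0^{(i)})$ via the identity $q_0(\x_0)q_0(c\mid \x_0)=q_0(\x_0,c)$ yields the product joint $\prod_{i=1}^K q_0(\x_0^{(i)}, c^{(i)})$ under which the $(\x_0^{(i)}, c^{(i)})$ pairs are i.i.d.\ draws, recovering exactly the claimed objective.

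There is no real obstacle here, just bookkeeping: one has to be careful that each $c^{(i)}$ is attached to its own $\x_0^{(i)}$ (so pairs are jointly sampled) while the denominator still ranges over all $\x_t^{(j)}$, $j=1,\dots,K$, with the same $c^{(i)}$ fixed. I would also note briefly that the identity is stated at the objective level, without any appeal to \cref{thrm:infoNCE_energy}; the latter is what subsequently guarantees that the gradient of the optimizer coincides with the desired guidance, thereby legitimizing calling this the ``contrastive'' counterpart to classifier guidance.
\end{proof-sketch}
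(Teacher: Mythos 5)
Your proposal is correct and follows essentially the same route as the paper's own proof in Appendix~\ref{appendix:relationship_infonce}: substitute $e^{-\beta\gE(\x_0^{(i)},c)}=q_0(c\mid\x_0^{(i)})$, swap the sum over $c$ with the sum over $i$, read off the inner sum as $\E_{q_0(c^{(i)}\mid\x_0^{(i)})}$, and absorb this conditional into the outer marginal to form the product joint $\prod_{i=1}^K q_0(\x_0^{(i)},c^{(i)})$. The bookkeeping you flag (each $c^{(i)}$ tied to its own $\x_0^{(i)}$ while the denominator ranges over all $j$ with $c^{(i)}$ fixed) is exactly the point the paper's derivation also relies on.
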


\begin{proof}
Firstly, for a fixed $c$, \eqref{Eq:infoNCE_energy} becomes
\begin{equation*}
    \min_\phi \E_{p(t)}\E_{q_0(\x_0^{(1:K)})}\E_{p(\epsilonv^{(1:K)})}\Bigg[ \\
    - \sum_{i=1}^K
    q_0(c|\x_0^{(i)})
    \log \frac{e^{-f_\phi(\x_t^{(i)},c,t)}}{\sum_{j=1}^K e^{-f_\phi(\x_t^{(j)},c,t)}}
\Bigg].
\end{equation*}
By taking the integral over $c$, it becomes
\begin{equation*}
\begin{split}
&\phantom{{}\Leftrightarrow{}}\min_\phi \E_{p(t)}\E_{q_0(\x_0^{(1:K)})}\E_{p(\epsilonv^{(1:K)})}\Bigg[
    - \sum_{c}\sum_{i=1}^K
    q_0(c|\x_0^{(i)})
    \log \frac{e^{-f_\phi(\x_t^{(i)},c,t)}}{\sum_{j=1}^K e^{-f_\phi(\x_t^{(j)},c,t)}}
\Bigg] \\
&\Leftrightarrow \min_\phi \E_{p(t)}\E_{q_0(\x_0^{(1:K)})}\E_{p(\epsilonv^{(1:K)})}\Bigg[
    - \sum_{i=1}^K\sum_{c}
    q_0(c|\x_0^{(i)})
    \log \frac{e^{-f_\phi(\x_t^{(i)},c,t)}}{\sum_{j=1}^K e^{-f_\phi(\x_t^{(j)},c,t)}}
\Bigg] \\
&\Leftrightarrow \min_\phi \E_{p(t)}\E_{q_0(\x_0^{(1:K)})}\E_{p(\epsilonv^{(1:K)})}\Bigg[
    - \sum_{i=1}^K \E_{q(c|\x_0^{(i)})} \left[
    \log \frac{e^{-f_\phi(\x_t^{(i)},c,t)}}{\sum_{j=1}^K e^{-f_\phi(\x_t^{(j)},c,t)}}
    \right]
\Bigg] \\
&\Leftrightarrow \min_\phi \E_{p(t)}\E_{q_0(\x_0^{(1:K)})}\E_{p(\epsilonv^{(1:K)})}\Bigg[
    - \sum_{i=1}^K \E_{q(c^{(i)}|\x_0^{(i)})} \left[
    \log \frac{e^{-f_\phi(\x_t^{(i)},c^{(i)},t)}}{\sum_{j=1}^K e^{-f_\phi(\x_t^{(j)},c^{(i)},t)}}
    \right]
\Bigg] \\
&\Leftrightarrow \min_\phi \E_{p(t)}\E_{q_0(\x_0^{(1:K)}, c^{(1:K)})}\E_{p(\epsilonv^{(1:K)})}\Bigg[
    - \sum_{i=1}^K
    \log \frac{e^{-f_\phi(\x_t^{(i)},c^{(i)},t)}}{\sum_{j=1}^K e^{-f_\phi(\x_t^{(j)},c^{(i)},t)}}
\Bigg]
\end{split}
\end{equation*}

\end{proof}

Note that the above objective assumes that we can draw samples from $q_0(\x_0, c)$, which means that we can draw samples from $p_0(\x_0)=q_0(\x_0|c)$. However, for general energy functions, such an assumption is hard to ensure and we can not draw samples from $p_0(\x_0)$ but only $q_0(\x_0)$. Therefore, CEP can be understood as a generalized version of the traditional contrastive objective and is suitable for both conditional sampling and energy-guided sampling in diffusion models.

\section{Relationship between Inverse Temperature and Guidance Scale}
A widely-used trick in guided sampling is to introduce an additional hyperparameter $s$ called ``guidance scale''~\citep{diffusion_beat_gan}, and replace the score function for $p_t$ by $\tilde p_t$ during the guided sampling procedure as following:
\begin{equation}
\label{Eqn:appendix_guidance_scale}
    \nabla_{\x_t}\log \tilde p_t(\x_t) \coloneqq \nabla_{\x_t}\log q_t(\x_t) - s \cdot \nabla_{\x_t}\gE_t(\x_t)
\end{equation}
According to \eqref{Eqn:appendix_exact_guidance}, the above equation is equivalent to
\begin{equation*}
\begin{split}
    \nabla_{\x_t}\log \tilde p_t(\x_t) &= \nabla_{\x_t}\log q_t(\x_t)
    - s\cdot \E_{q_{0t}(\x_0|\x_t)}\left[-e^{\gE_t(\x_t)-\gE_0(\x_0)}\nabla_{\x_t}\log q_{0t}(\x_0|\x_t)\right] \\
    &= \nabla_{\x_t}\log q_t(\x_t)
    + s\cdot \frac{\E_{q_{0t}(\x_0|\x_t)}\left[e^{-\beta\gE(\x_0)}\nabla_{\x_t}\log q_{0t}(\x_0|\x_t)\right]}{\E_{q_{0t}(\x_0|\x_t))}\left[e^{-\beta\gE(\x_0)}\right]}.
\end{split}
\end{equation*}
Note that the influences of changing $s$ and changing $\beta$ are different: changing $s$ will linearly affect the guidance strength, but changing $\beta$ will affect the guidance w.r.t. the exponential term. Thus, $s$ and $\beta$ are two different hyperparameters and we can tune them together.

Empirically, we find that in simple 2-D experiments, only changing $\beta$ is enough to guarantee convergence to our desired distribution $p(\x)$. However, in complex tasks such as image synthesis and reinforcement learning, by only varying $\beta$ we cannot guarantee a good performance, so a larger $s$ is somewhat necessary. Our hypothesis for explaining this is that the neural network used is not expressive enough, such that when $\beta$ increases and the task becomes more complex, the model capacity approaches saturation so we must rely on a training-free method in order to amplify the guidance effect (\cref{sec:exp_details_rl}).

\section{E-MSE Energy Guidance}
\label{sec:emse}
In this section, we propose an alternative way of CEP to learn energy guidance. In order to ensure an exact converged point, we add an exponential activation in the original MSE-based training objective (\eqref{Eq:MSE_qt_objective}), which we name E-MSE:
\begin{equation}
\label{Eq:emse}
    \min_{\phi} \E_{t, \x_0, \x_t} \left[
    \|\exp(f_\phi(\x_t, t)) - \exp(\beta \gE(\x_0))\|_2^2
\right]
\end{equation}
Such a training objective could also guarantee convergence to the exact energy guidance for sampling from $p(\x)$. As visualized in \cref{fig:toy_appendix}, we find that both CEP and E-MSE guidance could generate more accurate data samples in 2-D settings compared to the MSE-based guidance method, especially when $\beta$ is large. However, one main disadvantage of E-MSE guidance is that \eqref{Eq:emse} is not numerically stable due to the isolated exponential term. In particular, in RL settings where the energy function $\gE$ is no longer normalized in the range $[0, 1]$, but is defined by a potentially noisy neural network $Q_\psi$, E-MSE guidance generally tends to underperform CEP and even MSE guidance (\cref{tab:Ablation_RL}).  
\newpage
\section{Experiment details for offline RL}
\label{Sec:rl_detail}

\subsection{Pseudocode of QGPO}
\label{Sec:pseudocode}
\begin{algorithm}
    \caption{Q-Guided Policy Optimization (QGPO) for Offline RL}
    \label{alg:offline_rl}
    \begin{algorithmic}
        \STATE Initialize the diffusion behavior model $\epsilonv_{\theta}$, the action evaluation model $Q_\psi$ and the intermediate energy model $f_\phi$
        \STATE \small{\color{gray} \textit{// Training the behavior model}}
        \FOR {each gradient step}
        \STATE Sample $B$ data points $\left(\vs, \va\right)$ from $\mathcal{D}^\mu$, $B$ Gaussian noises $\epsilon$ from $\mathcal{N}(0,\bm{I})$ and $B$ time $t$ from $\mathcal{U}(0, T)$
        \STATE Perturb $\va$ according to $\va_t := \alpha_t \va + \sigma_t \bm{\epsilon}$
        \STATE Update $\theta \leftarrow \theta - \lambda_\theta\nabla_{\theta} \sum[\| \epsilonv_\theta(\va_t|\vs,t) - \bm{\epsilon} \|_2^2]$
        \ENDFOR
        \STATE \small{\color{gray} \textit{// Generating the support action set}}
        \FOR {each state $\vs$ in $\mathcal{D}^\mu$}
        \STATE Sample $K$ support actions ${\hat \va}^{(1:K)}$ from $\mu_\theta(\cdot|\vs)$ and store them as $\mathcal{D}^{\mu_\theta}(\vs)$
        \ENDFOR
        \STATE \small{\color{gray} \textit{// Training the action evaluation model and the energy guidance model}}
        \FOR {each gradient step}
        \STATE Sample $B$ data points $\left({\vs, \va, r, \vs'}\right)$ from $\mathcal{D}^\mu$, $B$ Gaussian noises $\epsilon$ from $\mathcal{N}(0,\bm{I})$ and $B$ time $t$ from $\mathcal{U}(0, T)$
        \STATE Retrieve support action sets ${\hat \va}^{(1:K)}$ and ${\hat \va}^{\prime(1:K)}$ respectively from $\mathcal{D}^\mu(\vs)$ and $\mathcal{D}^\mu(\vs')$
        \STATE Calculate the target Q-value $\mathcal{T}^\pi Q_\psi(\vs, \va) = r(\vs, \va) + \gamma\sum_{\hat\va'} \bigg[\frac{e^{\beta_Q Q_\psi(\vs',\hat\va')}}{\sum_{\hat\va'} e^{\beta_Q Q_\psi(\vs',\hat\va')}}Q_\psi(\vs',\hat\va')\bigg]$ and detach gradient
        \STATE Update $\psi \leftarrow \psi - \lambda_\psi\nabla_{\psi} \sum[\| Q_\psi(\vs, \va) - \mathcal{T}^\pi Q_\psi(\vs, \va) \|_2^2]$
        \STATE Perturb $\hat\va$ according to $\hat\va_t := \alpha_t \hat\va + \sigma_t \bm{\epsilon}$
        \STATE Update $\phi \leftarrow \phi + \lambda_\phi\nabla_{\phi} \sum_i[ \frac{e^{\beta Q_\psi(\vs, \hat \va_i)}}{\Sigma_j e^{\beta Q_\psi(\vs, \hat \va_j)}}  \log\frac{e^{f_\phi(\hat \va_{i, t}| \vs, t)}}{\Sigma_j e^{f_\phi(\hat \va_{j, t}| \vs, t)}} ]$
        \ENDFOR
    \end{algorithmic}
\end{algorithm}

\subsection{Experiment Details of QGPO}
\label{sec:exp_details_rl}
For offline RL benchmarks, our methods require training three neural networks in total for each task, namely a diffusion-based behavior model $s_\theta$, an action evaluation model $Q_\psi$, and an energy guidance model $f_\phi$. We first provide experiment details in training every component described above and then discuss how to combine these components for policy evaluation.

\paragraph{Training behavior model.}
The architecture and training method of our behavior model completely follow \citet{sfbc}. The network architecture resembles U-Nets, but with
spatial convolutions changed to dense connections, such that it is compatible with a vectorized data representation. A similar network architecture was also adopted by \citet{diffuser} and \citet{csbc}. We train the behavior model for 600k gradient steps, using the Adam optimizer with a learning rate of 1e-4. The batchsize is 4096. As for the data perturbation method, we adopt the default VPSDE setting in \citep{song2020score} with a linear schedule. The $\alpha_t$ and $\sigma_t$ in \eqref{Eq:forward_p} are:
\begin{equation}
    \alpha_t = -\frac{\beta_1 - \beta_0}{4}t^2 - \frac{\beta_0}{2}t, \quad \sigma_t = \sqrt{1 - \alpha_t^2}, \quad \beta_0=0.1,\ \beta_1=20.
\end{equation}

\paragraph{Training action evaluation model.}
The action evaluation model is a 3-layer MLP with 256 hidden units and ReLU activations. We train the action evaluation model for 500k gradient steps, using the Adam optimizer with a learning rate of 3e-4. The batchsize is 256. We set $\beta_Q=1$ and $K=16$ for MuJoCo Locomotion tasks, $\beta_Q=20$ and $K=32$ for AntMaze tasks in \eqref{Eq:one_step_bellman_softmax}. 
Before training, we follow \citet{iql} and normalize task rewards. We also use standard tricks such as soft updates \citep{ddpg} and double networks \citep{td3} to stabilize Q-learning.

\paragraph{Training energy guidance model.}
The energy guidance model is a 4-layer MLP with 256 hidden units and SiLU activations. We train it for 1M gradient steps, using the Adam optimizer with a learning rate of 3e-4. The batchsize is 256. The size support action set is the same as the one used in training the action evaluation model ($K=16$ for MuJoCo Locomotion and $K=32$ for AntMaze), though we set $\beta=3$ in all tasks.

\paragraph{Evaluation.}
We run all experiments over 5 independent trials and report their averaged performance. For each trial, we additionally collect the evaluation score averaged on multiple test seeds (10 for MuJoCo Locomotion and 100 for AntMaze). In order to sample from the learned diffusion-based policy, we adopt a recent advance in diffusion sampling, namely DPM-Solver \citep{lu2022dpm}. We use the second-order sampler and report performance scores at a diffusion step of 15. We conduct an ablation study of diffusion steps (\cref{fig:ablation_diffusion_steps}) and find that a diffusion step of 10 could already yield equally good performance, while a diffusion step of 5 only slightly underperforms 15 diffusion steps. Note that we only ablated diffusion step numbers in evaluation. The support action set is still generated using a diffusion step of 15. We also adopted a widely used trick in guided diffusion sampling \citep{diffusion_beat_gan, ho2021classifier}, which tunes a hyperparameter $s$ to amplify the guidance effect during sampling, by multiplying energy guidance in \eqref{Eq:rl_energy_guidance} with the guidance scale $s$. To choose the optimal energy guidance scales $s$ for action sampling, we sweep over $[1.0, 2.0, 3.0, 5.0, 8.0, 10.0]$ for MuJoCo Locomotion and $[1.0, 1.5, 2.0, 2.5, 3.0, 4.0]$ for AntMaze tasks during evaluation (\cref{fig:ablation_gradient_scale}). Gradient scales used for reported scores are listed in \cref{tbl:gradient_scales_rl}.

\begin{figure}[h]
\centering
\includegraphics[width=\linewidth]{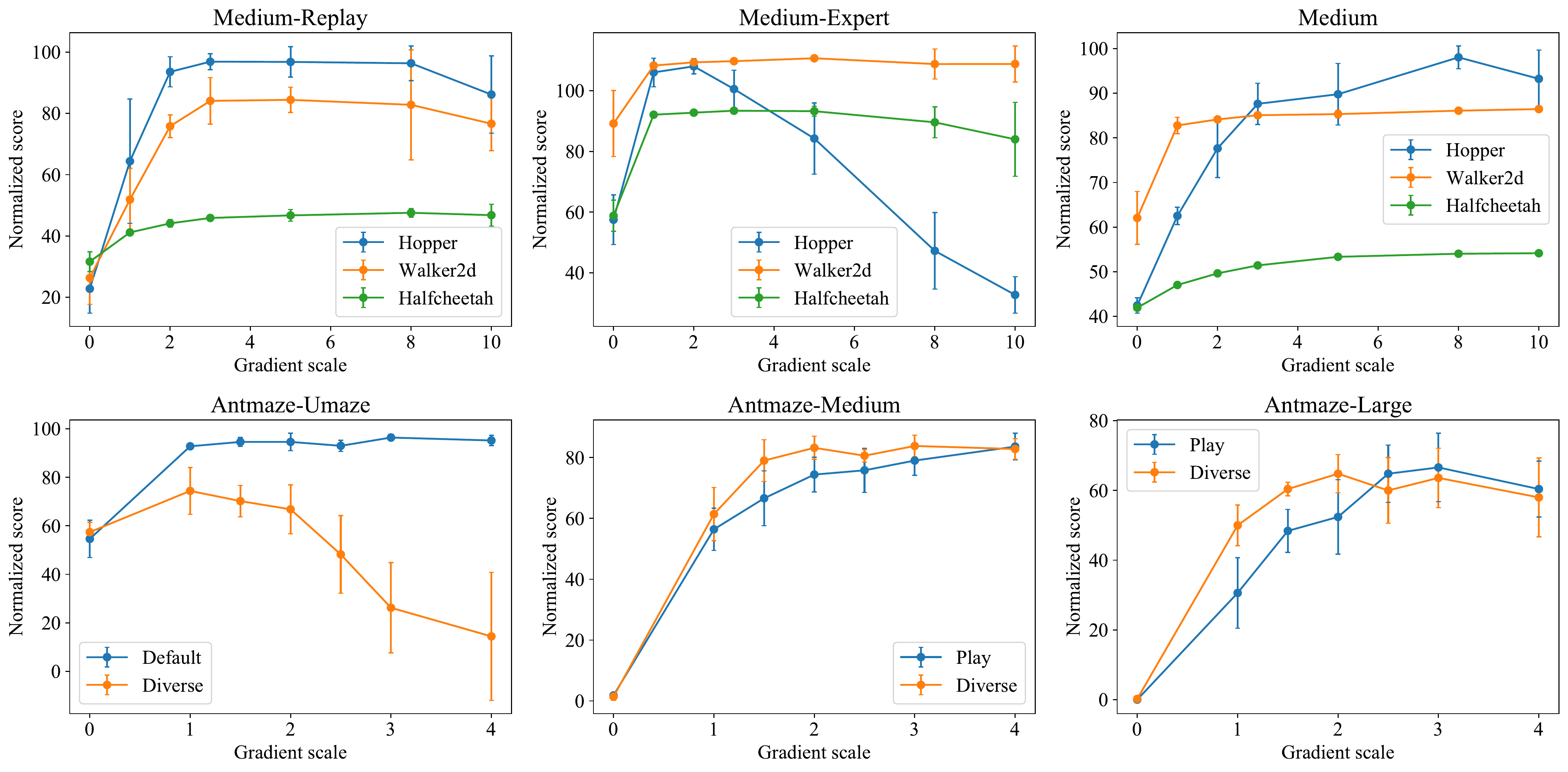}
\caption{
Ablation of gradient scales in D4RL benchmark.
}
\label{fig:ablation_gradient_scale}
\end{figure}

\begin{figure}[h]
\centering
\includegraphics[width=0.5\linewidth]{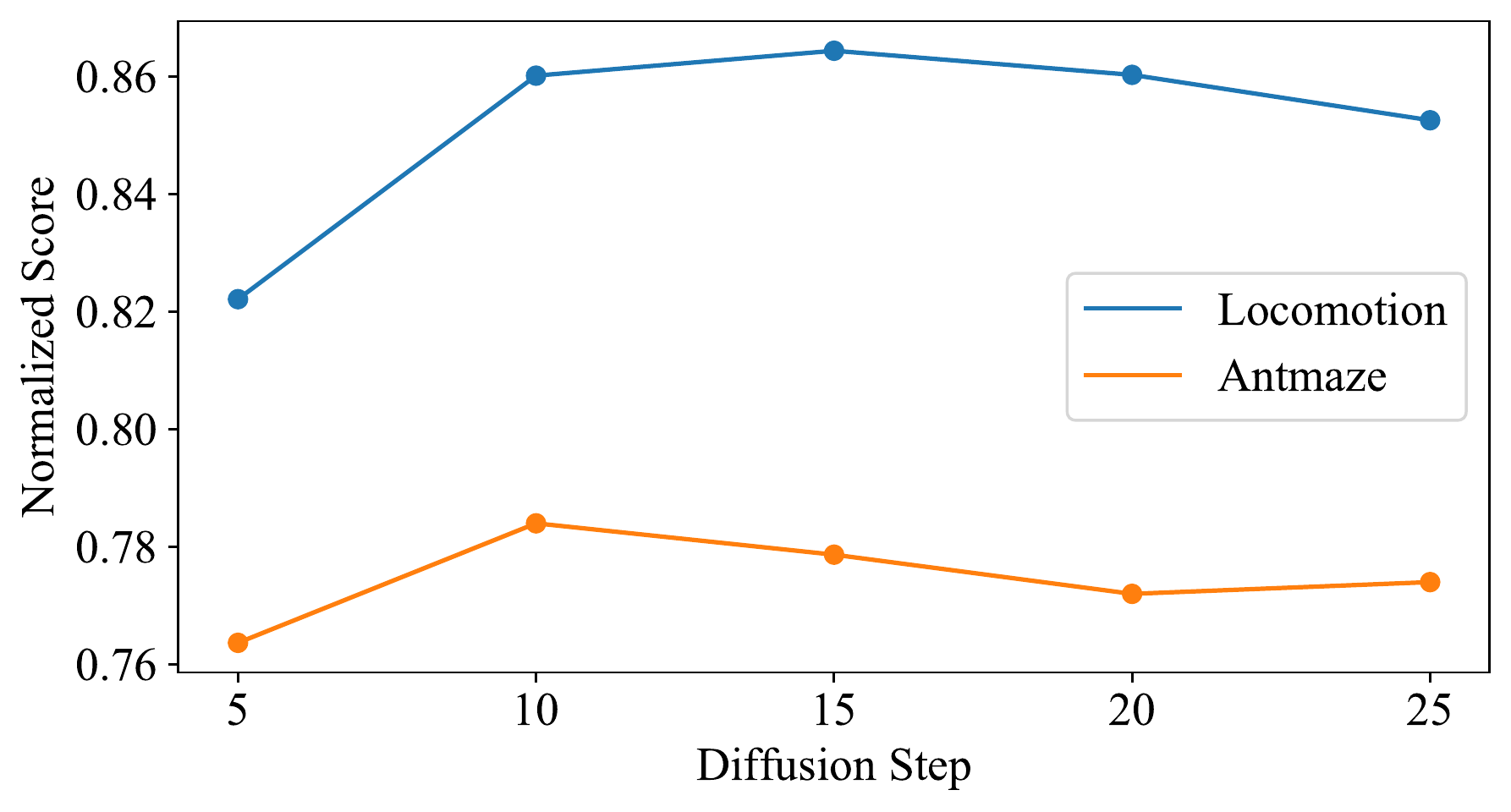}

\caption{
Ablation of diffusion steps in evaluation.
}
\label{fig:ablation_diffusion_steps}
\end{figure}

\begin{table}[h]
\centering
\begin{tabular}{c|c|c|c}
\hline
\multirow{2}*{Locomotion-Medium-Expert} & Walker2d & Halfcheetah & Hopper\\
~ & 5.0 & 3.0 & 2.0\\
\hline
\multirow{2}*{Locomotion-Medium} & Walker2d & Halfcheetah & Hopper\\
~ & 10.0 & 10.0 & 8.0\\
\hline
\multirow{2}*{Locomotion-Medium-Replay} & Walker2d & Halfcheetah & Hopper\\
~ & 5.0 & 8.0 & 3.0\\
\hline    
\multirow{2}*{AntMaze-Fixed} & Umaze & Medium & Large\\
~ & 3.0 & 4.0 & 3.0\\
\hline
\multirow{2}*{AntMaze-Diverse} & Umaze & Medium & Large\\
~ & 1.0 & 3.0 & 2.0\\
\hline
\end{tabular}
\caption{Guidance scale $s$ used across different tasks}
\label{tbl:gradient_scales_rl}
\end{table}

\subsection{Experiment Details for Other Baselines}
\label{appendix:RL_baselines}
\paragraph{Diffusion-QL.}
We use the official implementation of Diffusion-QL ( \url{https://github.com/Zhendong-Wang/Diffusion-Policies-for-Offline-RL}) and default hyperparameter settings to rerun all experiments for Diffusion-QL to ensure a consistent evaluation metric. We follow \citet{d4rl} and report averaged scores across 5 independent trails at the end of training for each task, instead of the max performance scores during training as in \citet{dql}. In addition, we notice that Diffusion-QL adopts a resampling technique for evaluation. Specifically, during evaluation, the learned policy first generates 50 different action candidates and then selects one action with the highest Q-value for execution. We empirically found that such a technique is important for good performance in MuJoCo Locomotion tasks. However, this technique makes it hard to reflect the true quality of sampled actions before resampling and is computationally expensive, we thus additionally conduct an ablation study in which we remove the resampling procedure in evaluation, and use a single action candidate (Diffusion-QL@1 in \cref{tbl:rl_results} and \cref{sec:training_curves}). 

\paragraph{Ablations of CEP guidance.}
We study three variations of our proposed guidance method. Specifically, an MSE-based guidance method as described in \eqref{Eq:MSE_qt_objective} (similarly used in \citet{diffuser}), an E-MSE guidance method as described in \eqref{Eq:emse}, and a resampling-based method following \citet{sfbc}. For MSE and E-MSE guidance, we only change the training objective of the energy guidance model while leaving other hyperparameters untouched. For the resampling-based method, the energy guidance model is not required. During evaluation, at every decision step we first sample 50 random action candidates from the behavior policy model $\mu_\theta(\va|\vs)$ and then select one action with the highest predicted Q-value via $Q_\psi$ for execution.

\begin{table}[]
    \centering
    \begin{tabular}{lcccc}
        \toprule
        \textbf{Environment} & \textbf{MSE}& \textbf{E-MSE} & \textbf{RS} & \textbf{CEP} \\
        \midrule
        \textbf{Locomotion} & $68.0$  & $58.1$ & $76.9$  & $86.6$\\ 
        \textbf{AntMaze} & $46.4$ & $24.5$ & $63.0$ & $78.3$ \\ 
        \bottomrule
    \end{tabular}
    \caption{Ablation studies of different energy guidance methods and the resampling technique.}
    \label{tab:Ablation_RL}
\end{table}

\section{Experiment Details for 2-D experiments}
To perform unconditional energy-guided sampling in low-dimensional data space. Our method requires training two neural networks independently, specifically, one generative diffusion model and one energy guidance model. In contrast with offline RL, the energy function at $t=0$ data space is pre-defined (as illustrated in \cref{fig:toy_appendix}) and does not require training. A total of 1M datapoints is generated and used as the training set. Each datapoint contains a two-dimensional data sample $\x$ and a float number $e$ representing its energy.

\paragraph{Training diffusion generative models.}
The generative diffusion model is a 5-layer MLP with hidden sizes of $[512, 512, 512, 512, 256]$ and SiLU activations. The network is trained for 750 epochs, using the Adam optimizer with a learning rate of 1e-4. The batchsize is 1 with $K=4096$. As for the data perturbation method, we adopt the default VPSDE setting in \citep{song2020score} with a linear schedule. The $\alpha_t$ and $\sigma_t$ in \eqref{Eq:forward_p} are:
\begin{equation}
    \alpha_t = -\frac{\beta_1 - \beta_0}{4}t^2 - \frac{\beta_0}{2}t, \quad \sigma_t = \sqrt{1 - \alpha_t^2}, \quad \beta_0=0.1,\ \beta_1=20.
\end{equation}

\paragraph{Training energy guidance models.}
The energy guidance model is a 4-layer MLP with 512 hidden units and SiLU activations. We train it for 750 epochs, using the Adam optimizer with a learning rate of 3e-4. The batchsize is also 4096.

\paragraph{Guided sampling.}
 In order to perform guided sampling, we adopt a recent advance in diffusion sampling, namely DPM-Solver \citep{lu2022dpm}. We use the second-order sampler and a diffusion step of 25. We fix the guidance scale $s$ to 1 in all experiments.

\section{Experiment Details for Image Synthesis}
We completely follow \citet{diffusion_beat_gan} to train and evaluate our energy-guided diffusion models for image synthesis, without any kind of hyperparameter tuning or network architecture changes. For the generative diffusion prior, we use the pretrained ImageNet models released at \url{https://github.com/openai/guided-diffusion}. For the energy guidance model, we adopt the same U-Net architecture as \citet{diffusion_beat_gan} but rewrite the training objective to \eqref{Eq:contarstive_condition_loss} for conditional image synthesis, and to \eqref{Eq:infoNCE_energy_normalized} for energy guided image synthesis. Our ImageNet 128 energy guidance model is trained for 300k gradient steps with a batch size of 256 (distributed on 8 GPUs). The ImageNet 256 energy guidance model is trained for 500k steps. During sampling, we use 250 diffusion steps by default except when we use a DDIM \citep{song2020denoising} sampler with 25 steps. 

For energy-guided image synthesis tasks, we set $\beta=50$. A penalty is added to the energy function at defined $t=0$ data space in \eqref{Eq:color_q0} when the image's average saturation is lower than $0.1$. This penalization mainly intends to avoid generating images with low saturation (overly bright), such that image samples guided by different color models are more distinguishable. We respectively let $h_\text{tar}$ be $0$ (red), $2\pi/3$ (green) and $4\pi/3$ (blue) for the three guidance models.

\newpage
\section{More 2-D Results}
\label{appendix:toy_more}

\begin{figure}[!hbt]
\centering
\begin{minipage}{0.03\linewidth}
\rotatebox{90}{\scriptsize{Groundtruth}}
\end{minipage}
\begin{minipage}{0.46\linewidth}
\includegraphics[width=\columnwidth]{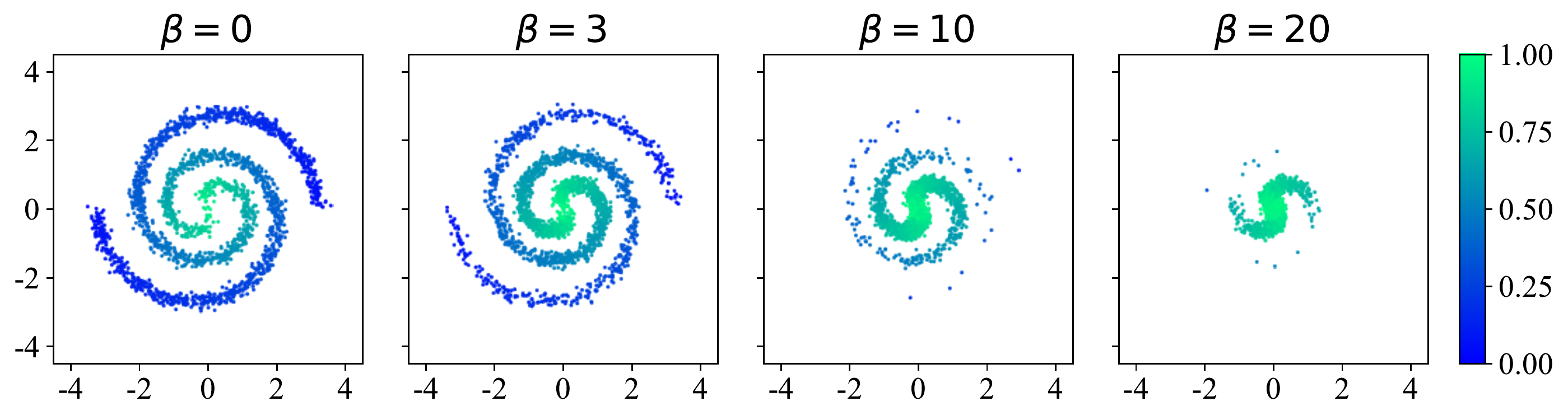}
\end{minipage}
\begin{minipage}{0.46\linewidth}
\includegraphics[width=\columnwidth]{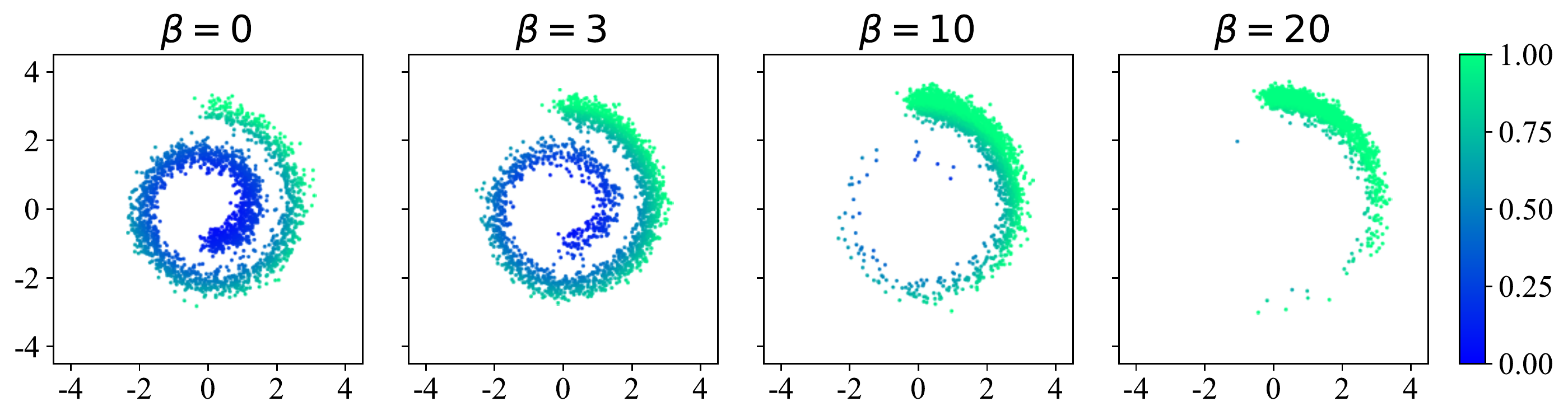}
\end{minipage}\\
\begin{minipage}{0.015\linewidth}
\rotatebox{90}{\scriptsize{DPS}}
\end{minipage}
\begin{minipage}{0.015\linewidth}
\rotatebox{90}{\scriptsize{(\citeauthor{chung2022diffusion}, etc.)}}
\end{minipage}
\begin{minipage}{0.46\linewidth}
\includegraphics[width=\columnwidth]{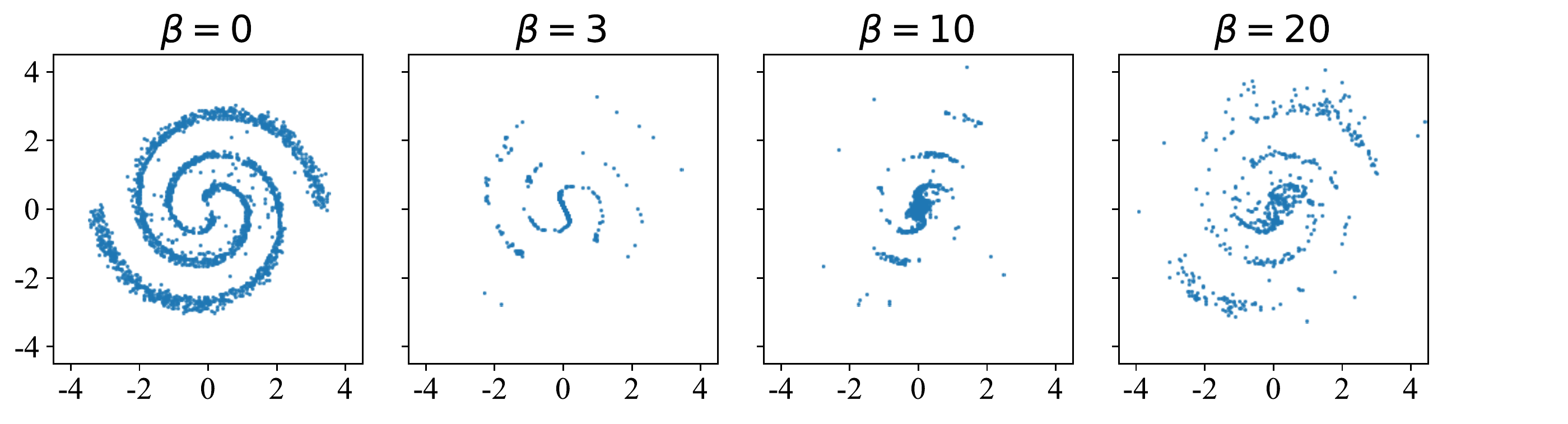}
\end{minipage}
\begin{minipage}{0.46\linewidth}
\includegraphics[width=\columnwidth]{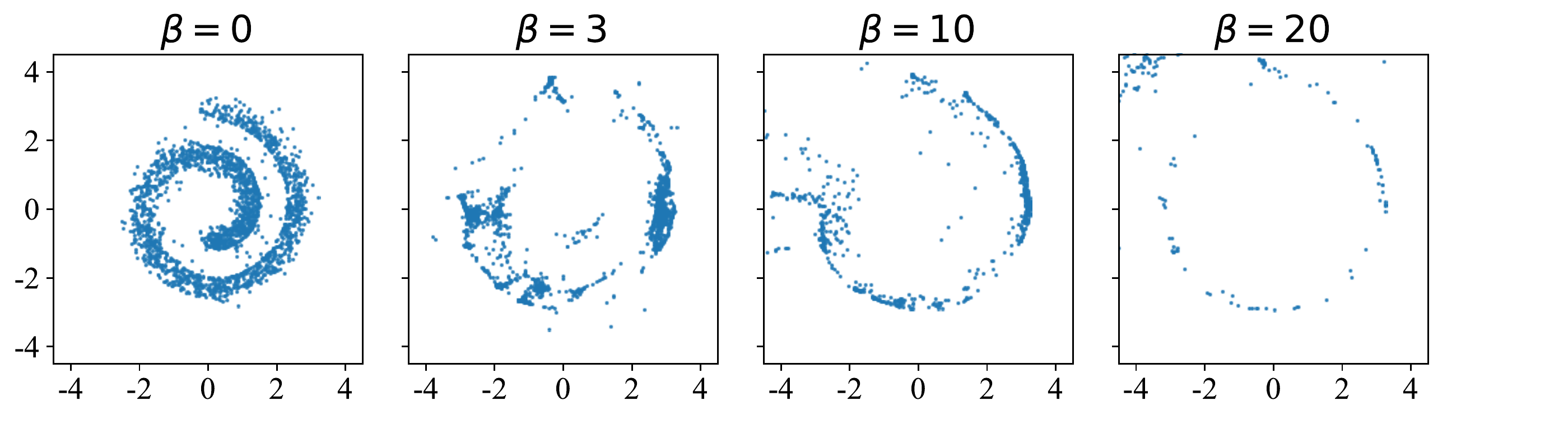}
\end{minipage}\\
\begin{minipage}{0.015\linewidth}
\rotatebox{90}{\scriptsize{MSE}}
\end{minipage}
\begin{minipage}{0.015\linewidth}
\rotatebox{90}{\scriptsize{(\citeauthor{diffuser}, etc.)}}
\end{minipage}
\begin{minipage}{0.46\linewidth}
\includegraphics[width=\columnwidth]{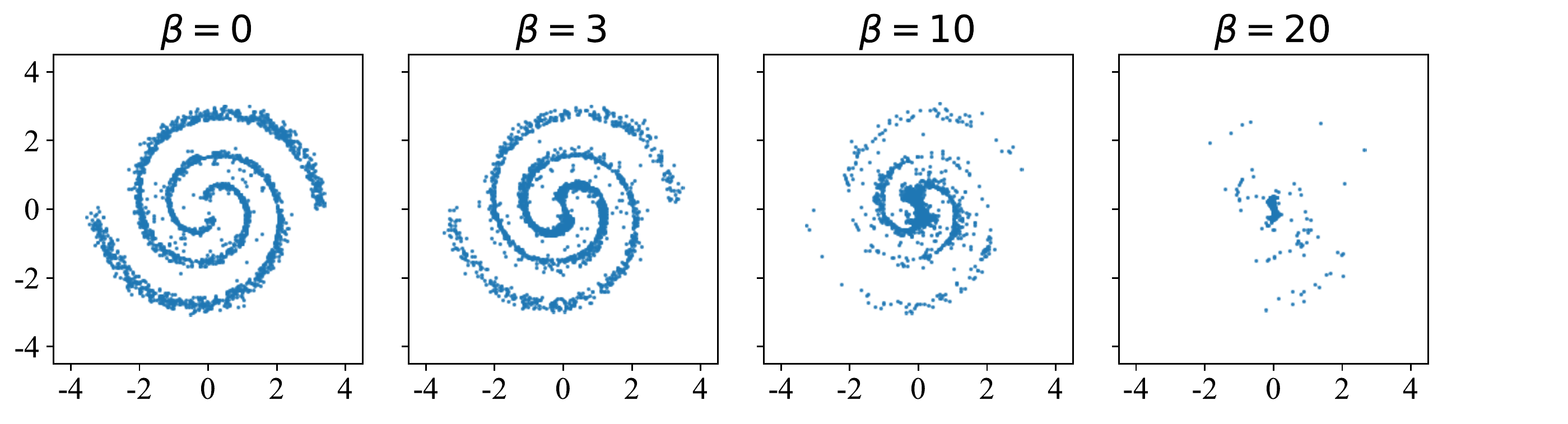}
\end{minipage}
\begin{minipage}{0.46\linewidth}
\includegraphics[width=\columnwidth]{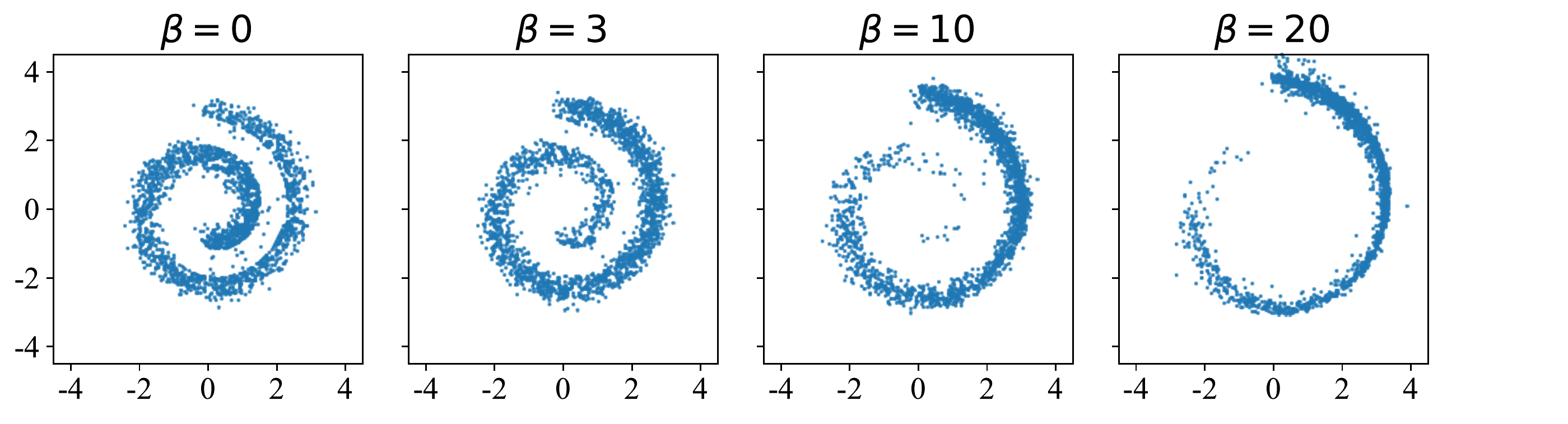}
\end{minipage}\\
\begin{minipage}{0.015\linewidth}
\rotatebox{90}{\scriptsize{E-MSE}}
\end{minipage}
\begin{minipage}{0.015\linewidth}
\rotatebox{90}{\scriptsize{(\textbf{ours})}}
\end{minipage}
\begin{minipage}{0.46\linewidth}
\includegraphics[width=\columnwidth]{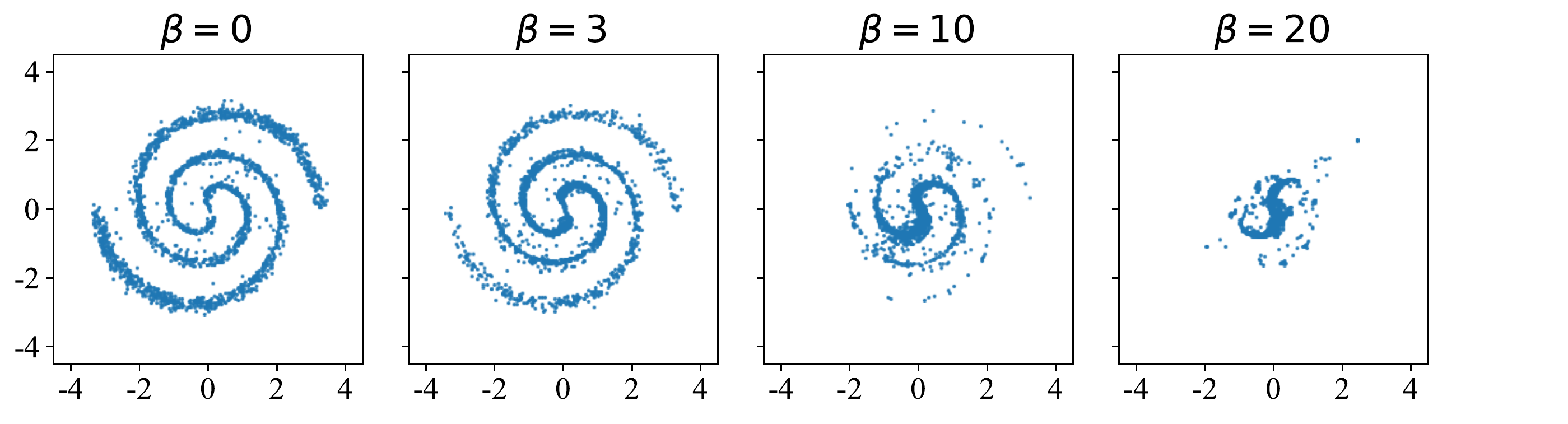}
\end{minipage}
\begin{minipage}{0.46\linewidth}
\includegraphics[width=\columnwidth]{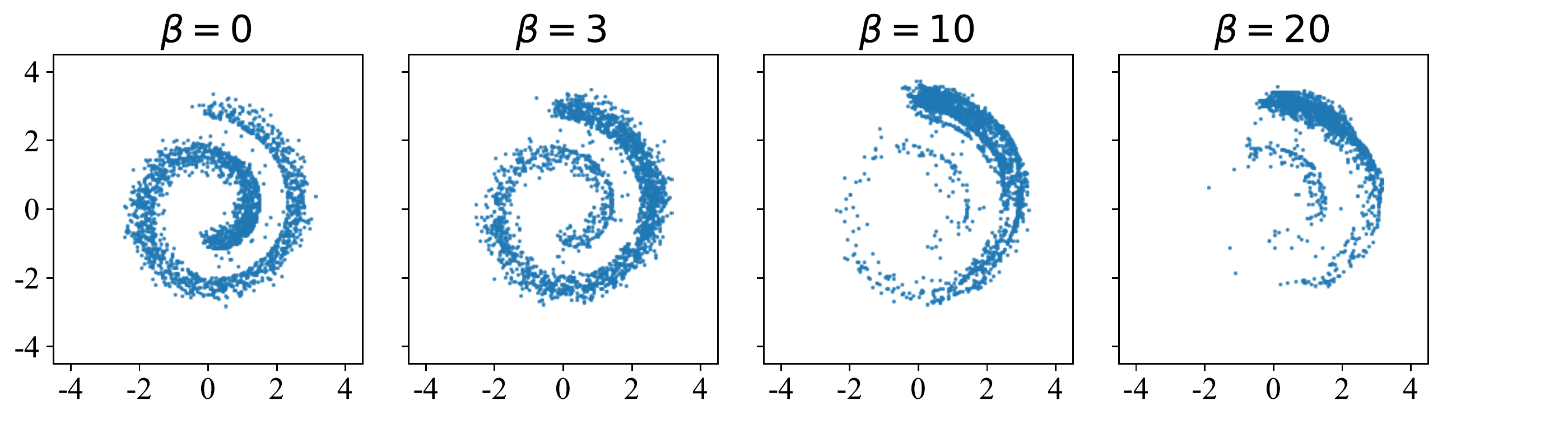}
\end{minipage}\\
\begin{minipage}{0.015\linewidth}
\rotatebox{90}{\scriptsize{CEP}}
\end{minipage}
\begin{minipage}{0.015\linewidth}
\rotatebox{90}{\scriptsize{(\textbf{ours})}}
\end{minipage}
\begin{minipage}{0.46\linewidth}
\includegraphics[width=\columnwidth]{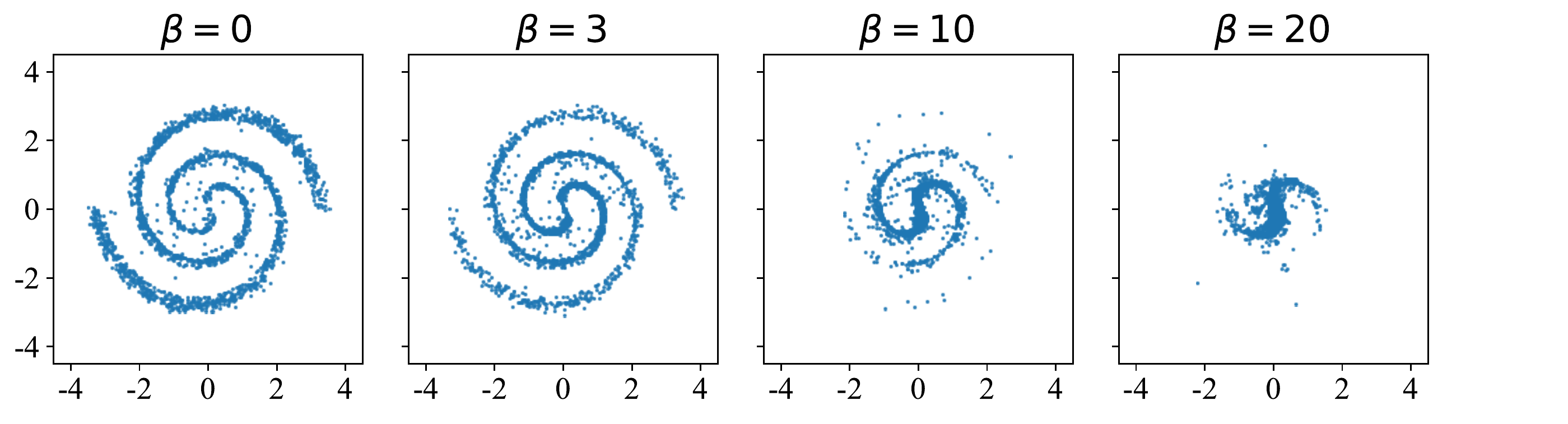}
\end{minipage}
\begin{minipage}{0.46\linewidth}
\includegraphics[width=\columnwidth]{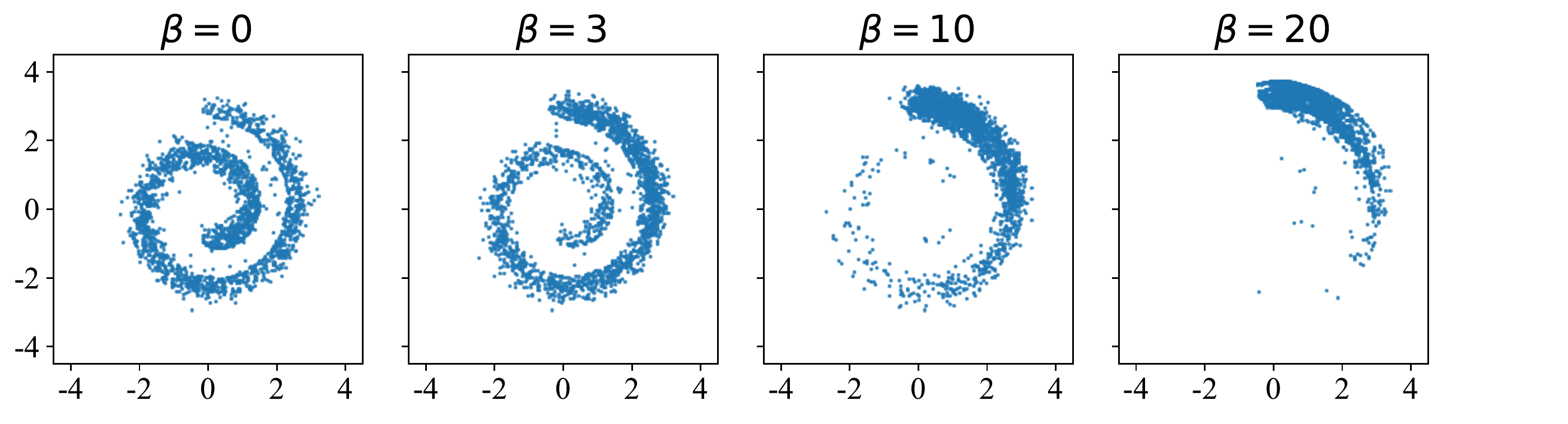}
\end{minipage}\\
\begin{minipage}{0.03\linewidth}
\rotatebox{90}{\scriptsize{Groundtruth}}
\end{minipage}
\begin{minipage}{0.46\linewidth}
\includegraphics[width=\columnwidth]{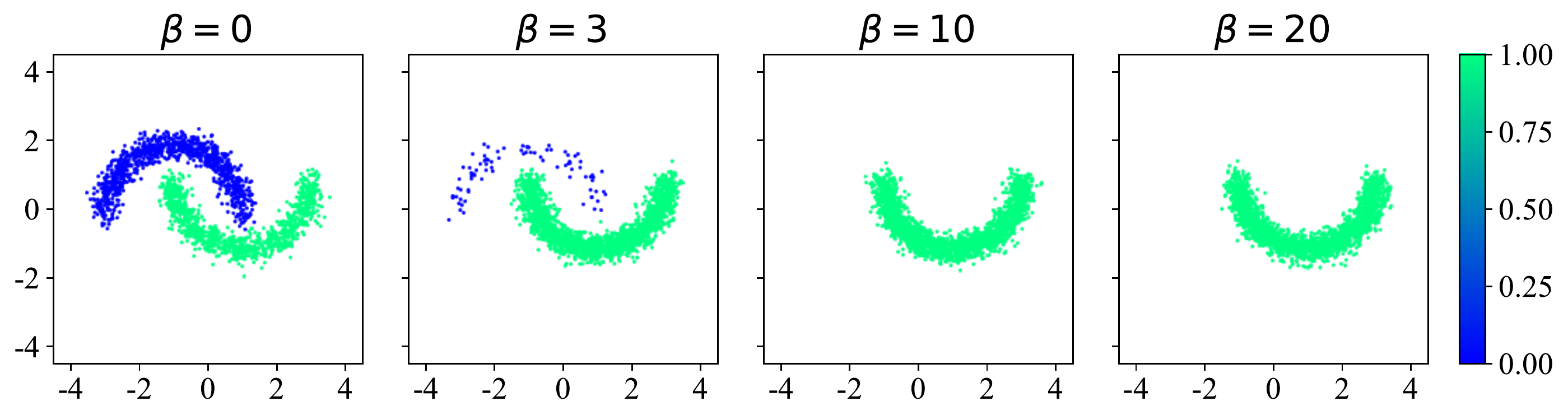}
\end{minipage}
\begin{minipage}{0.46\linewidth}
\includegraphics[width=\columnwidth]{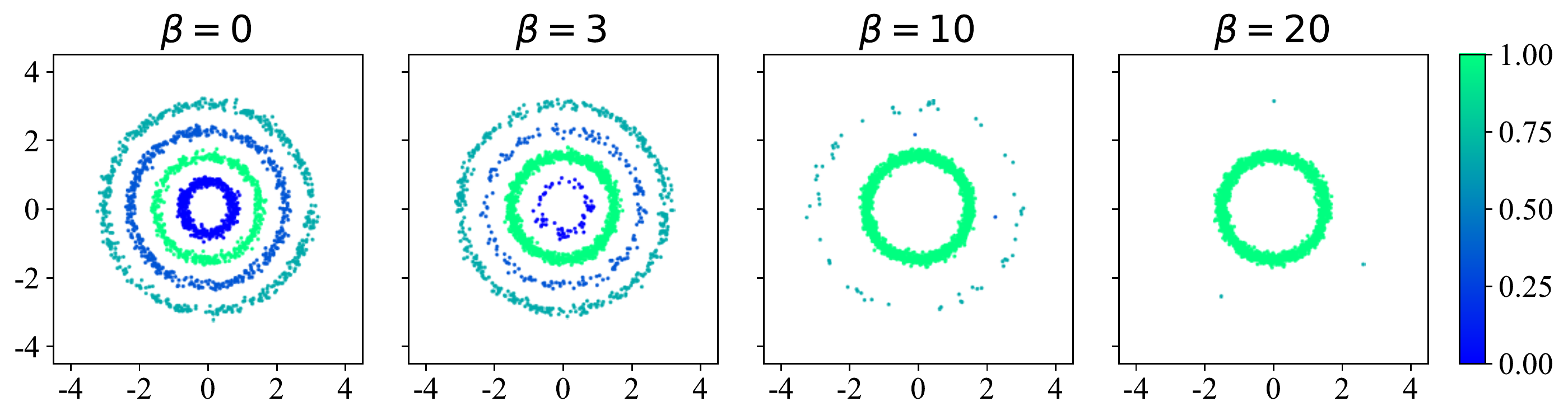}
\end{minipage}\\
\begin{minipage}{0.015\linewidth}
\rotatebox{90}{\scriptsize{DPS}}
\end{minipage}
\begin{minipage}{0.015\linewidth}
\rotatebox{90}{\scriptsize{(\citeauthor{chung2022diffusion}, etc.)}}
\end{minipage}
\begin{minipage}{0.46\linewidth}
\includegraphics[width=\columnwidth]{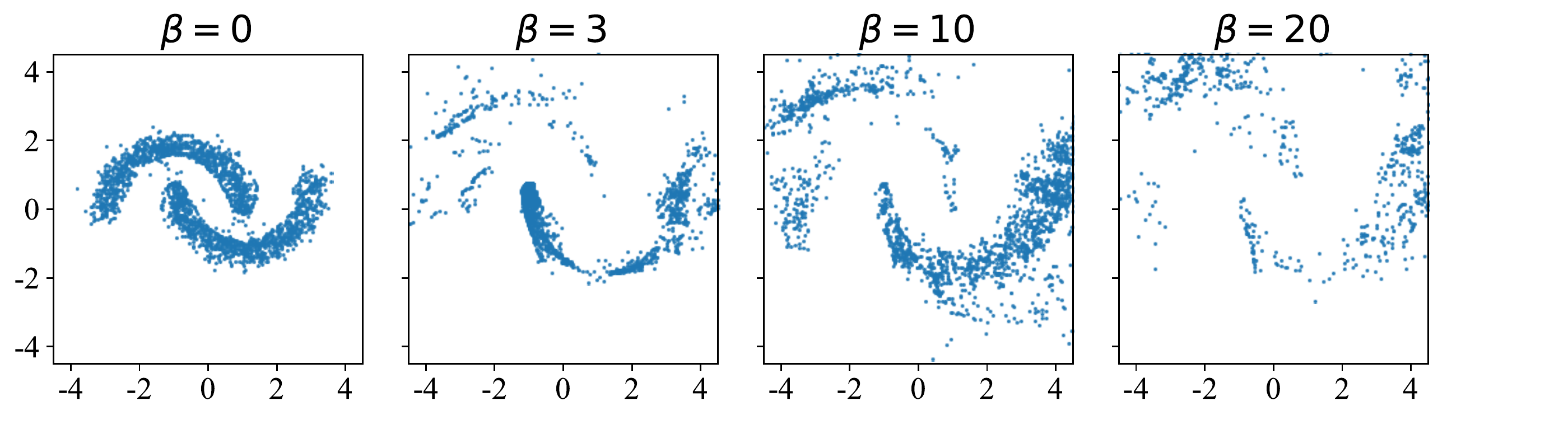}
\end{minipage}
\begin{minipage}{0.46\linewidth}
\includegraphics[width=\columnwidth]{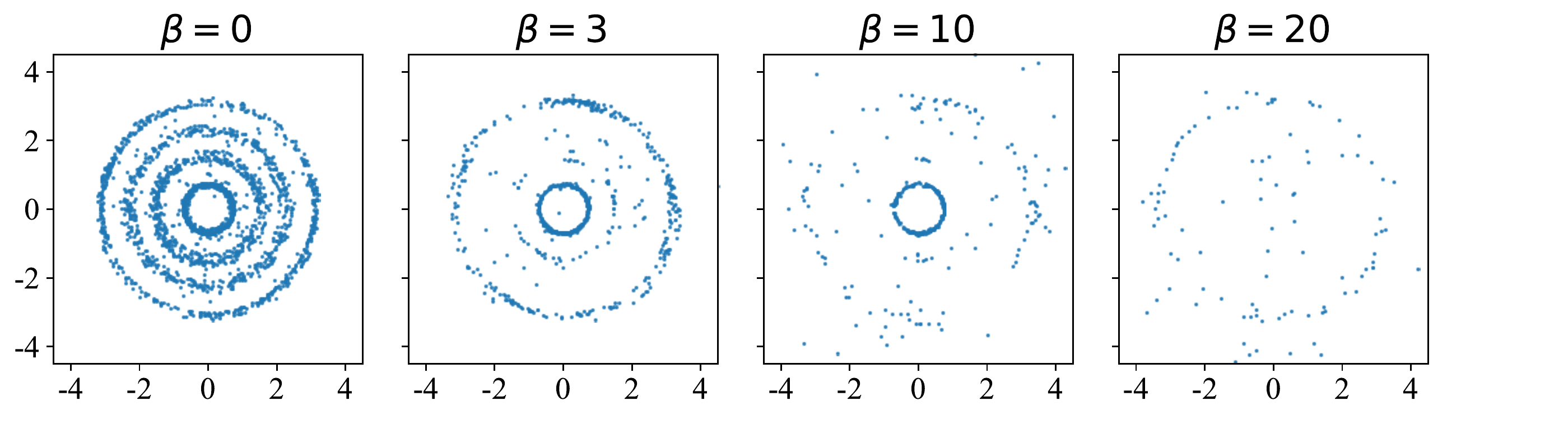}
\end{minipage}\\
\begin{minipage}{0.015\linewidth}
\rotatebox{90}{\scriptsize{MSE}}
\end{minipage}
\begin{minipage}{0.015\linewidth}
\rotatebox{90}{\scriptsize{(\citeauthor{diffuser}, etc.)}}
\end{minipage}
\begin{minipage}{0.46\linewidth}
\includegraphics[width=\columnwidth]{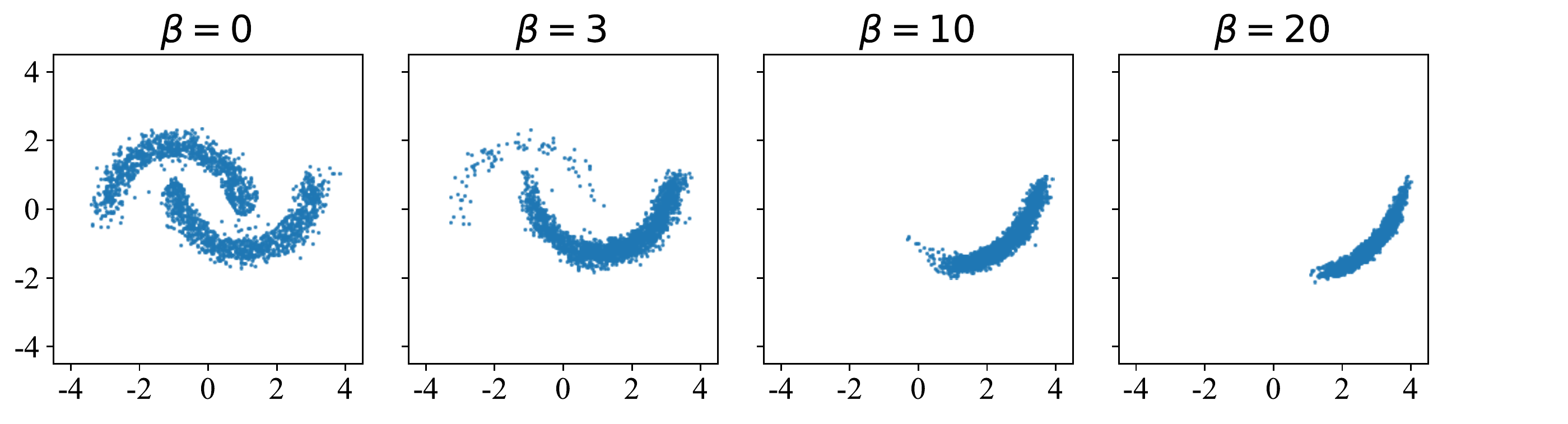}
\end{minipage}
\begin{minipage}{0.46\linewidth}
\includegraphics[width=\columnwidth]{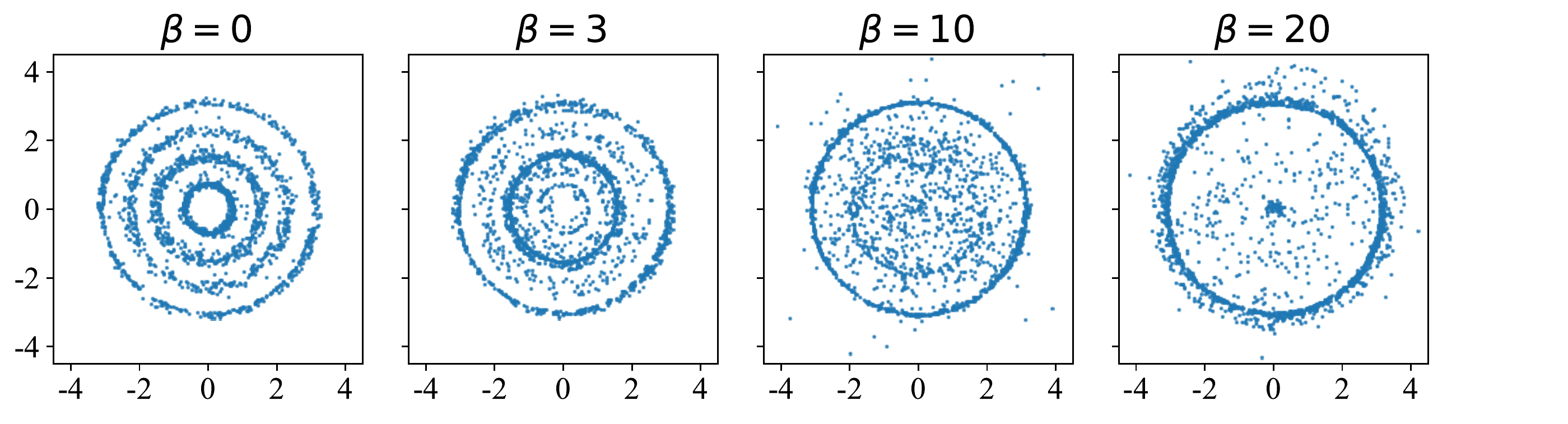}
\end{minipage}\\
\begin{minipage}{0.015\linewidth}
\rotatebox{90}{\scriptsize{E-MSE}}
\end{minipage}
\begin{minipage}{0.015\linewidth}
\rotatebox{90}{\scriptsize{(\textbf{ours})}}
\end{minipage}
\begin{minipage}{0.46\linewidth}
\includegraphics[width=\columnwidth]{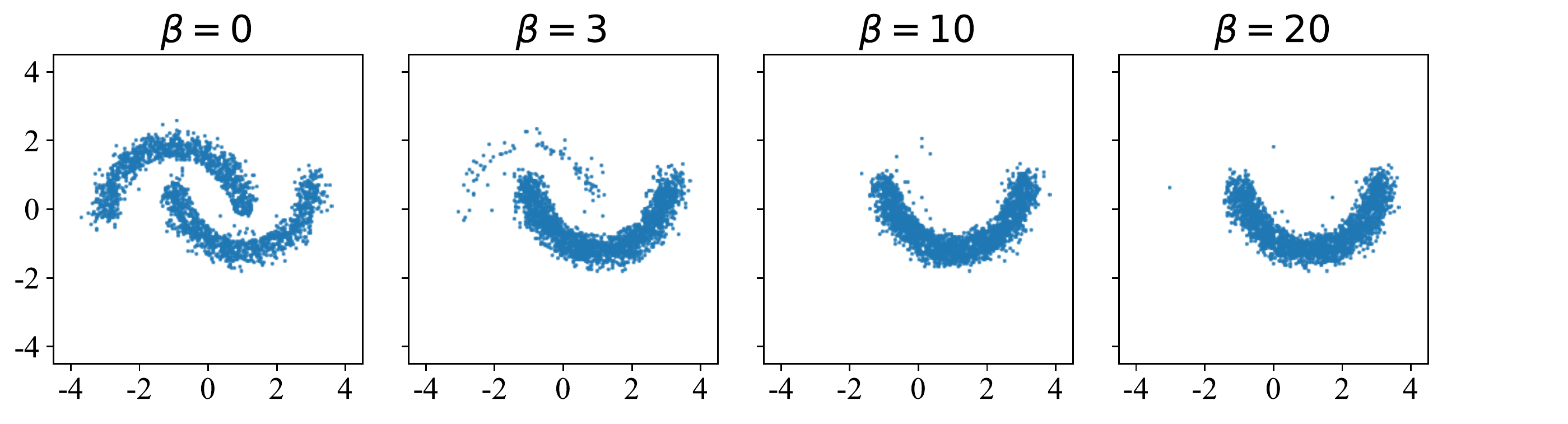}
\end{minipage}
\begin{minipage}{0.46\linewidth}
\includegraphics[width=\columnwidth]{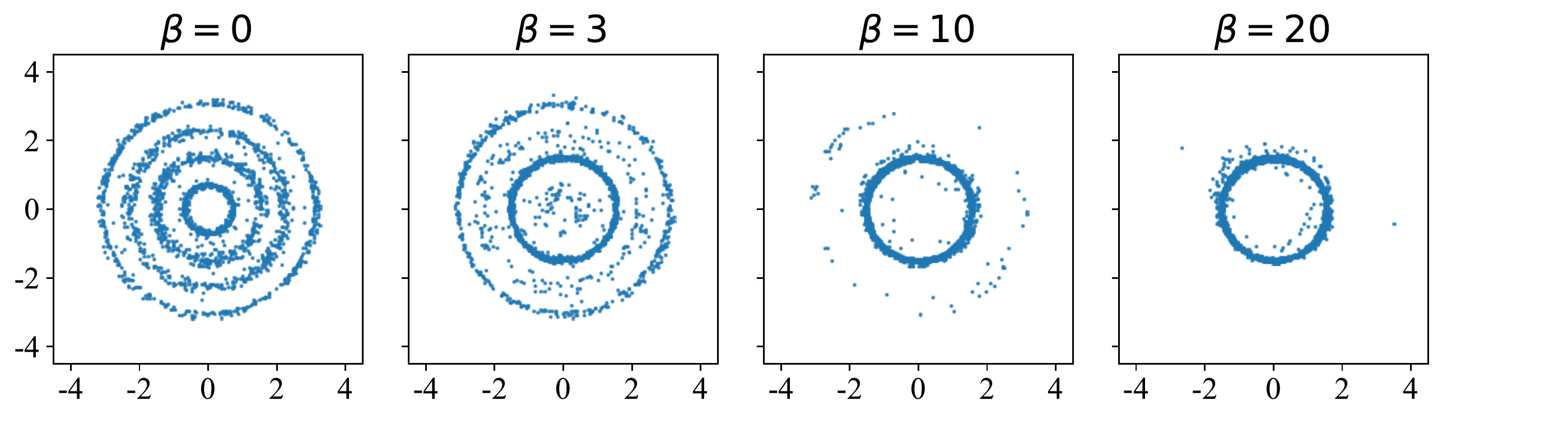}
\end{minipage}\\
\begin{minipage}{0.015\linewidth}
\rotatebox{90}{\scriptsize{CEP}}
\end{minipage}
\begin{minipage}{0.015\linewidth}
\rotatebox{90}{\scriptsize{(\textbf{ours})}}
\end{minipage}
\begin{minipage}{0.46\linewidth}
\includegraphics[width=\columnwidth]{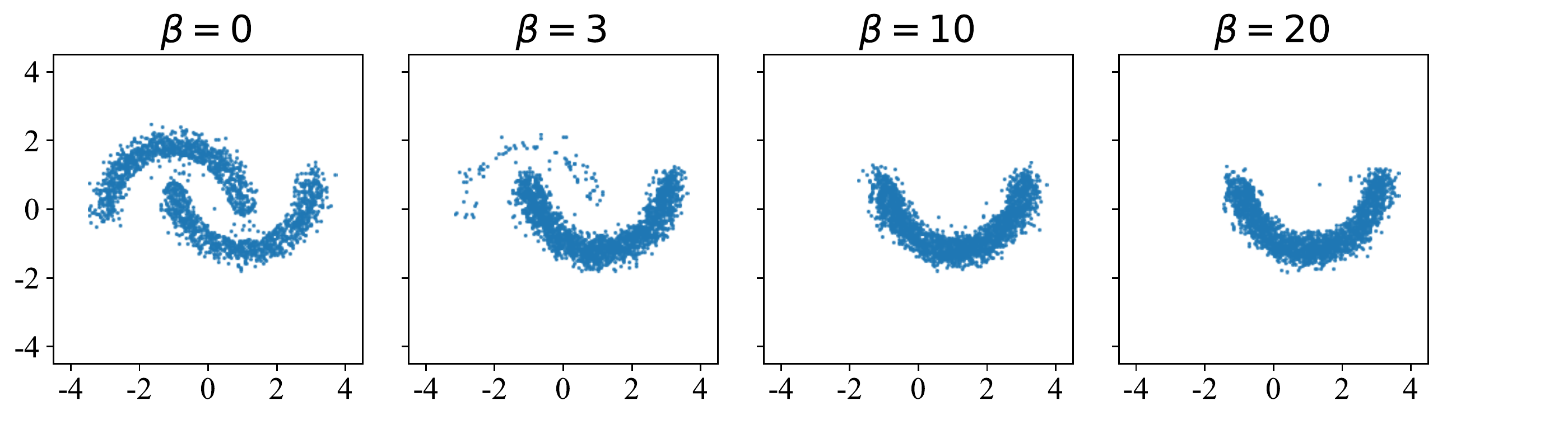}
\end{minipage}
\begin{minipage}{0.46\linewidth}
\includegraphics[width=\columnwidth]{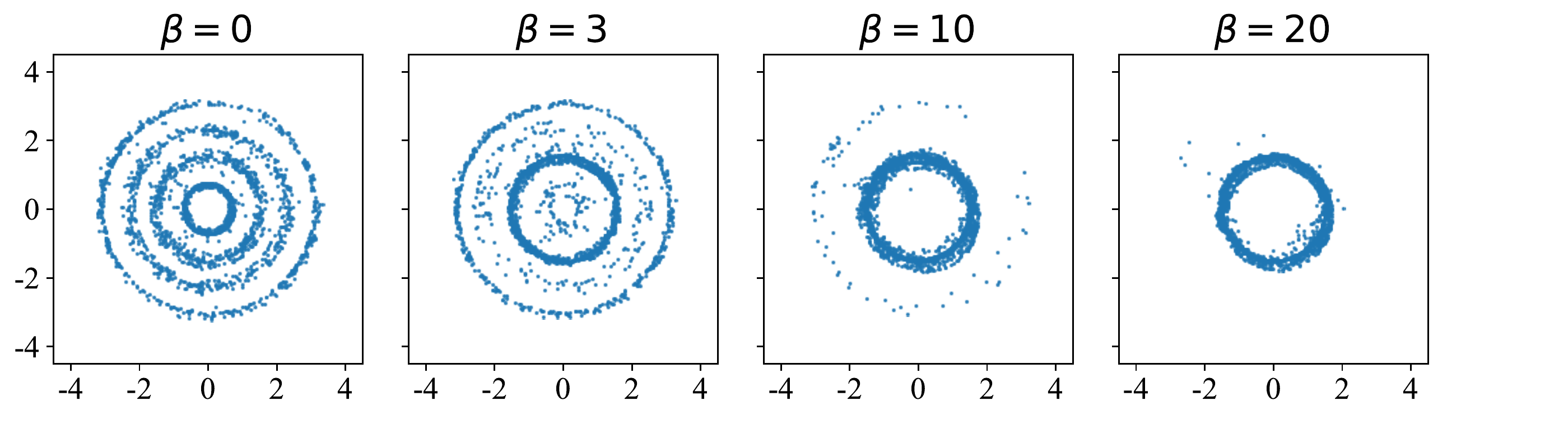}
\end{minipage}\\
\caption{
Scatter plots of different energy guidance methods in 2-D experiments. E-MSE is another method we propose as a variant of MSE guidance (\cref{sec:emse}).
}
\vspace{-8mm}
\label{fig:toy_appendix}
\end{figure}

\newpage
\begin{figure}[!hbt]
\centering
\begin{minipage}{0.02\linewidth}
\rotatebox{90}{\scriptsize{8gaussians}}
\end{minipage}
\begin{minipage}{0.97\linewidth}
\includegraphics[width=\columnwidth]{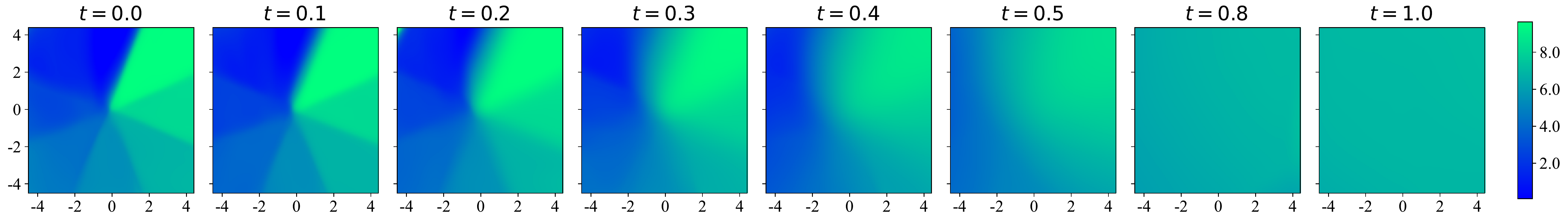}
\end{minipage}\\
\begin{minipage}{0.02\linewidth}
\rotatebox{90}{\scriptsize{swissroll}}
\end{minipage}
\begin{minipage}{0.97\linewidth}
\includegraphics[width=\columnwidth]{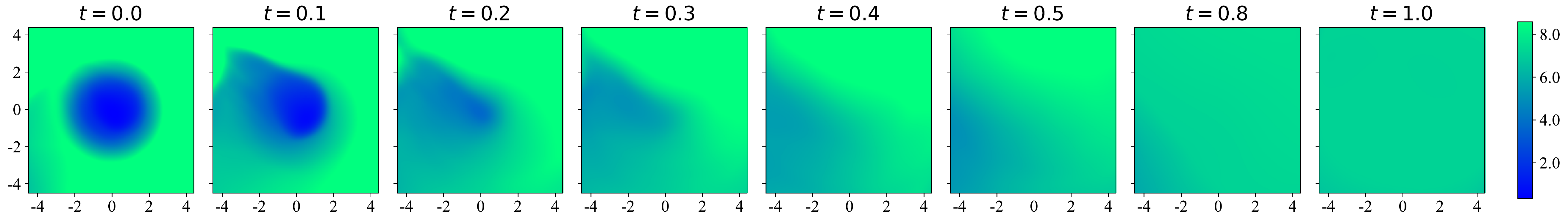}
\end{minipage}\\
\begin{minipage}{0.02\linewidth}
\rotatebox{90}{\scriptsize{2spirals}}
\end{minipage}
\begin{minipage}{0.97\linewidth}
\includegraphics[width=\columnwidth]{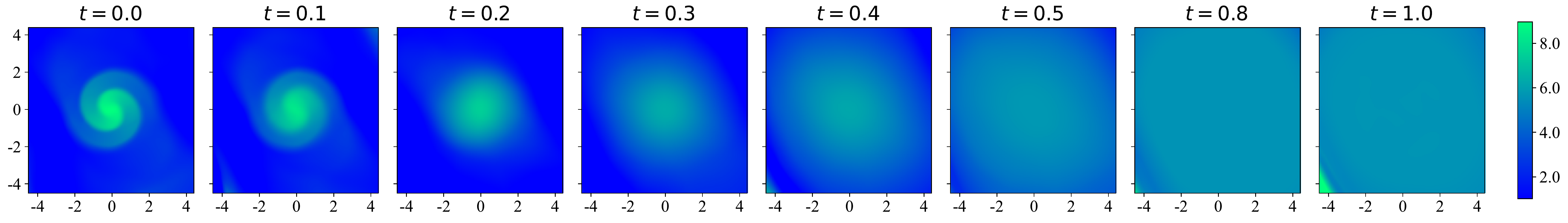}
\end{minipage}\\
\begin{minipage}{0.02\linewidth}
\rotatebox{90}{\scriptsize{moons}}
\end{minipage}
\begin{minipage}{0.97\linewidth}
\includegraphics[width=\columnwidth]{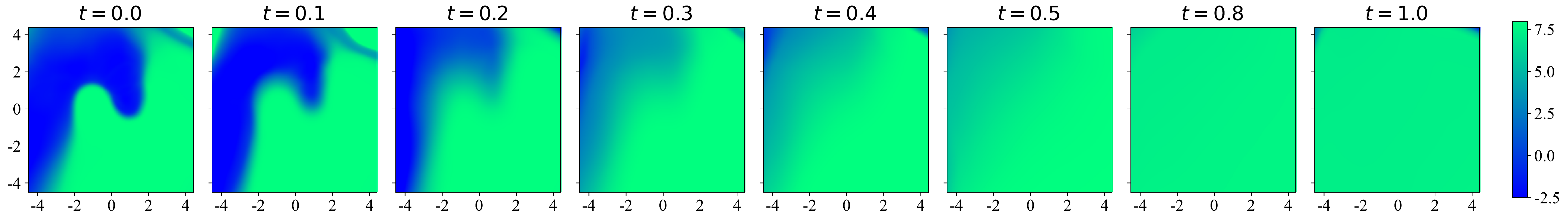}
\end{minipage}\\
\begin{minipage}{0.02\linewidth}
\rotatebox{90}{\scriptsize{rings}}
\end{minipage}
\begin{minipage}{0.97\linewidth}
\includegraphics[width=\columnwidth]{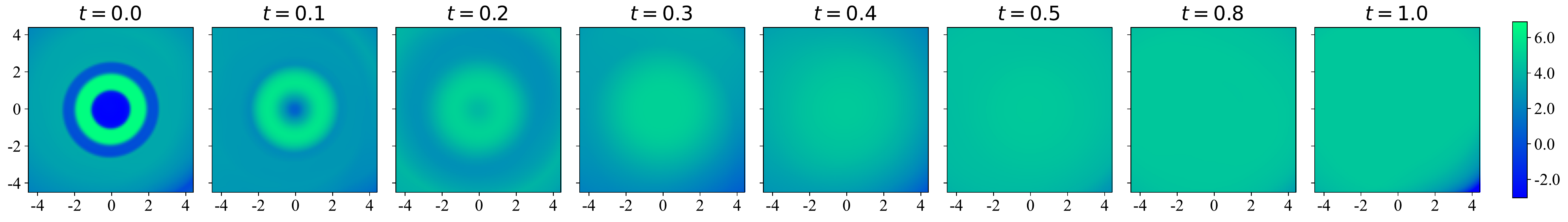}
\end{minipage}\\
\caption{Visualization of the contrastively learned intermediate energy model when $\beta=10$.}
\vspace{-8mm}
\label{fig:toy_appendix_qt}
\end{figure}

\newpage

\newpage
\section{Training Curves for Offline Reinforcement Learning}
\label{sec:training_curves}
\begin{figure}[!h]
\vspace{-2mm}
\centering
\includegraphics[width = .33\linewidth]{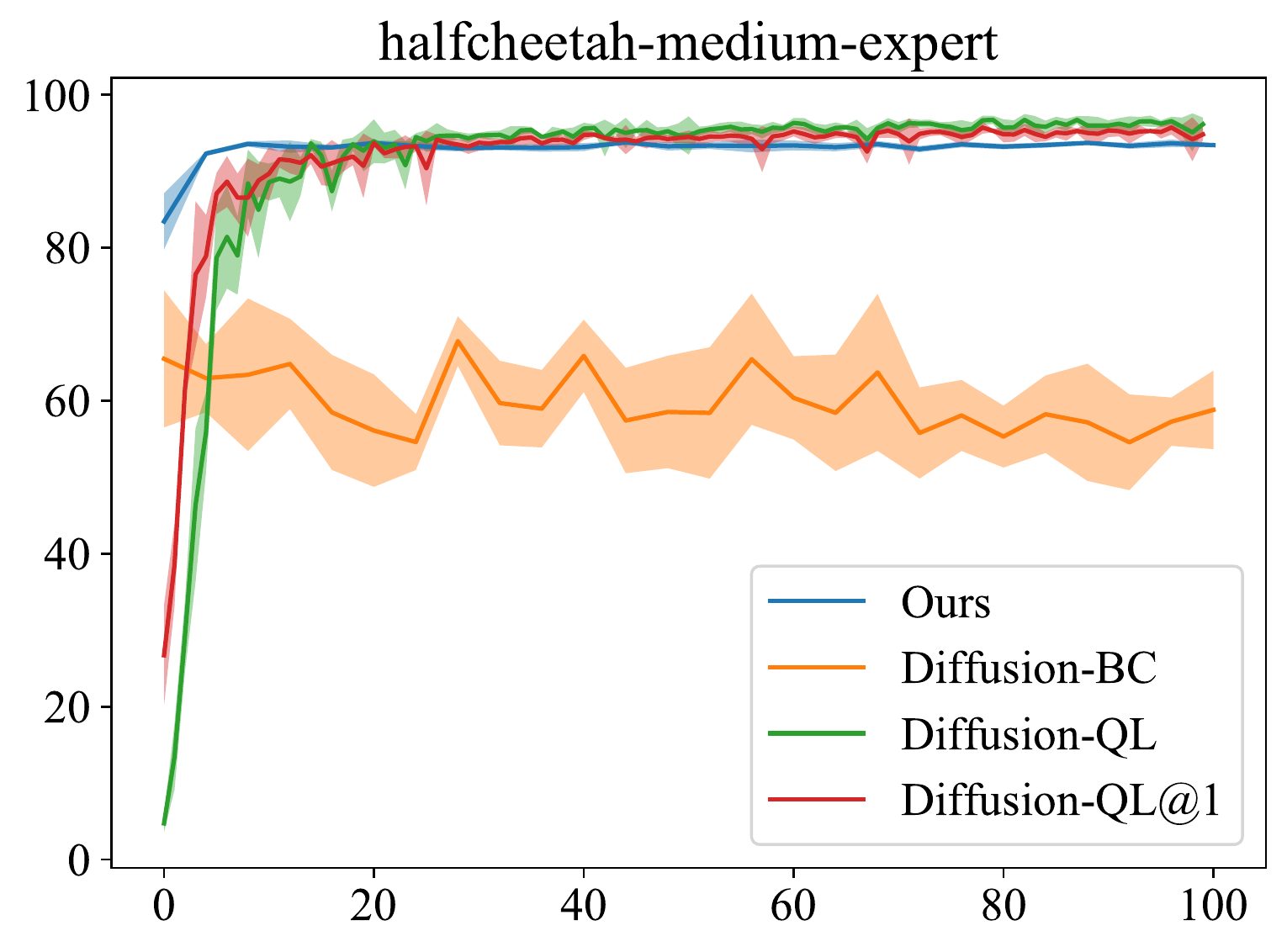}
\includegraphics[width = .33\linewidth]{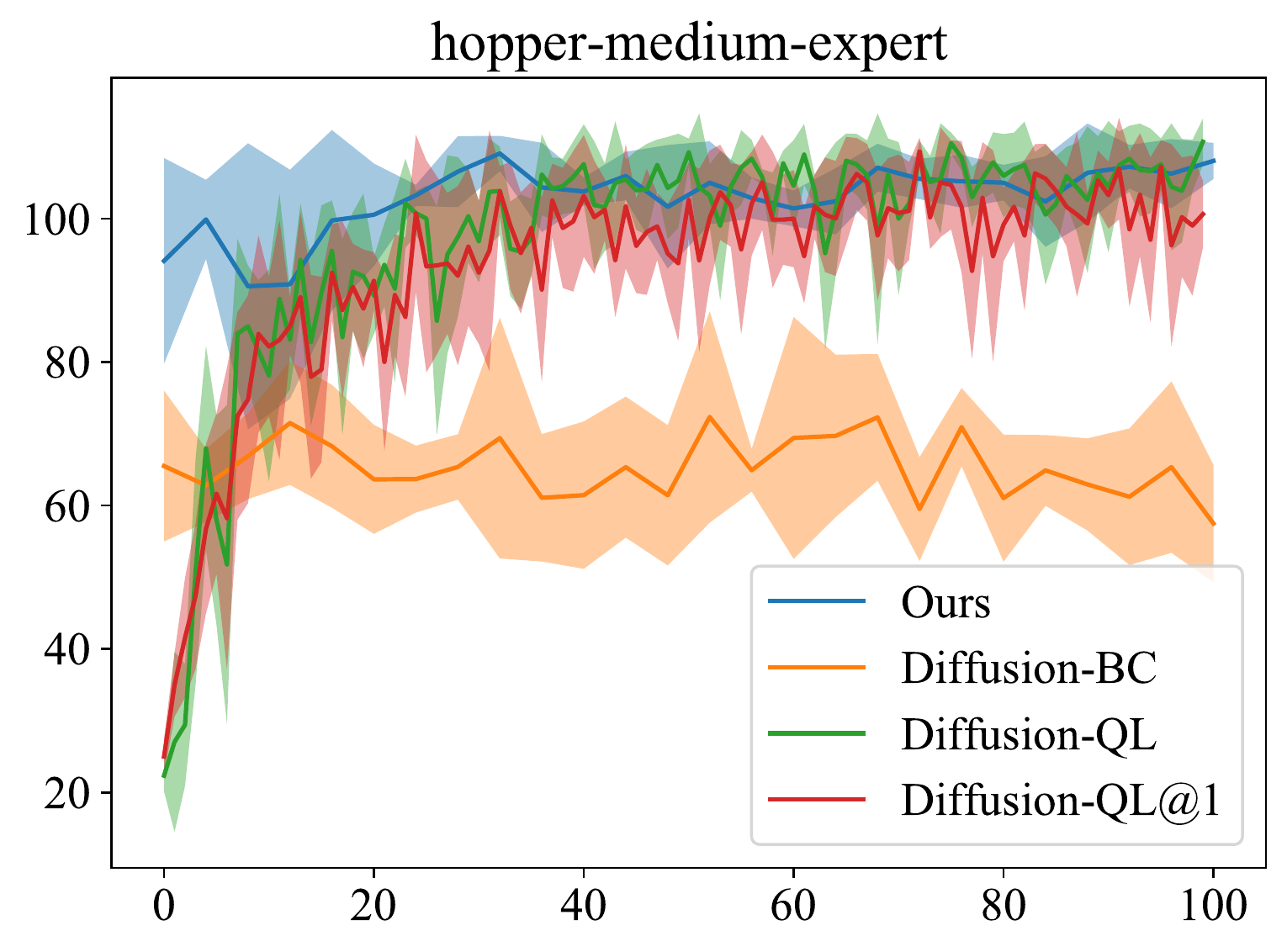}
\includegraphics[width = .33\linewidth]{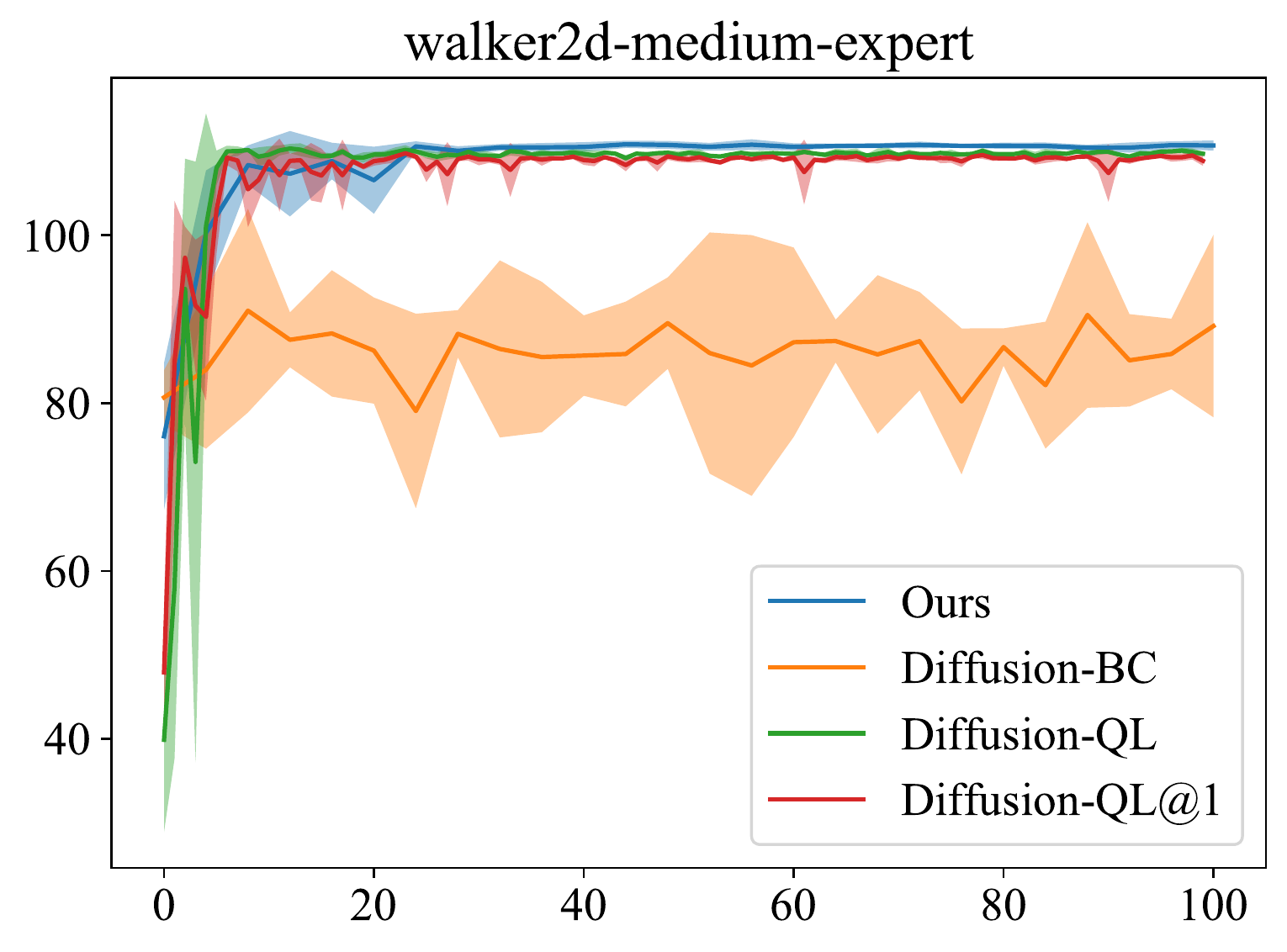}\\
\includegraphics[width = .33\linewidth]{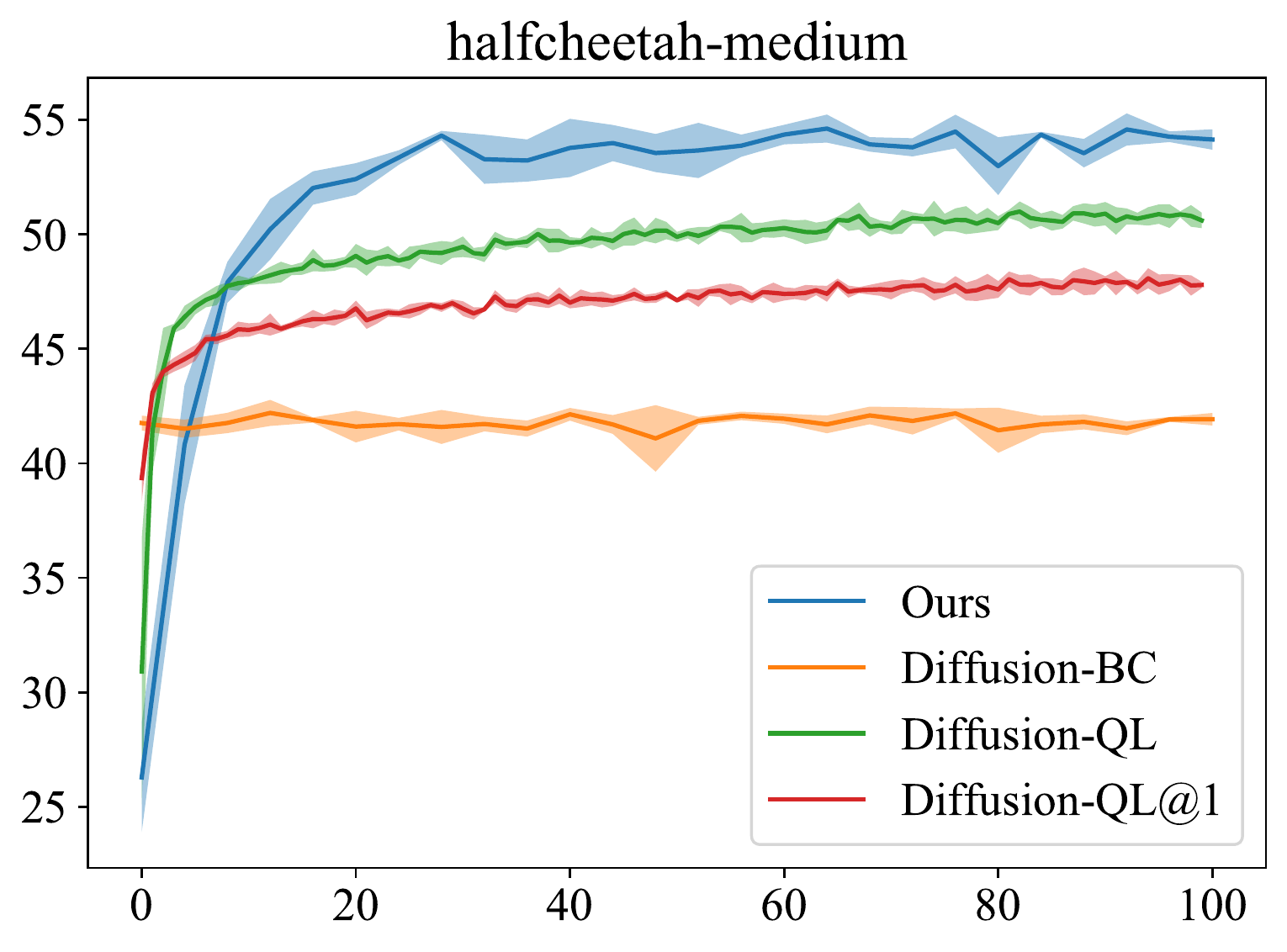}
\includegraphics[width = .33\linewidth]{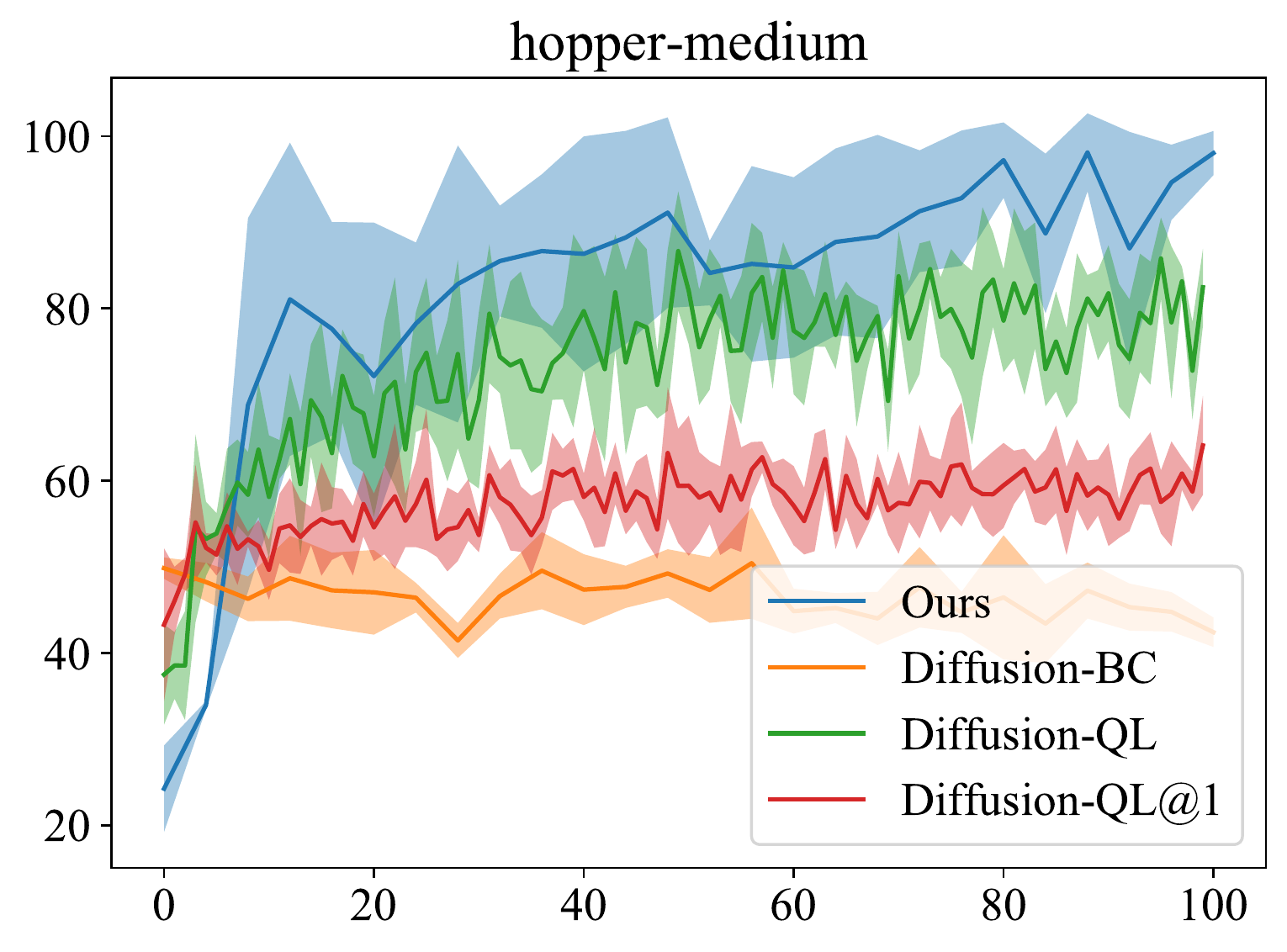}
\includegraphics[width = .33\linewidth]{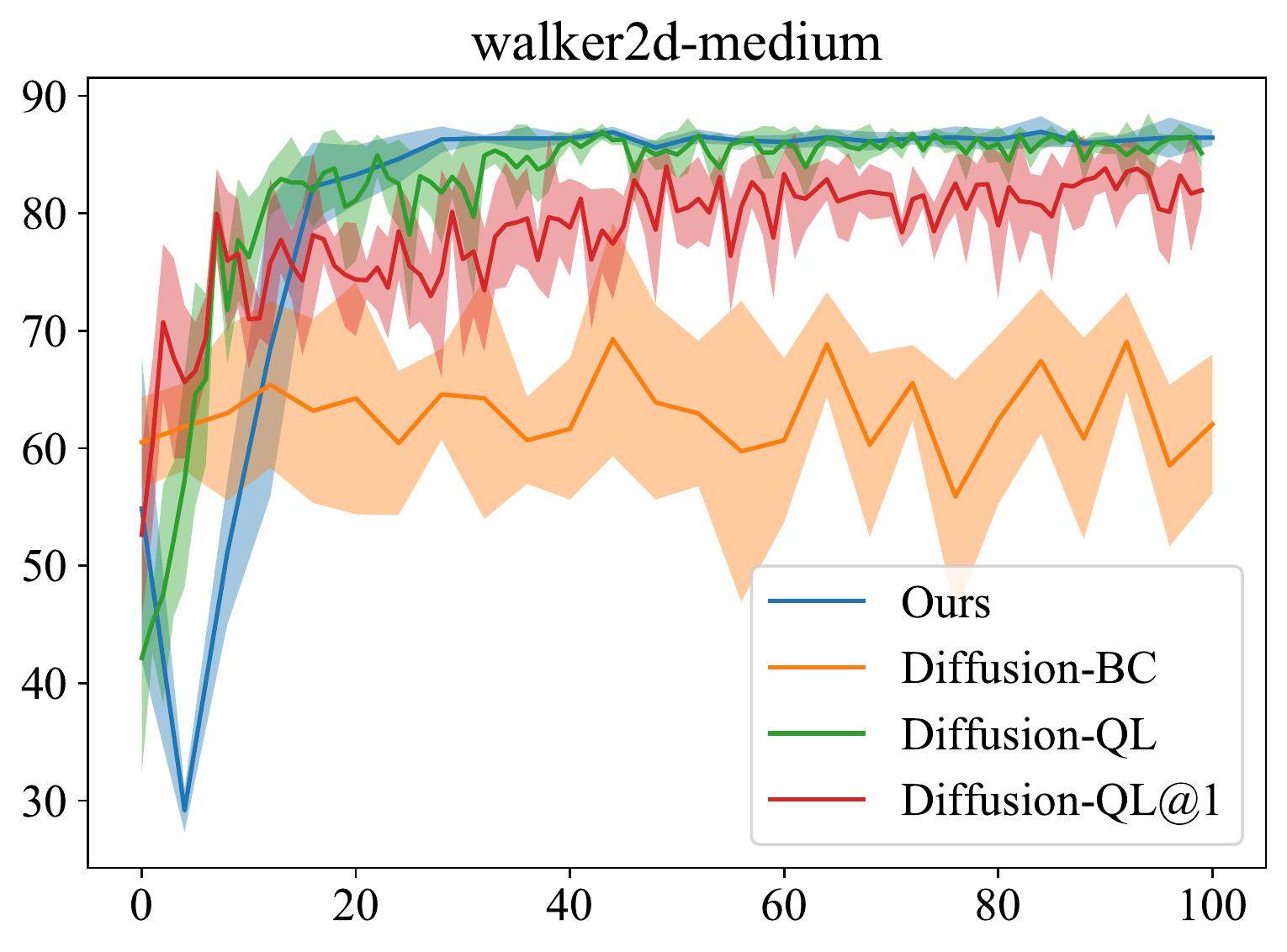}\\
\includegraphics[width = .33\linewidth]{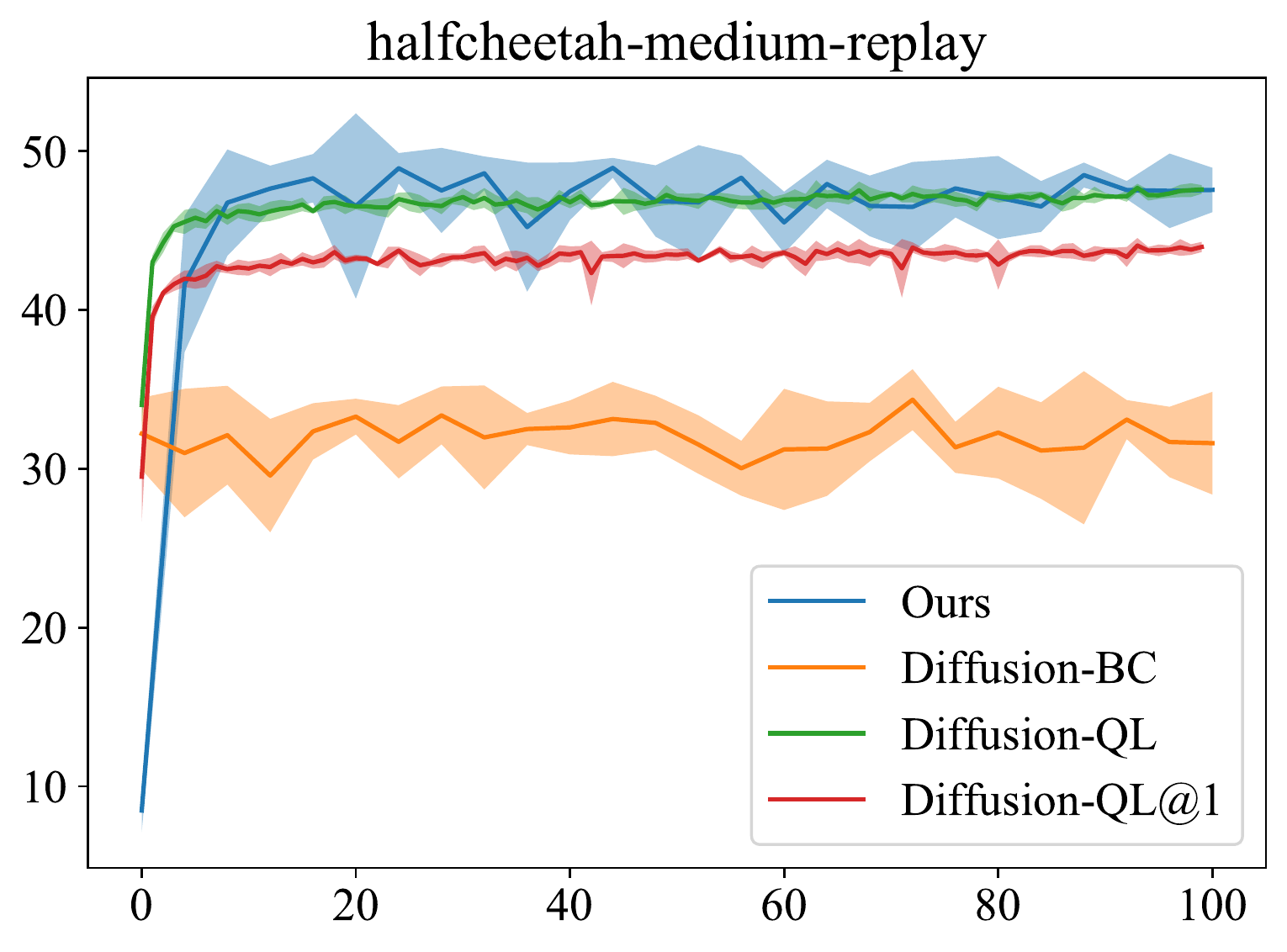}
\includegraphics[width = .33\linewidth]{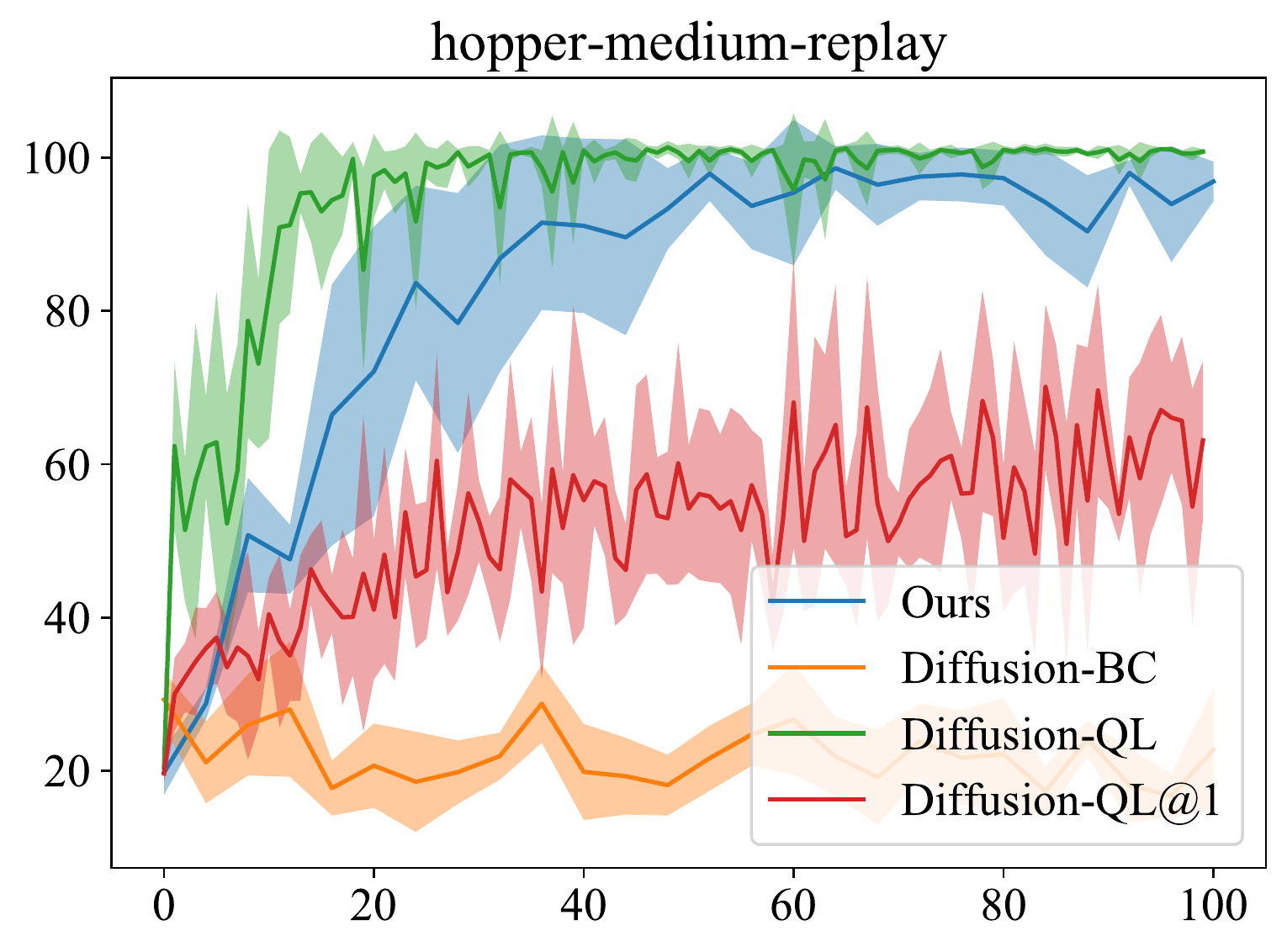}
\includegraphics[width = .33\linewidth]{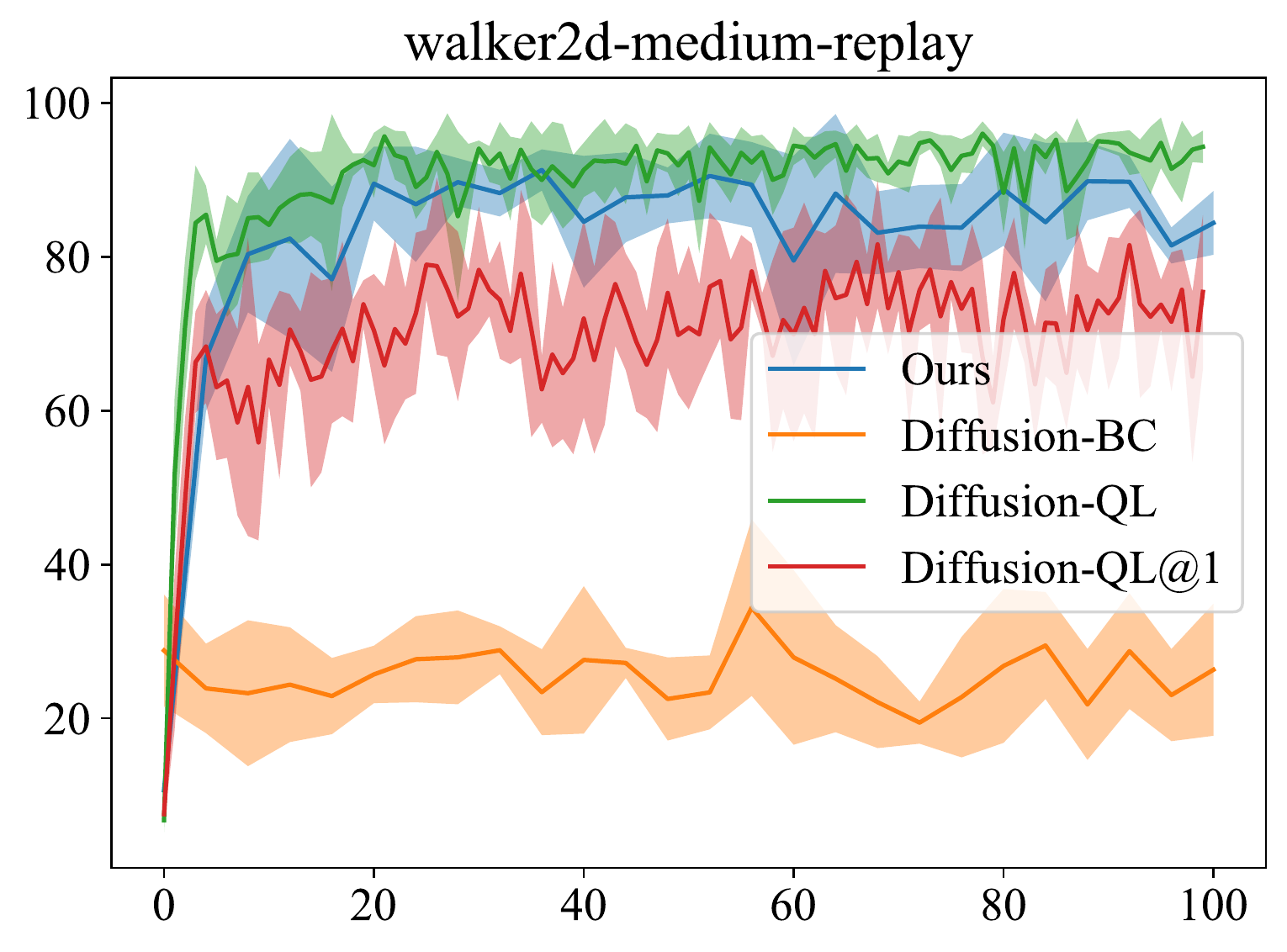}\\
\includegraphics[width = .33\linewidth]{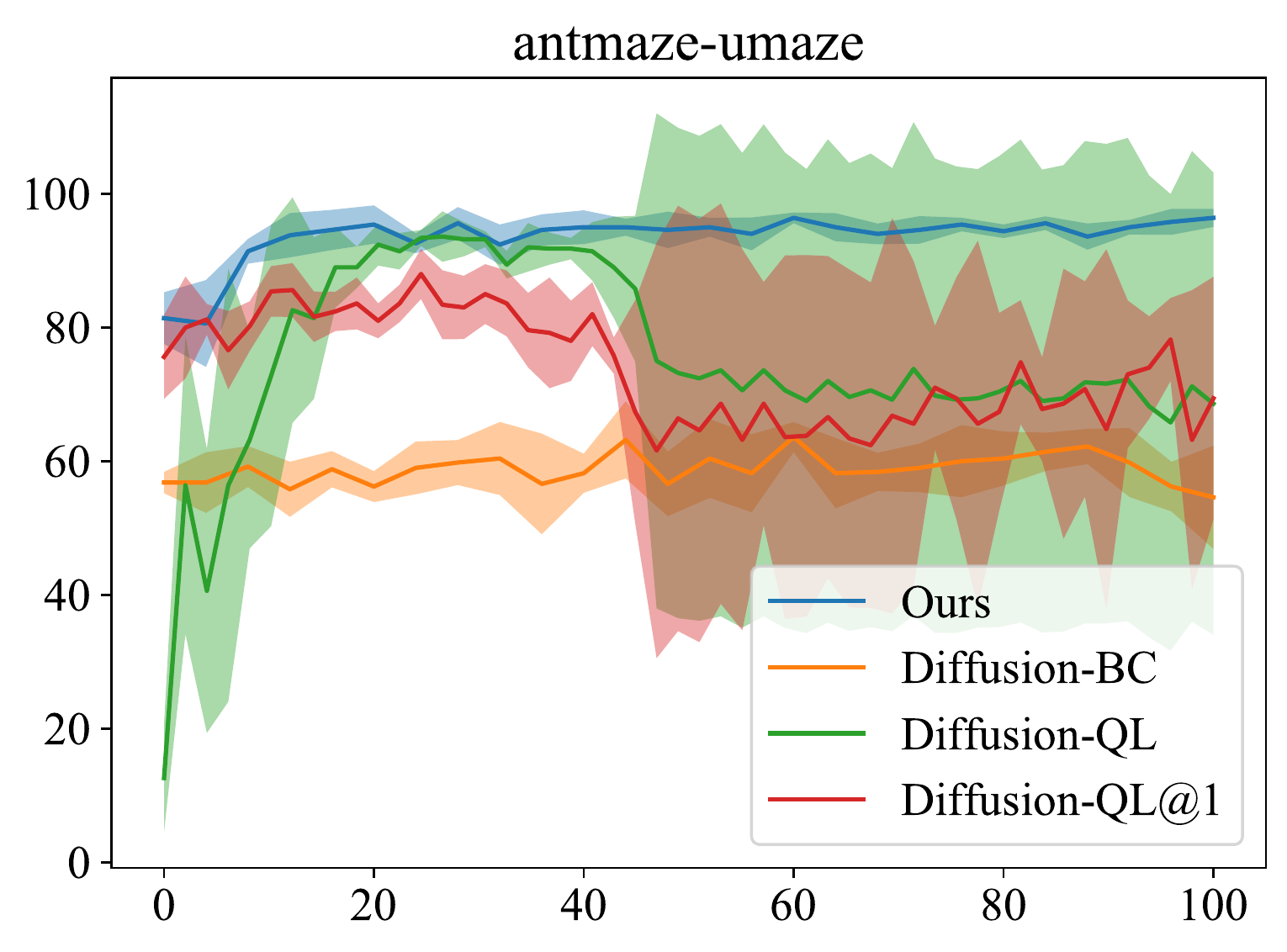}
\includegraphics[width = .33\linewidth]{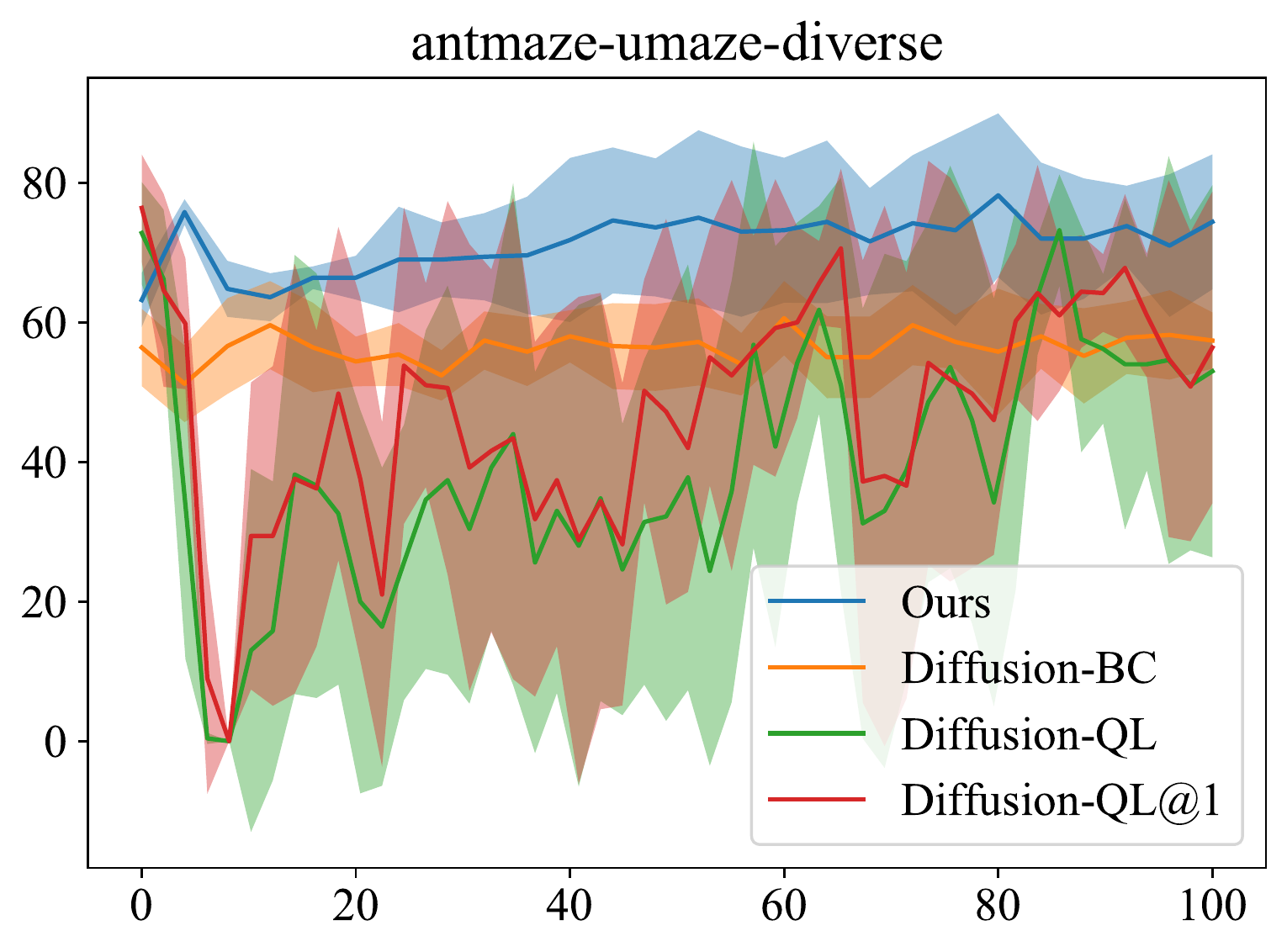}
\includegraphics[width = .33\linewidth]{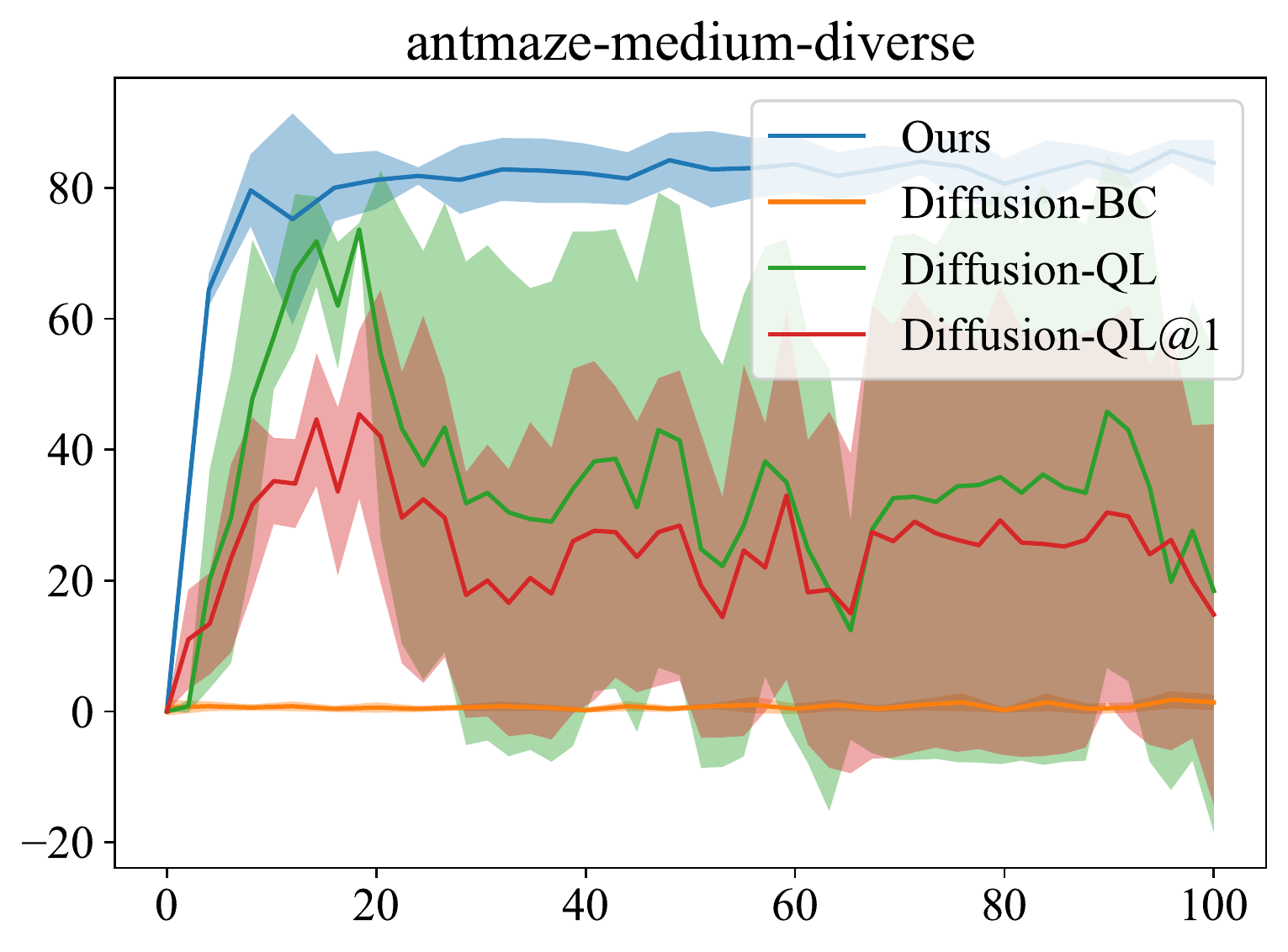}\\
\includegraphics[width = .33\linewidth]{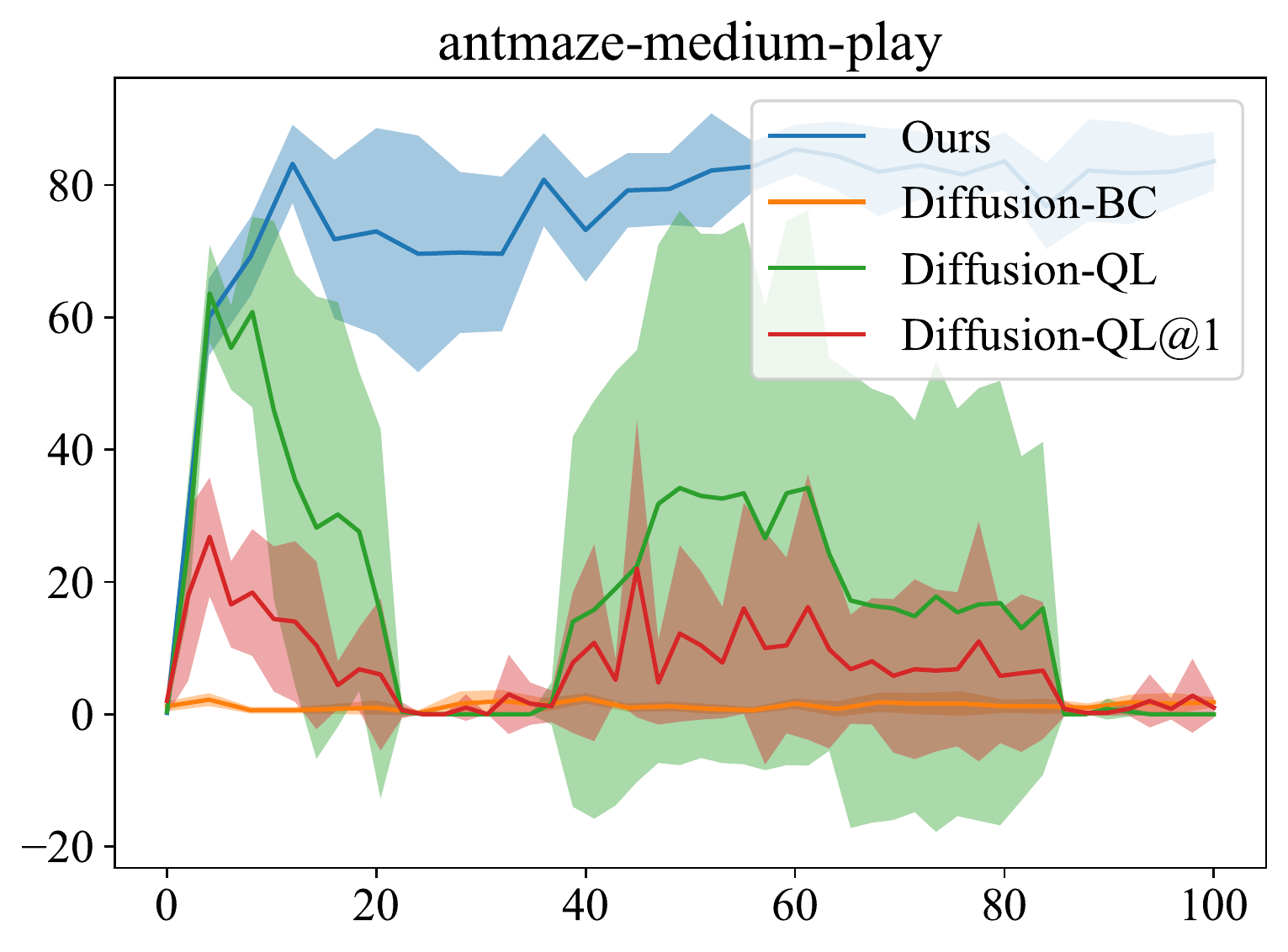}
\includegraphics[width = .33\linewidth]{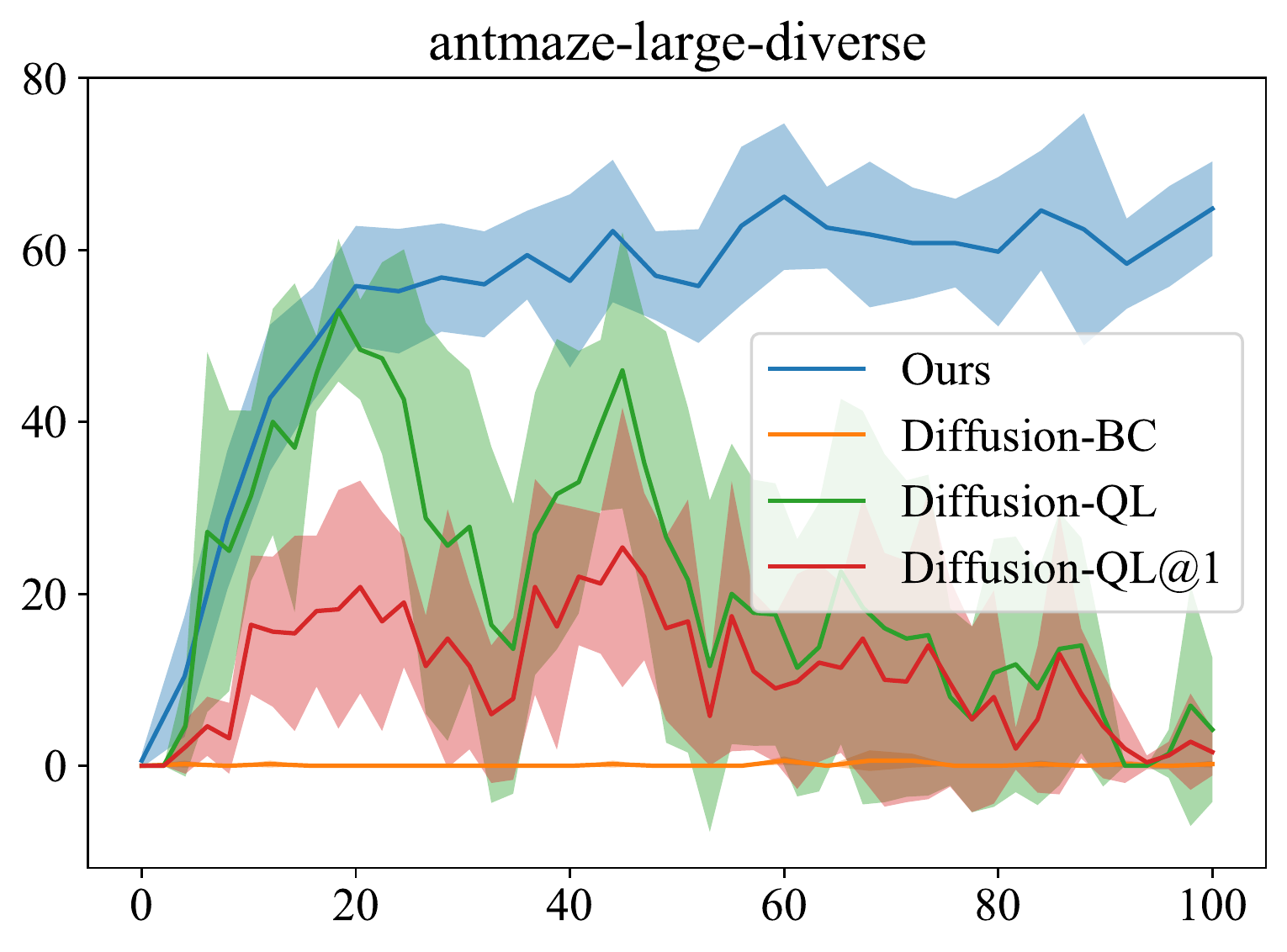}
\includegraphics[width = .33\linewidth]{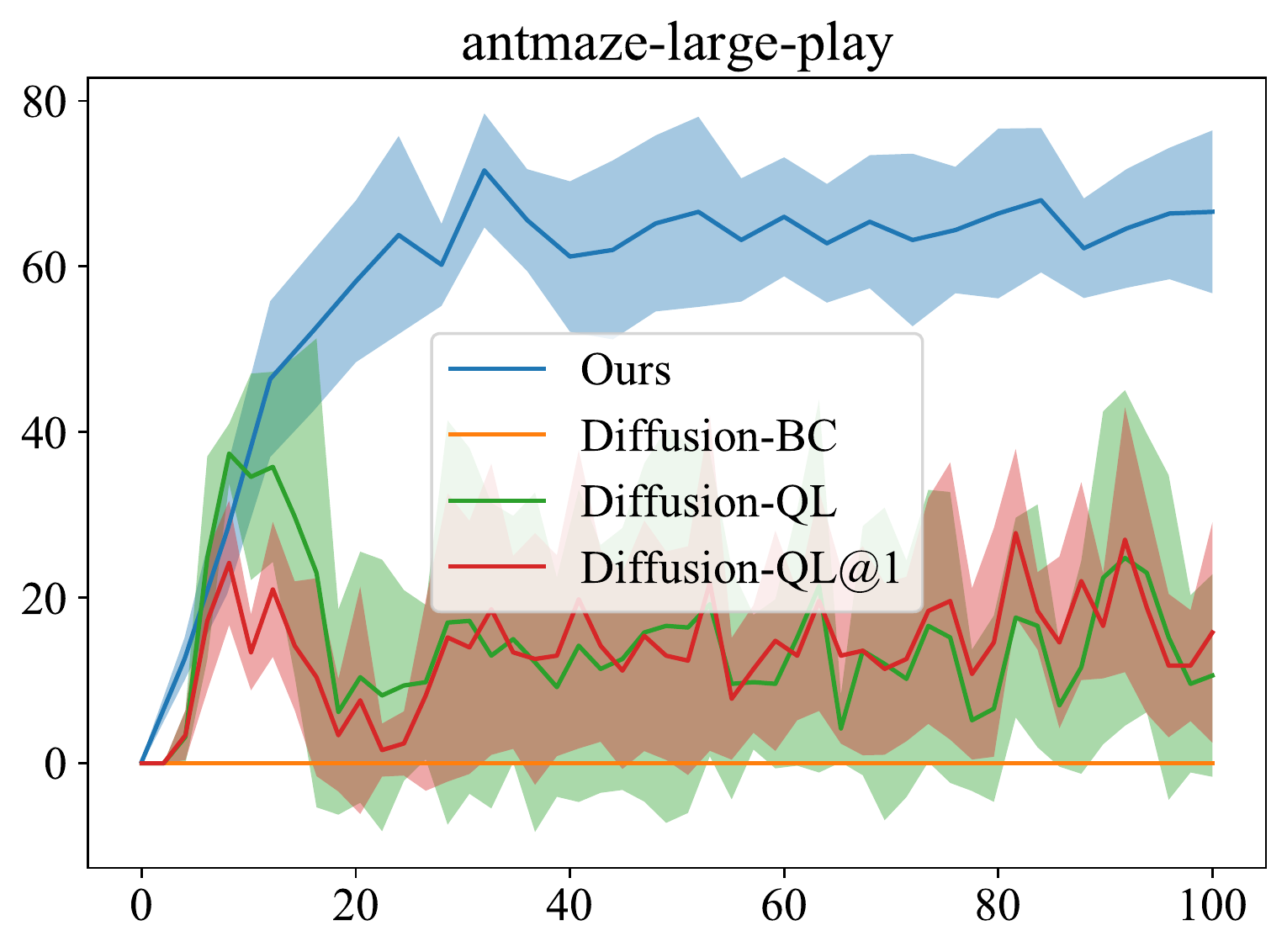}\\
\vspace{-2mm}
\caption{Training curves of QGPO (ours) and several baselines. We plot mean and standard deviation of results across five random seeds. Scores are normalized according to \cite{d4rl}. Diffusion BC indicates evaluation scores of the learned behavior policy without any guidance ($s=0$).}
\vspace{-4mm}
\label{fig:plots}
\end{figure}

\newpage
\section{More Results for Energy-Guided Image Synthesis}
\label{Sec:Scale_samples}
\begin{figure}[hbt!]
\centering
\includegraphics[width = 0.94\linewidth]{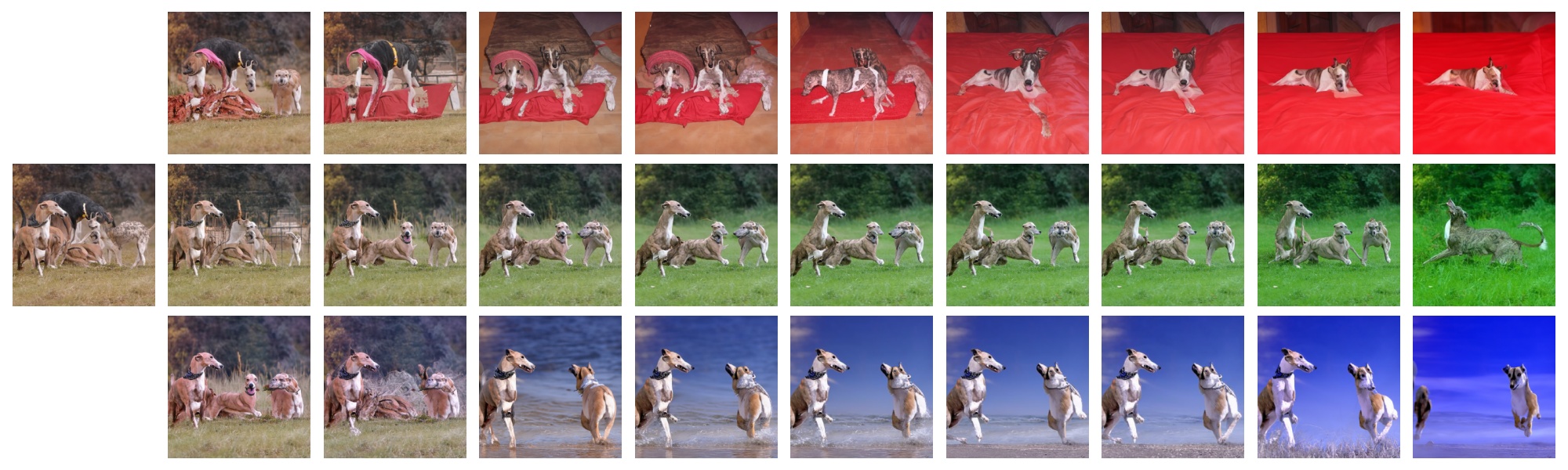}
\includegraphics[width = 0.94\linewidth]{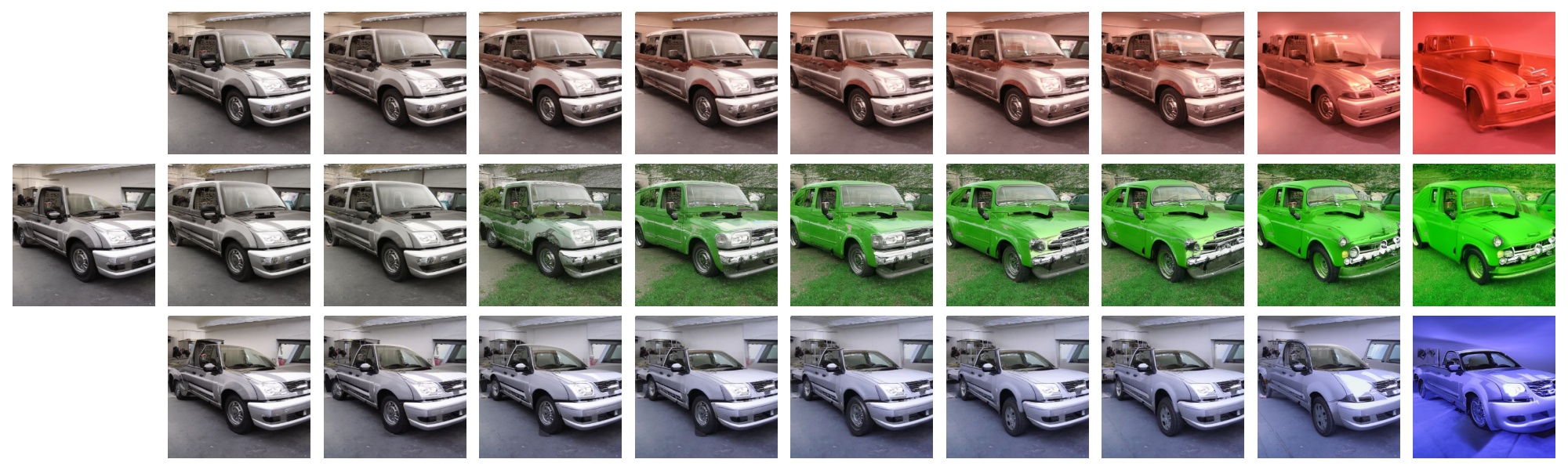}
\vspace{-3mm}
\caption{Ablation of color guidance with a conditional diffusion prior. From left to right are samples under an increasing guidance scale in [0.0, 0.25, 0.5, 1.0, 1.5, 2.0, 2.5, 3.0, 5.0, 10.0].}
\label{fig:colors_conditional}
\end{figure}
\begin{figure}[hbt!]
\centering
\includegraphics[width = 0.94\linewidth]{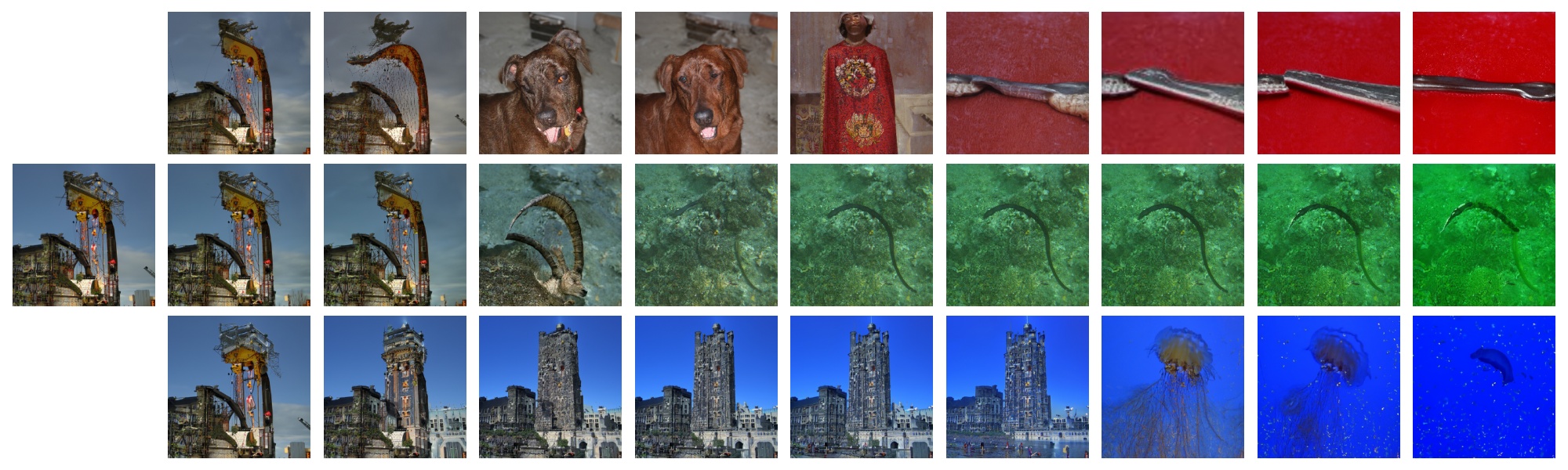}
\includegraphics[width = 0.94\linewidth]{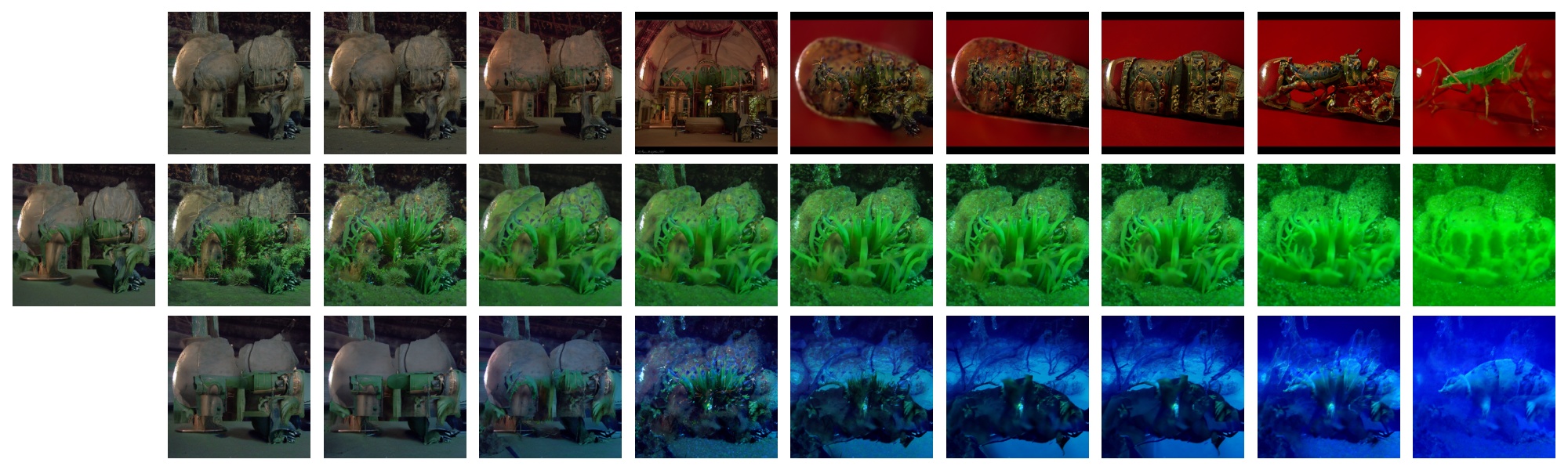}
\vspace{-3mm}
\caption{Ablation of color guidance with an unconditional diffusion prior. From left to right are samples under an increasing guidance scale in [0.0, 0.25, 0.5, 1.0, 1.5, 2.0, 2.5, 3.0, 5.0, 10.0].}
\label{fig:colors}
\end{figure}

\newpage
\section{CEP Guidance vs. Classifier Guidance}
\label{fig:image_guidance_qualitative}
\begin{figure}[!hbt]
\centering
\vspace{-2mm}
\begin{minipage}{0.48\linewidth}
\centering
\includegraphics[width =\linewidth]{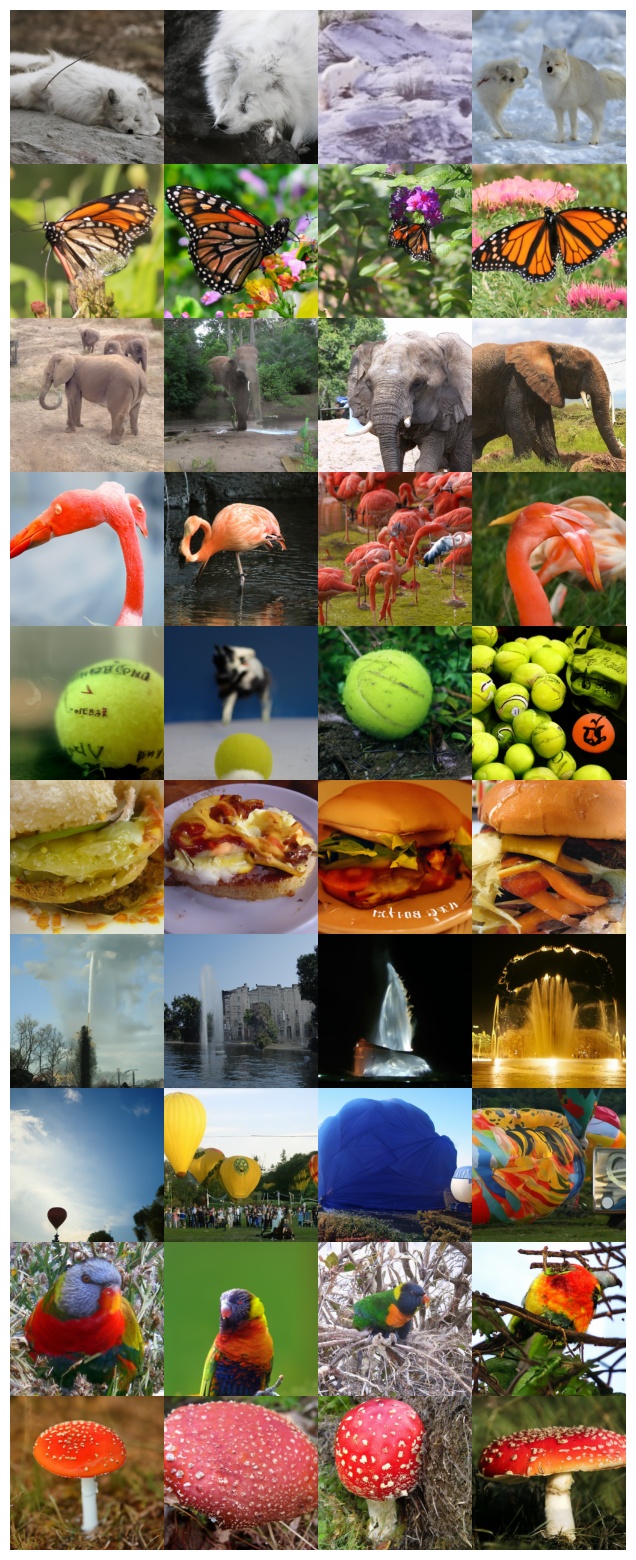}
Classifier guidance \citep{diffusion_beat_gan} \\
\end{minipage}
\begin{minipage}{0.48\linewidth}
\centering
\includegraphics[width =\linewidth]{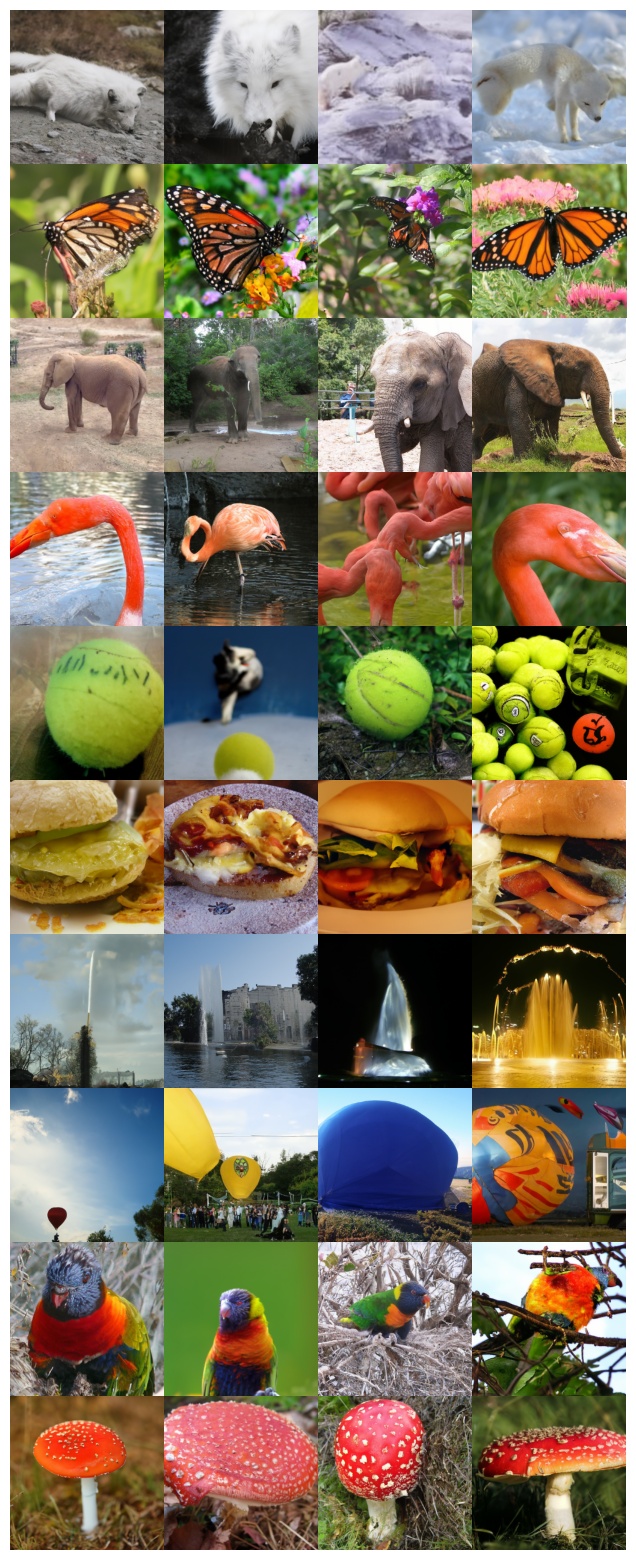}
Energy guidance (\textbf{ours}) \\
\end{minipage}
\caption{Samples from the conditional 256×256 ImageNet model with different guidance methods. Random seeds are fixed across experiments. Classes are 279: arctic fox, 323: monarch butterfly, 386: african elephant, 130: flamingo, 852: tennis ball, 933: cheeseburger, 562: fountain, 417: balloon, 90: lorikeet, 992: agaric.}
\vspace{-8mm}
\end{figure}

\end{document}